\documentclass[10pt]{article}
\usepackage{fullpage}
\usepackage[margin=1in]{geometry}

\usepackage{epsf}
\usepackage{fancyheadings}
\usepackage{graphics}
\usepackage{graphicx,subfigure}
\usepackage{psfrag}
\usepackage{bm}

\usepackage{color}

\usepackage{amsthm}
\usepackage{amsfonts}
\usepackage{amsmath}
\usepackage{amssymb}
\usepackage{mathrsfs}

\usepackage{accents}
\usepackage{listings}

\usepackage{caption}
%
\usepackage[linesnumbered, ruled, vlined]{algorithm2e}





\theoremstyle{plain}

\newtheorem{theorem}{Theorem}

\newtheorem{proposition}{Proposition}
\newtheorem{lemma}{Lemma}
\newtheorem{corollary}{Corollary}

\newtheorem{definition}{Definition}
\newtheorem{example}{Example}
\newtheorem{fact}{Fact}
\newtheorem{assumption}{Assumption}

\newlength{\widebarargwidth}
\newlength{\widebarargheight}
\newlength{\widebarargdepth}

\makeatletter
\long\def\@makecaption#1#2{
        \vskip 0.8ex
        \setbox\@tempboxa\hbox{\small {\bf #1:} #2}
        \parindent 1.5em  
        \dimen0=\hsize
        \advance\dimen0 by -3em
        \ifdim \wd\@tempboxa >\dimen0
                \hbox to \hsize{
                        \parindent 0em
                        \hfil 
                        \parbox{\dimen0}{\def\baselinestretch{0.96}\small
                                {\bf #1.} #2
                                } 
                        \hfil}
        \else \hbox to \hsize{\hfil \box\@tempboxa \hfil}
        \fi
        }
\makeatother


\long\def\comment#1{}


\newcommand{\vecnorm}[2]{\| #1\|_{#2}}


\newcommand{\inprod}[2]{\ensuremath{\langle #1 , \, #2 \rangle}}








\DeclareMathOperator*{\argmin}{argmin}

\newcommand{\NORMAL}{\ensuremath{\mathcal{N}}}

\newcommand{\UNIF}{\ensuremath{\text{Unif}}}

\newcommand{\vvec}{\ensuremath{\mathbf{v}}}
\newcommand{\Xvec}{\ensuremath{\mathbf{X}}}

\newcommand{\EE}{\ensuremath{\mathbb{E}}}

\newcommand{\alphabold}{\ensuremath{\bm{\alpha}}}
\newcommand{\betabold}{\ensuremath{\bm{\beta}}}

\newcommand{\Deltabold}{\ensuremath{\bm{\Delta}}}
\newcommand{\alphastar}{\ensuremath{\bm{\alpha}^*}}

\newcommand{\Sigmabold}{\ensuremath{\bm{\Sigma}}}



\newcommand{\Deltahat}{\ensuremath{\widehat{\bm{\Delta}}}}
\newcommand{\alphahat}{\ensuremath{\widehat{\bm{\alpha}}}}
\newcommand{\betahat}{\ensuremath{\widehat{\bm{\beta}}}}


\newcommand{\testerr}{\ensuremath{\mathcal{E}_{\mathsf{test}}}}


\newcommand{\Oh}{\ensuremath{\mathcal{O}}}

\newcommand{\Ind}{\ensuremath{\mathbb{I}}}
\newcommand{\reals}{\ensuremath{\mathbb{R}}}
\newcommand{\cone}{\ensuremath{\mathbb{C}}}



\newcommand{\quadfigureexterior}[3]{
\begin{figure}[htbp]
 #3
 \caption{#2}
 \label{#1}
\end{figure}
}

\newcommand{\multifigureexterior}[3]{\quadfigureexterior{#1}{#2}{#3}}

\newcommand{\subfigl}[4]{\subfigure[#3]{\includegraphics[width=#1]{#4} \label{#2}}}
\newcommand{\subfig}[3]{\subfigl{#1}{}{#2}{#3}}

\newcommand{\supp}{\mathsf{supp}}
\newcommand{\Atrain}{\mathbf{A}_{\mathsf{train}}}
\newcommand{\Btrain}{\mathbf{B}_{\mathsf{train}}}

\newcommand{\Ytrain}{\mathbf{Y}_{\mathsf{train}}}
\newcommand{\Wtrain}{\mathbf{W}_{\mathsf{train}}}
\newcommand{\bvec}{\mathbf{b}}
\newcommand{\avec}{\mathbf{a}}
\newcommand{\Bmat}{\mathbf{B}}
\addtolength{\topmargin}{9mm}

%
%
\usepackage[utf8]{inputenc} 
\usepackage[T1]{fontenc}
\usepackage{url}
\usepackage{ifthen}
\usepackage{cite}
\usepackage{amsmath} 


\interdisplaylinepenalty=2500 

\hyphenation{op-tical net-works semi-conduc-tor}

\begin{document}

\title{Harmless interpolation of noisy data in regression} 


\author{
    Vidya Muthukumar, Kailas Vodrahalli, Vignesh Subramanian and Anant Sahai \\
    BLISS \& ML4Wireless, EECS, UC Berkeley\\
    \{vidya.muthukumar, kailasv, vignesh.subramanian, asahai\}@berkeley.edu 
}


\maketitle



\begin{abstract}
A continuing mystery in understanding the empirical success of deep neural networks is their ability to achieve zero training error and generalize well, even when the training data is noisy and there are more parameters than data points.  
We investigate this overparameterized regime in linear regression, where all solutions that minimize training error interpolate the data, including noise.
We characterize the fundamental generalization (mean-squared) error of any interpolating solution in the presence of noise, and show that this error decays to zero with the number of features.
Thus, overparameterization can be explicitly beneficial in ensuring harmless interpolation of noise.
We discuss two root causes for poor generalization that are complementary in nature -- signal ``bleeding" into a large number of alias features, and overfitting of noise by parsimonious feature selectors.
For the sparse linear model with noise, we provide a hybrid interpolating scheme that mitigates both these issues and achieves order-optimal MSE over all possible interpolating solutions.
\end{abstract}


\section{Introduction}

In statistical machine learning, we have high-dimensional data in the form of $n$ covariate-response pairs $(\Xvec_i \in \reals^d, Y_i)_{i=1}^n$.
When training parametric models for functions fitting covariate to response, the traditional wisdom~\cite{friedman2001elements} is to select function classes with a number of parameters $d < n$.
In classification (when the labels are discrete), the scaling of the test error with respect to $n$ is determined by the VC-dimension~\cite{vapnik1999overview}/Rademacher complexity~\cite{bartlett2005local} of the function class, which in the worst case increases with the number of parameters $d$.
In regression (when the labels are continuous), the mean-squared error (MSE) of the least-squares estimator scales as the condition number of the data matrix, which is reasonable for smaller ratios of $d/n$ but increases astronomically as $d$ approaches $n$.
The qualitative fear is the same: if the function class is too complex, it starts to overfit noise and generalizes poorly to unseen test data. 

This wisdom has been challenged by the recent advent of deeper and deeper neural networks.
In particular, a thought-provoking paper~\cite{zhang2016understanding} noted that several deep neural networks generalize well despite achieving zero or close to zero training error, and being so expressive that they even have the ability to fit pure noise.
As they put it, ``understanding deep learning requires rethinking generalization". 
How can we reconcile the fact that good interpolating solutions exist with the classical bias-variance tradeoff?

These phenomena are being actively investigated for neural networks and kernel methods.
A natural \textit{simpler} setting to investigate is overparameterized \textit{linear regression} on noisy data.
The overparameterized (high-dimensional) linear model has the advantage of being classically studied under estimation procedures that incorporate some kind of regularization\footnote{The most commonly known are Tikhonov (ridge) regularization and $\ell_1$-regularization, but we refer the interested reader to Miller's book~\cite{miller2002subset} for a comprehensive list.}.
Such procedures typically avoid fitting noise, thus incurring some non-zero training error.
We are now interested in whether we can achieve any success with solutions that interpolate the training data.
In particular, {\bf can we ever interpolate the noise and retain favorable generalization guarantees? If so, which interpolators should we use, and which should we avoid?}

In this paper, we provide constructive answers to the above questions for overparameterized linear regression using elementary machinery.
Our contributions are as follows:
\begin{enumerate}
    \item We give a fundamental limit (Theorem~\ref{thm:idealinterpolator}, Corollaries~\ref{cor:fundamentalprice} and~\ref{cor:crabpot}) for the excess MSE of \textit{any} interpolating solution in the presence of noise, and show that it converges to $0$ as the number of features goes to infinity for feature families satisfying mild conditions.
    \item We provide a Fourier-theoretic interpretation of concurrent analyses~\cite{hastie2019surprises,belkin2019two,bartlett2019benign,bibas2019new,mitra2019understanding} of the minimum $\ell_2$-norm interpolator.
    \item We show (Theorem~\ref{thm:parsimoniousnoisefit}) that parsimonious interpolators (like the $\ell_1$-minimizing interpolator and its relatives) suffer the complementary problem of \textit{overfitting pure noise}.
    \item We construct two-step hybrid interpolators that successfully recover signal and harmlessly fit noise, achieving the order-optimal rate of test MSE among all interpolators (Proposition~\ref{prop:sparseinterpolators} and all its corollaries).
\end{enumerate}

\subsection{Related work}\label{sec:relatedwork}

We discuss prior work in three categories: a) overparameterization in \textit{deep} neural networks, b) \textit{interpolation} of high-dimensional data using kernels, and c) high-dimensional linear regression.
We then recap work on overparameterized linear regression that is concurrent to ours.

\subsubsection{Recent interest in overparameterization}

Conventional statistical wisdom is that using more parameters in one's model than data points leads to poor generalization.
This wisdom is corroborated in theory by worst-case generalization bounds on such overparameterized models following from VC-theory in classification~\cite{vapnik1999overview} and ill-conditioning in least-squares regression~\cite{miller2002subset}.
It is, however, contradicted in practice by the notable recent trend of empirically successful \textit{overparameterized} deep neural networks.
For example, the commonly used CIFAR-$10$ dataset contains $60000$ images, but the number of parameters in all the neural networks achieving state-of-the-art performance on CIFAR-$10$ is at least $1.5$ million~\cite{zhang2016understanding}.
These neural networks have the ability to memorize pure noise -- somehow, they are still able to generalize well when trained with meaningful data.

Since the publication of this observation~\cite{neyshabur2014search,zhang2016understanding}, the machine learning community has seen a flurry of activity to attempt to explain this phenomenon, both for classification and regression problems, in neural networks.
The problem is challenging for three core reasons\footnote{This exposition is inspired by Suriya Gunasekar's presentation at the Simons Institute, Summer $2019$.}:
\begin{enumerate}
\item The optimization landscape for loss functions on neural networks is notoriously non-convex and complicated, and even proving convergence guarantees to \textit{some} global minimum, as is observed in practice in the overparameterized regime~\cite{neyshabur2014search,zhang2016understanding}, is challenging.
\item In the overparameterized regime, there are multiple global minima corresponding to a fixed neural network architecture and loss function -- which of these minima the optimization algorithm selects is not always clear.
\item Tight generalization bounds for the global minimum that is selected need to be obtained to show that overparameterization can help with generalization.
This is particularly non-trivial to establish for \textit{deep}, i.e. $\geq 3$-layer neural networks.
\end{enumerate}

Promising progress has been made in all of these areas, which we recap only briefly below.
Regarding the first point, while the optimization landscape for deep neural networks is non-convex and complicated, several independent recent works (an incomplete list is~\cite{allen2019convergence,allen2018learning,azizan2019stochastic,chizat2018global,du2018gradient,mei2018mean,soltanolkotabi2018theoretical}) have shown that overparameterization can make it more attractive, in the sense that optimization algorithms like stochastic gradient descent (SGD) are more likely to actually converge to a global minimum.
These interesting insights are mostly unrelated to the question of generalization, and should be viewed as a coincidental benefit of overparameterization.

Second, a line of recent work~\cite{soudry2018implicit,gunasekar2018characterizing,nacson2019convergence,woodworth2019kernel} characterizes the \textit{inductive biases} of commonly used optimization algorithms, thus providing insight into the identity of the global minimum that is selected.
For the \textit{logistic loss} of linear predictors on separable data, the global optimum that is found by SGD in overparameterized settings has been shown to be precisely the margin-maximizing support vector machine (SVM)~\cite{soudry2018implicit}, and extensions have been obtained for deep \textit{linear} networks~\cite{nacson2019convergence} and stochastic mirror descent~\cite{gunasekar2018characterizing}.
More pertinent to linear regression, the behavior of SGD on the quadratic loss function~\cite{woodworth2019kernel} depends on the initialization point: the global convergence guarantee smoothly interpolates between a ``kernel" regime (corresponding to the minimum $\ell_2$-norm interpolator) and a ``structured" regime (corresponding to the minimum $\ell_1$-norm interpolator).

Finally, simple theoretical insights into which solutions (global minima) generalize well under what conditions, if any, remain elusive.
The generalizing ability of solutions can vary for different problem instances: adaptive methods\footnote{for which, interestingly, the induced inductive bias is unknown.} in optimization need not always improve generalization for specially constructed examples in overparameterized linear regression~\cite{wilson2017marginal}.
On the other hand, on a different set of examples~\cite{shah2018minimum}, adaptive methods converge to better-generalizing solutions than SGD.
Evidence suggests that norm-based complexity measures predict generalizing ability~\cite{neyshabur2014search}, and for neural networks, such complexity measures have been developed that do not depend on the width, but can depend on the depth~\cite{neyshabur2015norm,bartlett2017spectrally,golowich2017size,neyshabur2017exploring}.
A classification-centric explanation for the possibility of overparameterization improving generalization is that SGD with logistic-style loss converges to the solution that maximizes training data margin.
Then, increasing overparameterization increases model flexibility, allowing for solutions that increase the margin.
This is a classical observation for AdaBoost~\cite{schapire1998boosting} and is recently given as a justification for using overparameterization in neural networks~\cite{wei2018margin}.
However, margin does not always imply generalization~\cite{shah2018minimum}.
An alternative, intriguing explanation~\cite{wyner2017explaining} for this phenomenon does not consider margin, but instead connects the AdaBoost procedure to random forests, i.e. ensembles of \textit{randomly initialized} decision trees, each of which interpolate the training data.
An averaging and localizing-of-noise effect that is shown to be present both in the random forests ensemble, and the interpolating ensemble of AdaBoost, results in good generalization.

\subsubsection{Kernels for \textit{interpolation} of data}

In the overparameterized regime, solutions that minimize classification/regression loss \textit{interpolate} the training data.
Another class of functions that have the ability to interpolate the training data are not explicitly overparameterized -- they are non-parametric functions corresponding to particular reproducing kernel Hilbert spaces (RKHS).
In fact, Belkin, Ma and Mandal~\cite{belkin2018understand} empirically recovered several of the overparameterization phenomena of deep learning in kernel classifiers that interpolate\footnote{In the paper, a subtle distinction is made between overfitting, which corresponds to close to zero classification loss, and interpolation, which corresponds to close to zero squared loss.} the training data.
They observed that these solutions generalize well, even in the presence of \textit{label noise}; and moreover, regularization (either through explicit norm control or early stopping of SGD) yields only a marginal improvement\footnote{This was also observed by~\cite{neyshabur2014search} with overparameterized neural networks.}.
Finally, they showed that the minimum (RKHS) norm of such interpolators in the presence of label noise cannot explain these properties; and generalizing ability likely depends on specific structure of the kernel.
Other interpolators (e.g. based on local methods) have subsequently been analyzed for specific kernels~\cite{belkin2018overfitting,belkin2019does}, but most relevant to the setting of regression is recent analysis of the test MSE of the minimum-RKHS-norm kernel interpolator~\cite{liang2018just}.
It was shown here that this could be controlled under appropriate conditions on the eigenvalues of the kernel as well as the data matrix, and critical to these results is the dimension of the data growing with the number of samples~\cite{rakhlin2018consistency}.
Implicitly, all the analyses require successful interpolators to have a delicate balance between \textit{preserving} the structure of the true function explaining the (noiseless) data and \textit{minimizing} the harmful effect of regression noise in the data: thus, the properties of the chosen kernel are key.
We will see that this tradeoff manifests very explicitly and clearly in high-dimensional linear regression.

\subsubsection{High-dimensional linear regression}

Most recently, a \textit{double-descent} curve on the test error ($0-1$ loss and MSE) as a function of the number of parameters of several parametric models was observed on several common datasets by physicists~\cite{geiger2018jamming} and machine learning researchers~\cite{belkin2019reconciling} respectively.
In these experiments, the minimum $\ell_2$-norm interpolating solution is used, and several feature families, including kernel approximators~\cite{rahimi2008random}, were considered.
The experiments showed that the effect of increased parameterization on the test MSE is nuanced: at $d \sim n$, fitting noise has an adverse effect on the test MSE; but as the number of features $d$ becomes many times more than the number of data points $n$, the test MSE decays and approaches the test MSE of a corresponding kernel interpolator of minimum RKHS norm.

When the number of features is greater than the number of training data points, this corresponds to overparameterized linear regression (on lifted features).
This is more commonly known as high-dimensional, or underdetermined linear regression.
The classical underdetermined analysis~\cite{miller2002subset} tells us that the data matrix could be ill-conditioned, especially at $d \sim n$: thus, underdetermined linear regression is usually carried out with some form of regularization.
One of the earliest regularization procedures is Tikhonov ($\ell_2$) regularization, which can be used to improve the conditioning of the data matrix - elegant analysis of this for a random design, but in the overdetermined regime, is considered in~\cite{hsu2012random}.
In high dimensions, $\ell_2$-regularization unfortunately leads to a generic ``bleeding" across features that causes a loss of the signal; this classical observation, and exceptions to this observation, are discussed in detail in Section~\ref{sec:l2}.
In fact, signal recovery in the high-dimensional regime, even in the absence of noise, is only possible when there is sparsity in the coefficients\footnote{This is precisely shown by information-theoretic lower bounds~\cite{wainwright2009information,aeron2010information} and justifies the principle to ``bet on sparsity"~\cite{friedman2001elements}.}.
The literature on sparse signal recovery is rich, with algorithms that are motivated by iterative perspectives from signal processing (orthogonal matching pursuit (OMP)~\cite{pati1993orthogonal} and stagewise OMP~\cite{donoho2012sparse}) and convex relaxation (basis pursuit/Lasso~\cite{chen2001atomic}).
Unlike $\ell_2$-regularized solutions, which tend to spread energy out across multiple features, these procedures are parsimonious and select features in a data-dependent manner.
Traditionally, these solutions are applied on denoised data -- or, equivalently, with regularization\footnote{To be precise, they are analyzed for sufficiently large values of the regularizer $\lambda$. 
An analysis of the equivalent interpolators corresponds to an analysis tending $\lambda \to 0$.
} to explicitly \textit{avoid} fitting noise.
We investigate the performance of solutions that interpolate both signal and noise in the high-dimensional, sparse linear model in Section~\ref{sec:sparse}.

\subsubsection{Concurrent work in high-dimensional linear regression}

Since the publication\footnote{In fact, as pointed out in~\cite{bartlett2019benign}, the initial discussions around this area began at the Simons Institute $2017$ program on foundations of machine learning.} of the double descent experiments~\cite{geiger2018jamming,belkin2019reconciling} at the end of the year $2018$, there has been extremely active interest in understanding the generalization abilities of interpolating solutions in linear regression.
An earlier edition of our work was presented at Information Theory and Applications, February $2019$ and subsequently accepted to IEEE International Symposium on Information Theory, July $2019$.
Several elegant and interesting papers~\cite{hastie2019surprises,bartlett2019benign,belkin2019two,bibas2019new, mitra2019understanding,mei2019generalization} have appeared around this time.
All of these center around the analysis of the $\ell_2$-minimizing interpolator.
We discuss~\cite{hastie2019surprises,bartlett2019benign,belkin2019two} in substantial detail in Section~\ref{sec:l2}, and provide a succinct description of each of these paper's contributions here:

\begin{enumerate}
\item Belkin, Hsu and Xu~\cite{belkin2019two} consider an effectively misspecified setting for Gaussian and Fourier features and recover the double descent phenomenon for the $\ell_2$-minimizing interpolator in cases where the data can be generated as a linear combination of a huge number of features -- therefore, adding more features into the model in a ``prescient" manner can have an approximation-theoretic benefit.
On the other hand, double descent no longer happens under this model when the features are randomly selected.
\item Hastie, Tibshirani, Rosset and Montanari~\cite{hastie2019surprises} analyze the \textit{asymptotic} risk of the $\ell_2$-minimizing interpolator for independent and whitened features with a bounded fourth moment as a function of the ``overparameterizing" factor: $\frac{\text{number of features}}{\text{number of samples}}$.
They asymptotically prove several phenomena corresponding to the non-asymptotic results in our paper -- harmless noise fitting with increased overparameterizing factor (our Corollaries~\ref{cor:fundamentalprice} and~\ref{cor:crabpot}), what we call (in our discussion in Section~\ref{sec:l2}) signal ``bleed" and the resulting convergence to the null risk.
Additionally, they study the misspecified model asymptotically and show that double descent can occur with sufficiently improved approximability, and replicate the phenomena in a specific non-linear model.
\item Bartlett, Long, Lugosi and Tsigler~\cite{bartlett2019benign} sharply upper and lower bound the (non-asymptotic) generalization error of the $\ell_2$-minimizing interpolator for Gaussian features (which are not whitened/independent in general).
They characterize necessary and sufficient conditions for the $\ell_2$-minimizing interpolator to avoid what we call signal ``bleed" and noise overfitting in terms of functionals of the spectrum of the Gaussian covariance matrix.
Initial discussions with Peter Bartlett as well as the paper's random-matrix-theory-based analysis significantly inspired our subsequent Fourier-theoretic interpretation that we provide in Section~\ref{sec:l2}.
\item Bibas, Fogel and Feder~\cite{bibas2019new} study online/universal linear regression in the overparameterized regime and show that when the high-dimensional training data is mostly along certain pre-determined directions that favor the true parameter vector, good generalization is possible with both $\ell_2$-regularized and minimum $\ell_2$-norm interpolators used for prediction.
\item Mitra~\cite{mitra2019understanding} also studies the asymptotically overparameterized regime and obtains precise analytical risk curves for both the $\ell_2$-minimizing interpolator and the $\ell_1$-minimizing interpolator, with special focus on the overfitting peak at the ``interpolation threshold", i.e. $d \sim n$.
\end{enumerate}

Subsequent to all of this work, Mei and Montanari~\cite{mei2019generalization} very recently proved that Tikhonov regularization in the overparameterized regime can give rise to the double-descent behavior when random Fourier features are used, in \textit{both} the noiseless and noisy cases.
They also recover this behavior with $\ell_2$-minimizing interpolation, but in the low-noise (high SNR) limit.
For a more complete understanding of interpolation in overparameterized linear regression, we recommend to the reader all of these papers along with ours.

\section{Problem Setting}

Throughout, we consider data that is \textit{actually generated} from the high-dimensional/overparameterized linear model\footnote{The more general misspecified case is considered in concurrent and subsequent work~\cite{hastie2019surprises,mei2019generalization}.}.
We consider covariate-response pairs $(\Xvec_i, Y_i \in \reals^p \times \reals)_{i=1}^n$ and generative model $Y = \inprod{\avec(\Xvec)}{\alphastar} + W$ for feature vector $\avec(\Xvec) \in \reals^d$ and Gaussian \textit{noise} $W \sim \NORMAL(0, \sigma^2)$ that is independent of $X$.
We generically assume that the covariates $\{\Xvec_i\}_{i=1}^n$ are iid random samples, but will also consider regularly spaced data on bounded domains for some special cases. 
The \textit{signal} $\alphastar$ is unknown apriori to an estimator.
We also assume a distribution on $\Xvec \in \reals^p$, which induces a distribution on the $d$-dimensional feature vector $\avec(\Xvec)$.
Let $\Sigmabold = \EE[\avec(\Xvec) \avec(\Xvec)^\top]$ denote the covariance matrix of the feature vector under this induced distribution.
We assume that $\Sigmabold$ is invertible; therefore it is positive definite and its square-root-inverse $\Sigmabold^{-1/2}$ exists.

We define shorthand notation for the training data: let
\begin{align*}
    \Atrain := \begin{bmatrix}
    \avec(\Xvec_1)^\top & \avec(\Xvec_2)^\top & \ldots \avec(\Xvec_n)^\top
    \end{bmatrix}^\top
\end{align*}

denote the data (feature) matrix, and let $\Ytrain, \Wtrain \in \reals^n$ denote the output and noise vectors respectively.

We will primarily consider the overparameterized, or high-dimensional regime, i.e. where $d > n$.
We are interested in solutions $\alphabold$ that satisfy the following \textit{feasibility condition} for interpolation:
\begin{align}\label{eq:interpolatingsoln}
    \Atrain \alphabold = \Ytrain
\end{align}

We assume that $\text{rank}(\Atrain) = n$, so the set $\{\alphabold \in \reals^d: \Atrain \alphabold = \Ytrain \}$ is non-empty in $\reals^d$.

For any solution $\alphahat \in \reals^d$, we define the generalization error as test MSE\footnote{This constitutes all quadratic loss functions on error in estimation of the signal $\alphastar$, and is the standard error metric for regression.} below.

\begin{definition}
The expected test mean-squared-error (MSE) \textbf{minus irreducible noise error} of any estimator $\alphahat((\Xvec_i,Y_i)_{i=1}^n)$ is given by
\begin{align*}
    \testerr(\alphahat) := \EE[(Y - \inprod{\avec(\Xvec)}{\alphahat})^2] - \sigma^2
\end{align*}

where the expectation is taken \textbf{only} over the joint distribution on the fresh test sample $(\Xvec,Y)$, and we subtract off the \textbf{irreducible} noise error $\EE[W^2] = \sigma^2$.
\end{definition}

We have chosen the convention to subtract off the unavoidable error arising from noise, $\sigma^2$, as is standard.
From now on, we will denote this quantity to be the test MSE as shorthand.

\section{The fundamental price of interpolation}

Before analyzing particular interpolating solutions, we want to understand whether interpolation can ever lead to a desirable guarantee on the test MSE $\testerr$.
To do this, we characterize the fundamental price that any interpolating solution needs to pay in test MSE.
The constraint in Equation~\eqref{eq:interpolatingsoln} is sufficiently restrictive to not allow trivial solutions of the form $\alphabold = \alphastar$ --- so this is a surprisingly well-posed problem.
In fact, we can easily define the \textit{ideal interpolator} below.

\begin{definition}
\label{def:ideal_interpolator}
The \textit{ideal interpolator} $\alphahat_{\mathsf{ideal}}$ is defined as:
\begin{align*}
    \alphahat_{\mathsf{ideal}} := {\arg \min} \text{ } \testerr(\alphabold) \\
    \text{ subject to }
    \Atrain \alphabold = \Ytrain .
\end{align*}

We also denote the test MSE of the ideal interpolator, which we henceforth call the \textbf{ideal test MSE}, as $\testerr^*$. This is, by definition, a lower bound on the test MSE of any interpolator.
\end{definition}

The following result \textit{exactly} characterizes the ideal interpolator and the ideal test MSE.
\begin{theorem}\label{thm:idealinterpolator}
For any joint distribution on $(\Xvec,Y)$ and realization of training data matrix $\Atrain$ and noise vector $\Wtrain$, the lowest possible test MSE any interpolating solution can incur is bounded below as $\testerr \geq \testerr^*$, where
\begin{align}\label{eq:idealMSE}
    \testerr^* = \Wtrain^\top (\Btrain \Btrain^\top)^{-1} \Wtrain .
\end{align}
Here, $\Btrain := \Atrain \Sigmabold^{-1/2}$ is the \textbf{whitened} training data matrix.
\end{theorem}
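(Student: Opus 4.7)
The plan is to reduce the problem to a standard minimum-norm optimization via two successive changes of variable. First I would expand the test MSE using the generative model and the independence of the noise $W$ from the test covariate $\Xvec$:
\begin{align*}
\testerr(\alphahat)
&= \EE\bigl[(\inprod{\avec(\Xvec)}{\alphastar - \alphahat} + W)^2\bigr] - \sigma^2 \\
&= (\alphastar - \alphahat)^\top \EE[\avec(\Xvec)\avec(\Xvec)^\top](\alphastar - \alphahat) \\
&= (\alphahat - \alphastar)^\top \Sigmabold (\alphahat - \alphastar),
\end{align*}
so minimizing $\testerr$ over interpolators is the same as minimizing a $\Sigmabold$-weighted quadratic in the error.

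Next I would substitute $\Deltabold := \alphabold - \alphastar$. Since $\Ytrain = \Atrain \alphastar + \Wtrain$, the interpolation constraint $\Atrain \alphabold = \Ytrain$ becomes simply $\Atrain \Deltabold = \Wtrain$. Thus the ideal interpolator problem is exactly
\[
\min_{\Deltabold \in \reals^d} \; \Deltabold^\top \Sigmabold \Deltabold \quad \text{subject to} \quad \Atrain \Deltabold = \Wtrain.
\]
Now I would whiten by setting $\utilde := \Sigmabold^{1/2} \Deltabold$ (valid since $\Sigmabold \succ 0$), which turns the objective into $\|\utilde\|_2^2$ and the constraint into $\Btrain \utilde = \Wtrain$ with $\Btrain = \Atrain \Sigmabold^{-1/2}$ as defined in the statement.

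This is the textbook minimum-$\ell_2$-norm problem. Because $\Atrain$ has rank $n$ by assumption and $\Sigmabold^{-1/2}$ is invertible, $\Btrain$ also has rank $n$, so $\Btrain \Btrain^\top$ is invertible and the unique minimizer is $\utilde^* = \Btrain^\top (\Btrain \Btrain^\top)^{-1} \Wtrain$, with optimal value
\[
\|\utilde^*\|_2^2 = \Wtrain^\top (\Btrain \Btrain^\top)^{-1} \Btrain \Btrain^\top (\Btrain \Btrain^\top)^{-1} \Wtrain = \Wtrain^\top (\Btrain \Btrain^\top)^{-1} \Wtrain,
\]
which is exactly~\eqref{eq:idealMSE}. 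Unwinding the substitutions gives $\alphahat_{\mathsf{ideal}} = \alphastar + \Sigmabold^{-1/2} \Btrain^\top (\Btrain \Btrain^\top)^{-1} \Wtrain$, and since every feasible $\alphabold$ is a candidate, $\testerr^*$ is a lower bound on the test MSE of any interpolator.

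There is no real obstacle here; the only point requiring a sentence of justification is that $\Btrain$ inherits full row rank from $\Atrain$, which follows immediately from invertibility of $\Sigmabold^{-1/2}$. The whole argument is essentially a weighted least-norm computation dressed up in the problem's notation, and the expression for $\alphahat_{\mathsf{ideal}}$ drops out for free — useful later when interpreting why interpolators must incorporate $\Wtrain$ through the geometry of $\Btrain$.
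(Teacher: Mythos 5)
Your proposal is correct and follows essentially the same path as the paper's own proof: expand the test MSE into the $\Sigmabold$-weighted quadratic $(\alphahat - \alphastar)^\top \Sigmabold (\alphahat - \alphastar)$, translate the interpolation constraint into $\Atrain(\alphabold - \alphastar) = \Wtrain$, whiten via $\Sigmabold^{1/2}$, and apply the minimum-$\ell_2$-norm / Moore--Penrose pseudoinverse formula to obtain $\Wtrain^\top(\Btrain\Btrain^\top)^{-1}\Wtrain$. The only cosmetic difference is that you name the intermediate variable and explicitly note that $\Btrain$ inherits full row rank from $\Atrain$, a point the paper leaves implicit under its standing assumption $\mathrm{rank}(\Atrain) = n$.
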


The proof of Theorem~\ref{thm:idealinterpolator} is outlined in Section~\ref{sec:thm1proof}.
Theorem~\ref{thm:idealinterpolator} provides an explicit expression for a fundamental limit on the generalization ability of interpolation.
Thus, we can easily evaluate it (numerically) when the training data matrix $\Atrain$ is generated by a number of choices for feature families $\avec(\Xvec)$.
These choices are listed below as examples.

\begin{example}[Gaussian features]\label{eg:gaussian}
The Gaussian features on $d$-dimensional data comprise of $\avec(\Xvec) := \Xvec \sim \NORMAL(\mathbf{0}, \Sigmabold)$, where $\Sigmabold \in \reals^{d \times d}$ and $\Sigmabold \succ 0$.
A special case is iid Gaussian features, i.e. $\Sigmabold = \mathbf{I}_d$.
\end{example}

\begin{example}[Fourier features]\label{eg:fourier}
Let $i := \sqrt{-1}$ denote the imaginary number.
For one-dimensional data $X \in [0,1]$, we can write the $d$-dimensional Fourier features in their \textbf{complex form} as
\begin{align*}
\avec(X) = \begin{bmatrix}
1 & e^{2 \pi i X} & e^{2\pi (2i)X} & \ldots & e^{2\pi ((d-1)i)) X}
\end{bmatrix} \in \mathbb{C}^d .
\end{align*}


This is clearly an \textbf{orthonormal} feature family in the sense that \newline $\EE_{X \sim \UNIF[0,1]}\left[\avec(X)_j \avec(X)_k^* \right] = \delta_{j,k}$, where $\delta_{j,k}$ denotes the Kronecker delta and $(\cdot)^*$ denotes the complex conjugate.
When evaluating interpolating solutions for Fourier features, we will consider one of two models for the data $\{X_i\}_{i=1}^n$:
\begin{enumerate}
\item $n$-regularly spaced training data points, i.e. $x_i = \frac{(i-1)}{n} \text{ for all } i \in [n]$, which we consider empirically \textbf{and} theoretically in Section~\ref{sec:l2}.
\item $n$-random training data points, i.e. $X_i \text{ i.i.d } \sim \UNIF[0,1]$, which we evaluate only empirically.
\end{enumerate}
\end{example}

\begin{example}[Legendre/Vandermonde polynomial features]\label{eg:legendre}
For one-dimensional data $X \in [-1,1]$, we can write the $d$-dimensional Vandermonde features as 
\begin{align*}
\avec(X) = \begin{bmatrix}
1 & X & X^2 & \ldots X^{d-1} .
\end{bmatrix}
\end{align*}
We can also uniquely define their orthonormalization with respect to the uniform measure on $[-1,1]$.
In other words, we define the $d$-dimensional Legendre features as polynomials
\begin{align*}
\avec(X) = \begin{bmatrix}
p_0(X) & p_1(X) & \ldots p_{d-1}(X) ,
\end{bmatrix}
\end{align*}
where $\text{deg}(p_j(X)) = j$ for every $j \geq 0$, and $\{p_j(X)\}_{j \geq 0}$ is uniquely defined such that $\EE_{X \sim \UNIF[-1,1]} \left[p_j(X) p_k(X)\right]\allowbreak = \delta_{j,k}$, i.e. the Legendre polynomials form the orthonormal basis with respect to the uniform measure on $[-1,1]$.
When evaluating interpolating solutions for both these polynomial features, we consider one of two models for the training data $\{X_i\}_{i=1}^n$:
\begin{enumerate}
\item $n$-regularly spaced training data points, i.e. $x_i = -1 + \frac{2(i-1)}{n} \text{ for all } i \in [n]$.
\item $n$-random training data points, i.e. $X_i \text{ i.i.d } \sim \UNIF[-1, 1]$.
\end{enumerate}
\end{example}

\begin{figure}[htbp]
  \centering
  \includegraphics[width=3in]{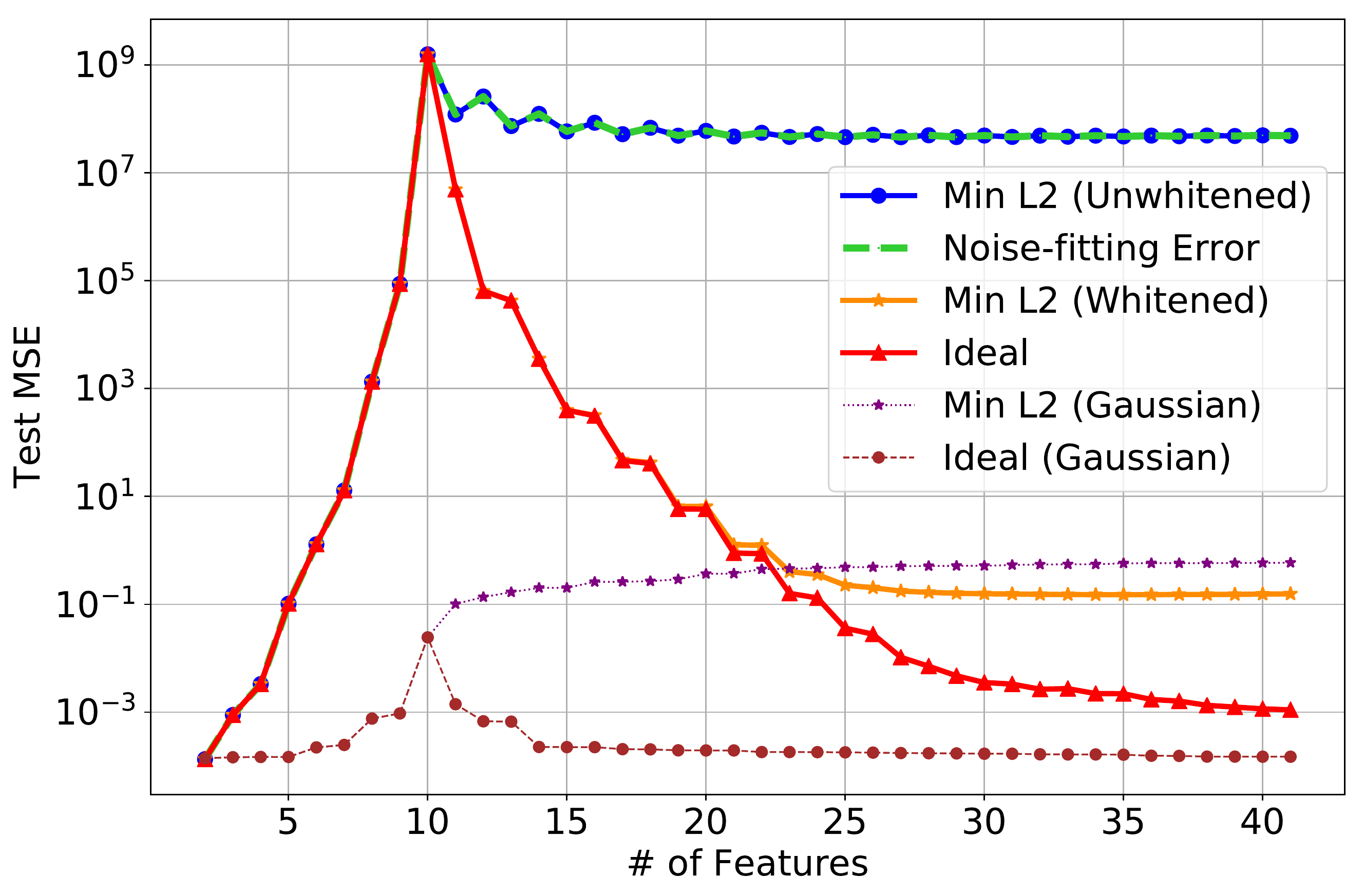}
  \caption{Test MSE for polynomial and Gaussian features. Polynomials -- here, data is sampled from $\text{Unif}[-1,1]$ with $n=10$, $k=2$ (a degree 1 polynomial), and $W \sim \NORMAL(0, 10^{-4})$. The unwhitened case uses Vandermonde features. The whitened version of these features correspond to the Legendre polynomial features. Gaussian -- we sample from $\NORMAL(0, 1)$ and use the same settings as for polynomials.
  } 
  \label{fig:poly_MSE}
\end{figure}

Figure~\ref{fig:RG_sparse_MSE} evaluates the quantity $\testerr^*$ as a function of $d$ for iid Gaussian features.
For $d < n$, we always evaluate the test MSE of the unique least-squares solution as Equation~\eqref{eq:interpolatingsoln} is no longer feasible.
We observe a spike at $d = n$, and a decay in the generalization error as $d >> n$, implying that potentially harmful effects of noise can be mitigated for these feature families.
These properties also manifest in Figure~\ref{fig:poly_MSE} for the orthonormalized Legendre polynomial features, but not for the Vandermonde polynomial features --- illustrating the importance of the whitening step.

\subsection{Converse: Lower bound on test MSE for interpolating solutions}

As mentioned in Definition~\ref{def:ideal_interpolator}, the quantity $\testerr^*$ represents the minimum test error that any interpolating solution satisfying Equation~\eqref{eq:interpolatingsoln} \textit{must} incur.
Noteworthy from the expression in Equation~\eqref{eq:idealMSE} is that the test MSE naturally depends on the singular value spectrum of the random matrix $\Btrain^\top$ that has columns $\{\bvec(\Xvec_i)\}_{i=1}^n$ corresponding to \textit{whitened} features, i.e. $\EE\left[\bvec(\Xvec_i)\bvec(\Xvec_i)^\top\right] = \mathbf{I}_d$ for every $i \in [n]$.
Because of this whitening, we might expect that the matrix $\Btrain^\top$ becomes better and better conditioned and the error arising solely from fitting noise reduces.
To understand the fundamental price of interpolation, we make one of three assumptions on columns of $\Btrain^\top$, denoted by $\{\bvec(\Xvec_i) := \Sigmabold^{-1/2} \avec(\Xvec_i)\}_{i=1}^n$.
Recall that the random feature vectors $\bvec(\Xvec_i)$ are always iid, because we have assumed iid samples $\{\Xvec_i\}_{i=1}^n$.
We state our candidate assumptions in increasing order of strength.

\begin{assumption}\label{as:ideal1}[iid heavy-tailed feature vectors]
The random feature vectors $\bvec(\Xvec_i)$ are bounded almost surely, i.e. $\vecnorm{\bvec(\Xvec_i)}{2} \leq \sqrt{d}$ almost surely.
Note that this assumption is satisfied by discrete Fourier features and random Fourier features.
\end{assumption}

\begin{assumption}\label{as:ideal2}[iid feature vectors with sub-Gaussianity]
The whitened feature vectors $\bvec(\Xvec_i)$ are sub-Gaussian with parameter at most $K > 0$.
A special case of this includes the case of independent \textbf{entries}: in this case, the random \textbf{variables} $B_{ij}$ are independent and sub-Gaussian, all with parameter at most $K > 0$.
We reproduce the definition of sub-Gaussianity and sub-Gaussian parameter for both random vector and random variable in Definition~\ref{def:subgaussian} in Appendix~\ref{app:technical}.
\end{assumption}


\begin{assumption}\label{as:ideal3}[iid Gaussian entries]
The entries of the whitened feature matrix $\Btrain$ are iid Gaussian, i.e. $B_{ij} \text{ i.i.d } \sim \NORMAL(0,1)$.
Note that this exactly describes all cases where the \textbf{original} data matrix $\Atrain$ has iid Gaussian row vectors, i.e. $\avec(\Xvec_i) \sim \NORMAL(\mathbf{0}, \Sigma)$.
\end{assumption}

Notice that the Gaussian Assumption~\ref{as:ideal3} constitutes a special case of sub-Gaussianity of rows (Assumption~\ref{as:ideal2}).
Independence of elements of the feature vector, \textit{even when the features are whitened}, is impossible when lower-dimensional data is lifted into high-dimensional features, i.e. the problem is one of \textit{lifted} linear regression.
It is in view of this that we have included consideration of the far weaker assumptions of sub-Gaussianity of random feature \textit{vectors} (Assumption~\ref{as:ideal2}) and even heavy-tailed features (Assumption~\ref{as:ideal1}).
We will see that the strength of the conclusions we can make is accordingly lower for these more general cases.
However, for random feature vectors satisfying \textit{any} of the above assumptions (which, together, constitute very mild conditions), we can always characterize the fundamental price of interpolation by lower bounding the ideal MSE.

\begin{corollary}\label{cor:fundamentalprice}
For any $0 < \delta < 1$, the fundamental price of any interpolating solution is at least:
\begin{enumerate}
\item
\begin{align}\label{eq:fp_ideal1}
    \testerr^* \geq \left(\frac{n(1 - \delta)}{(C\sqrt{d \ln n} + \sqrt{n})^2} + 1\right) \sigma^2 = \omega\left(\frac{ \sigma^2n}{d \ln n}\right)
\end{align}
with probability greater than or equal to $(1 - \frac{1}{n} - e^{-\delta^2 n/8})$ for any feature family for which the random whitened feature matrix $\Btrain$ satisfies the heavy-tailed Assumption~\ref{as:ideal1}.
Here, $C > 0$ is some positive constant independent of the choice of feature family.
\item 
\begin{align}\label{eq:fp_ideal2}
    \testerr^* \geq \left(\frac{n(1 - \delta)}{(C_K\sqrt{d} + \sqrt{n})^2} + 1\right) \sigma^2  = \omega\left(\frac{\sigma^2 n}{d}\right)
\end{align}
with probability greater than or equal to $(1 - e^{-c_K n} - e^{-\delta^2 n/8})$ for any feature family for which the random whitened feature matrix $\Btrain$ satisfies Assumption~\ref{as:ideal2} of iid sub-Gaussian \textbf{rows}.
(This includes the special case in which $\Btrain$ has independent sub-Gaussian \textbf{entries}.)
Here, $C_K, c_K > 0$ are positive constants that depend on the upper bound on the sub-Gaussian parameter, $K > 0$.
\item 
\begin{align}\label{eq:fp_ideal3}
    \testerr^* \geq \left(\frac{n(1 - \delta)}{(\sqrt{d} + 2\sqrt{n})^2} + 1\right) \sigma^2  = \omega\left(\frac{\sigma^2 n}{d}\right)
\end{align}
with probability greater than or equal to $(1 - e^{-n/2} - e^{-n \delta^2/8})$ for any Gaussian feature family, i.e. any feature family satisfying the Gaussian Assumption~\ref{as:ideal3}.
\end{enumerate}
\end{corollary}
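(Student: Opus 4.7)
The plan is to reduce the bound on $\testerr^*$ to two independent concentration statements via the operator-norm inequality. Theorem~\ref{thm:idealinterpolator} gives $\testerr^* = \Wtrain^\top (\Btrain \Btrain^\top)^{-1} \Wtrain$, and since the smallest eigenvalue of $(\Btrain \Btrain^\top)^{-1}$ equals $1/\sigma_{\max}^2(\Btrain)$, we get the uniform lower bound
\begin{align*}
\testerr^* \;\geq\; \frac{\vecnorm{\Wtrain}{2}^2}{\sigma_{\max}^2(\Btrain)}.
\end{align*}
So it suffices to (i) lower-bound the numerator using a chi-squared tail and (ii) upper-bound the denominator using random matrix concentration specialized to each of the three assumptions. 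Crucially, $\Wtrain$ is independent of $\Btrain$, so the two events can be combined by a union bound. This will produce the common $1-e^{-\delta^2 n/8}$ factor across all three parts of the corollary.

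For the numerator, $\vecnorm{\Wtrain}{2}^2/\sigma^2$ is a $\chi^2_n$ random variable, and standard lower-tail bounds on chi-squared (e.g.\ the Laurent--Massart inequality applied to the lower deviation, or equivalently a direct Cram\'er--Chernoff argument) yield
\begin{align*}
\Prob\bigl[\vecnorm{\Wtrain}{2}^2 \geq n(1-\delta)\sigma^2\bigr] \;\geq\; 1 - e^{-n\delta^2/8}
\end{align*}
for $\delta \in (0,1)$. This produces the $n(1-\delta)\sigma^2$ factor appearing in each of the three bounds.

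For the denominator, the three assumptions invoke three different operator-norm bounds on the random matrix $\Btrain \in \reals^{n\times d}$, whose rows are iid isotropic vectors. Under the Gaussian Assumption~\ref{as:ideal3}, the classical Gordon/Davidson--Szarek inequality gives $\sigma_{\max}(\Btrain) \leq \sqrt{d} + \sqrt{n} + t$ with probability at least $1 - 2e^{-t^2/2}$; taking $t = \sqrt{n}$ yields the bound $\sqrt{d}+2\sqrt{n}$ with probability $\geq 1 - e^{-n/2}$, which is exactly what \eqref{eq:fp_ideal3} requires. Under Assumption~\ref{as:ideal2} of iid sub-Gaussian rows, the same conclusion with constants $C_K, c_K$ depending on the sub-Gaussian parameter follows from Vershynin's non-asymptotic bound on the largest singular value of a sub-Gaussian matrix (Theorem~5.39 in his random-matrix survey), which is proved by an $\varepsilon$-net argument combined with Hanson--Wright-style concentration. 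Under the heavy-tailed Assumption~\ref{as:ideal1}, where only $\vecnorm{\bvec(\Xvec_i)}{2} \leq \sqrt{d}$ is available, we apply matrix Bernstein to the rank-one decomposition $\Btrain^\top \Btrain - n\mathbf{I}_d = \sum_i \bigl(\bvec(\Xvec_i)\bvec(\Xvec_i)^\top - \mathbf{I}_d\bigr)$: each summand has operator norm $\leq d$ and the matrix variance is at most $nd$, so the Bernstein inequality yields $\opnorm{\Btrain^\top\Btrain - n\mathbf{I}_d} = \Oh(\sqrt{nd\ln n} + d\ln n)$ with probability at least $1 - 1/n$. Converting to a singular-value bound via $\sigma_{\max}(\Btrain) \leq \sqrt{n + \opnorm{\Btrain^\top\Btrain - n\mathbf{I}_d}}$ and using $\sqrt{a+b} \leq \sqrt{a}+\sqrt{b}$ produces an upper envelope of the form $C\sqrt{d\ln n} + \sqrt{n}$ for a universal constant $C$, matching \eqref{eq:fp_ideal1}.

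The main obstacle is the third case: a naive matrix Bernstein application gives a $\sqrt{\ln d}$ factor rather than the $\sqrt{\ln n}$ factor claimed, and extracting the precise form $C\sqrt{d\ln n}+\sqrt{n}$ from a non-centered bound on $\opnorm{\Btrain^\top\Btrain}$ requires some care with the choice of deviation parameter to balance the Bernstein-variance and Bernstein-jump terms. Once all three high-probability envelopes on $\sigma_{\max}^2(\Btrain)$ are assembled, substituting the numerator and denominator bounds into the quadratic-form inequality yields the stated $\testerr^*$ lower bounds, and the $\omega(\cdot)$-asymptotic form is immediate algebra from the dominant term.
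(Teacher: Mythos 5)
Your decomposition and the first two cases match the paper's proof exactly: lower-bound $\testerr^*$ by $\vecnorm{\Wtrain}{2}^2/\sigma_{\max}^2(\Btrain)$, apply the chi-squared lower-tail bound to the numerator, and control $\sigma_{\max}(\Btrain)$ via Vershynin's Theorem 5.32 (Gaussian, $t=\sqrt{n}$) and Theorem 5.39 (sub-Gaussian rows, $t=\sqrt{n}$). For the heavy-tailed case the paper does not re-derive a Bernstein bound from scratch but instead invokes Vershynin's Theorem 5.41 (restated as Lemma~\ref{lem:heavytailedconcentration}), whose own proof is the non-commutative Bernstein argument you outline, so there is no real daylight in method.

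Your unease about the $\sqrt{\ln n}$ factor is, however, well placed, and it points at an imprecision that is in the paper itself. In Vershynin's Theorem 5.41, stated for an $N \times n$ matrix with independent isotropic rows in $\reals^n$, the failure probability is $1 - 2n\exp(-ct^2)$, and the prefactor $n$ is the \emph{column} dimension, i.e.\ the ambient dimension of the $d\times d$ matrices $\bvec_i\bvec_i^\top - \mathbf{I}_d$ in the Bernstein sum. Translated to $\Btrain\in\reals^{n\times d}$ with rows in $\reals^d$, that prefactor should be $d$, not $n$; the paper's Lemma~\ref{lem:heavytailedconcentration} appears to have relabeled $N\mapsto n$ without also relabeling the prefactor $n\mapsto d$ (and the $\vecnorm{\bvec_i}{2}\leq \sqrt{n}$ there should read $\sqrt{d}$ to agree with Assumption~\ref{as:ideal1}). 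With the corrected prefactor $1-2d\exp(-ct^2)$, the choice $t\asymp\sqrt{\ln n}$ no longer yields failure probability $O(1/n)$; you need $t\asymp\sqrt{\ln(dn)}$, giving $\sigma_{\max}(\Btrain)\lesssim \sqrt{n}+\sqrt{d\ln(dn)}$. This is the same conclusion up to constants when $d=\mathrm{poly}(n)$, but it is strictly weaker when $d=e^{\Theta(n)}$, and the clean $\omega(\sigma^2 n/(d\ln n))$ asymptotic in Equation~\eqref{eq:fp_ideal1} should then read $\omega(\sigma^2 n/(d\ln(dn)))$. So you have not failed to find the paper's trick --- you have found a typo in the paper's cited lemma that the stated $\sqrt{\ln n}$ rate silently inherits.
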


Corollary~\ref{cor:fundamentalprice} characterizes the fundamental price of \textit{any} interpolating solution as $\widetilde{\Oh}\left(\frac{n}{d}\right)$ at a significant level of generality\footnote{Note that in the case of data satisfying Assumption~\ref{as:ideal1}, the $\widetilde{\Oh}(\cdot)$ omits the $(\ln n)$ factor in the denominator arising from heavy tailed-ness.}.
It tells us that extreme overparameterization is \textit{essential} for harmlessness of interpolation of noise.
To see this, consider how the number of features $d$ could scale as a function of the number of samples $n$.
Say that we grew $d = \gamma n$ for some \textit{constant} $\gamma > 1$. Then, the lower bound on test MSE (minus the irreducible error $\sigma^2$ arising from the prospect of noise in the test points as well) scales as $\omega\left(\frac{\sigma^2 n}{d}\right) = \omega\left(\frac{\sigma^2}{\gamma}\right)$, which asymptotes to a constant as $n \to \infty$.
This tells us that \textit{the level of overparameterization necessarily needs to grow faster than the number of samples} for harmless interpolation to even be possible. For example, this could happen at a polynomial rate (e.g. $d = n^q$ for some $q > 1$) or even an exponential rate (e.g. $d = e^{\lambda n}$ for some $\lambda > 0$).


\subsection{The possibility of harmless interpolation}

Corollary~\ref{cor:fundamentalprice} only provides a lower bound on the ideal MSE, not an upper bound.
Thus, it does not tell us whether harmless interpolation is ever actually \textit{possible}.
This turns out to be a more delicate question in general, and is difficult to characterize under the weaker Assumptions~\ref{as:ideal1} and~\ref{as:ideal2} in their full generality (for a detailed discussion of why this is the case, see Appendix~\ref{app:technical}).
However, for two special cases: a) Gaussian features (Assumption~\ref{as:ideal3}), and b) independent sub-Gaussian feature vectors (Assumption~\ref{as:ideal2} for independent entries): we can show that harmless interpolation is always possible, with an upper bound on the ideal MSE that matches the lower bound provided in Corollary~\ref{cor:fundamentalprice}.
We state this result below.

\begin{corollary}\label{cor:crabpot}
For any $0 < \delta < 1$ and: 
\begin{enumerate}
    \item Random whitened feature matrix $\Btrain$ satisfying the Gaussian Assumption~\ref{as:ideal3}, the fundamental price of interpolation is at most
\begin{align}\label{eq:crabpot}
    \testerr^* \leq \left(\frac{n(1 + \delta)}{(\sqrt{d} - 2\sqrt{n})^2} + 1\right) \sigma^2  = \Oh\left(\frac{\sigma^2 n}{d}\right)
\end{align}
with probability greater than or equal to $(1 - e^{-n/2} - e^{-n \delta^2/8})$.
    \item Random matrix feature matrix $\Btrain$ satisfying \textbf{independent, sub-Gaussian entries} with unit variance (special case of Assumption~\ref{as:ideal2}), the fundamental price of interpolation is at most
\begin{align}\label{eq:crabpot_sg}
\testerr^* \leq \left(\frac{4C_K^2 n(1 + \delta)}{(\sqrt{d} - \sqrt{n-1})^2} + 1\right) \sigma^2  = \Oh\left(\frac{\sigma^2 n}{d}\right)
\end{align}
with probability greater than or equal to $\left(1 - \left(\frac{1}{2}\right)^{d - n + 1} - c_K^d - e^{-n \delta^2/8}\right)$, where constants $c_K \in (0,1)$ and $C_K > 0$ only depend on the upper bound on the sub-Gaussian parameter $K$.
\end{enumerate}
\end{corollary}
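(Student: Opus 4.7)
The plan is to bound the quadratic form $\testerr^* = \Wtrain^\top (\Btrain \Btrain^\top)^{-1} \Wtrain$ from Theorem~\ref{thm:idealinterpolator} by decoupling the magnitude of the noise vector from the spectrum of the random whitened feature matrix. The first step is to apply the basic spectral inequality
\[
\Wtrain^\top (\Btrain \Btrain^\top)^{-1} \Wtrain \;\leq\; \lambda_{\max}\!\bigl((\Btrain \Btrain^\top)^{-1}\bigr)\, \vecnorm{\Wtrain}{2}^2 \;=\; \frac{\vecnorm{\Wtrain}{2}^2}{\sigma_{\min}^2(\Btrain)},
\]
where $\sigma_{\min}(\Btrain)$ denotes the smallest (nonzero) singular value of the $n\times d$ matrix $\Btrain$ with $d \geq n$. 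Since $\Wtrain$ is independent of $\Btrain$, this reduces the problem to an upper tail bound on $\vecnorm{\Wtrain}{2}^2$ together with a lower tail bound on $\sigma_{\min}(\Btrain)$, which I would control separately and combine by a union bound.

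The noise term is straightforward: since $\vecnorm{\Wtrain}{2}^2/\sigma^2 \sim \chi^2_n$, standard chi-squared concentration gives $\vecnorm{\Wtrain}{2}^2 \leq n(1+\delta)\sigma^2$ with probability at least $1 - e^{-n\delta^2/8}$. The same ingredient is already present in the proof of the matching lower bound (Corollary~\ref{cor:fundamentalprice}), so it can be reused verbatim.

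For the singular-value lower bound, I would invoke standard non-asymptotic random matrix estimates tailored to each assumption. Under the Gaussian Assumption~\ref{as:ideal3} the Gordon/Davidson--Szarek inequality gives $\sigma_{\min}(\Btrain) \geq \sqrt{d} - \sqrt{n} - t$ with probability at least $1 - e^{-t^2/2}$; choosing $t = \sqrt{n}$ produces the $(\sqrt{d} - 2\sqrt{n})^2$ denominator in \eqref{eq:crabpot} and the failure probability $e^{-n/2}$. Under the independent sub-Gaussian entry specialization of Assumption~\ref{as:ideal2}, I would instead apply a Litvak--Pajor--Rudelson--Tomczak-Jaegermann-style bound for the smallest singular value of tall rectangular sub-Gaussian matrices, which yields $\sigma_{\min}(\Btrain) \geq (\sqrt{d} - \sqrt{n-1})/(2C_K)$ with probability at least $1 - (1/2)^{d-n+1} - c_K^{d}$, where $C_K$ and $c_K$ depend only on the sub-Gaussian parameter. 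Squaring produces the $(2 C_K)^{-2}$ factor visible as $4 C_K^2$ in the numerator of \eqref{eq:crabpot_sg}. A union bound over the chi-squared event and the singular-value event then yields each of the two parts, with the $+\sigma^2$ term in the stated bound being a harmless additive slack that absorbs lower-order contributions.

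The main obstacle is not the Gaussian case, which is essentially a plug-and-play application of Gordon's inequality, but rather selecting the sub-Gaussian smallest-singular-value estimate so that both the leading order $(\sqrt{d} - \sqrt{n-1})$ scaling and the somewhat unusual failure probability $(1/2)^{d-n+1} + c_K^{d}$ match exactly. This form is characteristic of the LPRT-type arguments, which combine an $\varepsilon$-net covering on the unit sphere (giving the $c_K^d$ term) with a separate analysis of small-ball behavior in the last $d - n + 1$ directions (giving the $(1/2)^{d-n+1}$ term), and one must verify that the version cited from the literature is stated for rectangular (rather than square) sub-Gaussian matrices with i.i.d.\ entries. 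Once the appropriate citation is in place, the remainder of the proof is a direct combination of the three probabilistic bounds.
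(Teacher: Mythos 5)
Your proposal follows essentially the same route as the paper: decouple $\testerr^* = \Wtrain^\top (\Btrain \Btrain^\top)^{-1} \Wtrain$ via $\testerr^* \leq \vecnorm{\Wtrain}{2}^2 / \sigma_{\min}^2(\Btrain)$, bound the chi-squared noise term and the smallest singular value separately, and union-bound. The paper uses the Davidson--Szarek concentration (Lemma~\ref{lem:gaussianconcentration}) with $t=\sqrt{n}$ for the Gaussian case and Vershynin's Theorem~5.38 (restated as Lemma~\ref{lem:matrixconcentration_iid}) with $\epsilon = 1/(2C_K)$ for the independent sub-Gaussian-entry case, which is precisely the LPRT-type estimate you identify, so your proof matches the paper's.
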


Thus, when the features given to us in the form of training data matrix $\Atrain$ are normally distributed (in general unwhitened), Corollary~\ref{cor:crabpot} helps explain the harmless interpolations that we are seeing empirically.
Together with Corollary~\ref{cor:fundamentalprice}, this is a sharp, up to constants\footnote{The concurrent work of Hastie, Tibshirani, Rosset and Montanari~\cite{hastie2019surprises} exactly characterizes the effect of noise-fitting for feature vectors with independent entries (Lemma $3$ in their paper) satisfying a bounded fourth moment assumption in the asymptotic regime where $d, n \to \infty, \lim \frac{d}{n} = \gamma$.
Their scaling in $\gamma$ is equivalent to ours in the constant high-dimensional regime $d = \gamma n$.
Their result is a consequence of being able to exactly characterize the asymptotic distribution of the spectrum of the infinite-dimensional random matrix $\Btrain \Btrain^\top$ by applying a generalized Marchenko-Pastur law.
We use non-asymptotic random matrix theory for our results and thus sharply characterize the dependence on $n$ and $d$ in our rates, but not the dependence on constant factors.
}, characterization of the fundamental price of any interpolating solution when using a Gaussian feature family\footnote{For the Gaussian case, this scaling can also be derived from the work of Bartlett, Long, Lugosi and Tsigler~\cite{bartlett2019benign}, which analyes the generalization error of the $\ell_2$-minimizing interpolation of unwhitened Gaussian features to Gaussian noise more generally.
As we see from the second part of Corollary~\ref{cor:crabpot}, this scaling holds more generally for whitened, iid sub-Gaussian features.
}. 
To make this notion of harmlessness more concrete, it is useful to consider two \textit{ultra-overparameterized} settings:
\begin{enumerate}
\item $d = n^q$ for some parameter $q > 1$, which constitutes a \textit{polynomially} high-dimensional regime. Here, the ideal test MSE is upper bounded by $\Oh\left(\frac{\sigma^2 n}{n^q}\right) = \omega\left(\frac{\sigma^2}{n^{1-q}}\right)$, which goes to $0$ as $n \to \infty$ at a polynomial rate.
\item $d = e^{\lambda n}$ for some parameter $\lambda > 0$, which constitutes an \textit{exponentially} high-dimensional regime.
Here, ideal test MSE is at most $\Oh\left(\sigma^2 ne^{- \lambda n}\right)$ which goes to $0$ as $n \to \infty$ at an exponentially decaying rate. 
\end{enumerate}

Thus, there always exists an interpolating solution that fits noise in such a manner that \textit{the effect of fitting this noise on test error} decays to $0$ as the number of features goes to infinity.
Of course, Corollary~\ref{cor:crabpot} is not particularly meaningful for $d \sim n$, i.e. near the interpolation threshold: for a detailed discussion of this regime, see Section~\ref{sec:threshold}.

We defer the proofs of Corollary~\ref{cor:fundamentalprice} and Corollary~\ref{cor:crabpot} for the more general cases of sub-Gaussian and heavy-tailed feature vectors (Assumptions~\ref{as:ideal1} and~\ref{as:ideal2}) to Appendix~\ref{app:technical}.
We here prove Theorem~\ref{thm:idealinterpolator} and Corollaries~\ref{cor:fundamentalprice} and~\ref{cor:crabpot} for the iid Gaussian case (Assumption~\ref{as:ideal3}).

\subsection{Proof of Theorem~\ref{thm:idealinterpolator}, Corollaries~\ref{cor:fundamentalprice} and~\ref{cor:crabpot}}\label{sec:thm1proof}

To prove Theorem~\ref{thm:idealinterpolator}, we first get an exact expression for the ideal interpolator as defined in Definition~\ref{def:ideal_interpolator}.
A simple calculation gives us
\begin{align*}
    \testerr(\alphahat) &= \EE[(\inprod{\avec(\Xvec)}{\alphastar} + W - \inprod{\avec(\Xvec)}{\alphahat})^2] - \sigma^2 \\
    &= \EE[(\inprod{\avec(\Xvec)}{\alphahat - \alphastar})^2] + \EE[W^2] \\
    &= \EE[(\alphahat - \alphastar)^\top \avec(\Xvec) \avec(\Xvec)^\top (\alphahat - \alphastar)] + \sigma^2 - \sigma^2 \\
    &= (\alphahat - \alphastar)^\top \Sigmabold (\alphahat - \alphastar) + \sigma^2 - \sigma^2 \\
    &= \vecnorm{\Sigmabold^{1/2} (\alphahat - \alphastar)}{2}^2 + \sigma^2 - \sigma^2 \\
    &= \vecnorm{\Sigmabold^{1/2} (\alphahat - \alphastar)}{2}^2 .
\end{align*}

Thus, we can equivalently characterize the ideal interpolating solution $\alphahat_{\mathsf{ideal}} := {\arg \min} \text{ } \testerr(\alphahat)$ as:
\begin{align*}
    \alphahat_{\mathsf{ideal}} &:= {\arg \min} \vecnorm{\Sigmabold^{1/2} (\alphabold - \alphastar)}{2} \\
    &\text{ subject to Equation~\eqref{eq:interpolatingsoln} holding.}
\end{align*}

Observe that Equation~\eqref{eq:interpolatingsoln} can be rewritten as 
\begin{align*}
    \Atrain (\alphabold - \alphastar) &= \Wtrain \\
    \implies \Atrain \Sigmabold^{-1/2} \Sigmabold^{1/2} (\alphabold - \alphastar) &= \Wtrain .
\end{align*}

Then we have a closed form expression for the minimum norm solution, denoting $\Btrain = \Atrain \Sigmabold^{-1/2}$:
\begin{align*}
    \Sigmabold^{1/2} (\alphahat_{\mathsf{ideal}} - \alphastar) = \Btrain^\top (\Btrain \Btrain^\top)^{-1} \Wtrain 
\end{align*}

(note that $\Btrain^\dagger := \Btrain^\top (\Btrain \Btrain^\top)^{-1}$ is just the right Moore-Penrose pseudoinverse of $\Btrain$).

Substituting this expression into the test MSE calculation:
\begin{align*}
    \testerr^* &= \testerr(\alphahat_{\mathsf{ideal}}) = \vecnorm{\Btrain^\top (\Btrain \Btrain^\top)^{-1} \Wtrain}{2}^2 \\
    &= \Wtrain^\top (\Btrain \Btrain^\top)^{-1} \Wtrain
\end{align*}

Plugging this into the expression of $\testerr^*$ gives us Equation~\eqref{eq:idealMSE}, thus proving Theorem~\ref{thm:idealinterpolator}.
\qed

To prove Corollaries~\ref{cor:fundamentalprice} and~\ref{cor:crabpot}, we lower bound and upper bound the test MSE of the ideal interpolator in Equation~\eqref{eq:idealMSE}.
We use matrix concentration theory~\cite{vershynin2010introduction} to do this for the Gaussian case (Assumption~\ref{as:ideal3}).
We first state the following concentration result on the non-zero singular values of random matrix $\mathbf{B}$ with entries $B_{ij} \text{ i.i.d } \sim \NORMAL(0,1)$ as is from Vershynin's book~\cite{vershynin2010introduction}.
The original argument is contained in classical work by Davidson and Szarek~\cite{davidson2001local}.

\begin{lemma}[Theorem $5.32$, ~\cite{vershynin2010introduction}]\label{lem:gaussianconcentration}
Let $\sigma_{min}(\cdot)$ and $\sigma_{max}(\cdot)$ denote the minimum and maximum (non-degenerate) singular values of a matrix.
For random matrix $\Bmat \in \reals^{n \times d}$ having entries $B_{ij} \text{ i.i.d. } \sim \NORMAL(0,1)$, we have 
\begin{align}\label{eq:singularvaluegaussian}
\sqrt{d} - \sqrt{n} - t \leq \sigma_{min}(\Bmat) \leq \sigma_{max}(\Bmat) \leq \sqrt{d} + \sqrt{n} + t
\end{align}
with probability at least $(1 - e^{-t^2/2})$.
\end{lemma}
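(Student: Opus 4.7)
The plan is to derive (\ref{eq:singularvaluegaussian}) by combining two classical ingredients from high-dimensional probability: (i) Gaussian concentration of the extremal singular values around their means, treated as Lipschitz functionals of the matrix entries; and (ii) sharp bounds on those expected values obtained via a Gaussian comparison inequality (Gordon's min--max theorem). The lemma then follows by chaining the two pieces and taking a union bound on the upper and lower tails.

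For step (i), I would view $\Bmat$ as a standard Gaussian vector in $\reals^{nd}$ and observe that both $\sigma_{\max}(\cdot)$ and $\sigma_{\min}(\cdot)$ are $1$-Lipschitz with respect to the Frobenius norm. This is a straightforward consequence of Weyl's inequality for singular values together with the bound $\opnorm{\Bmat - \Bmat'} \leq \fronorm{\Bmat - \Bmat'}$. Borell's Gaussian concentration inequality for Lipschitz functions then yields, for each of the two functionals,
\begin{align*}
\Prob\!\left(\sigma_{\max}(\Bmat) > \EE[\sigma_{\max}(\Bmat)] + t \right) &\leq e^{-t^2/2}, \\
\Prob\!\left(\sigma_{\min}(\Bmat) < \EE[\sigma_{\min}(\Bmat)] - t \right) &\leq e^{-t^2/2}.
\end{align*}

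For step (ii), I would use the variational identities
\begin{align*}
\sigma_{\max}(\Bmat) = \max_{\vvec \in S^{d-1}} \max_{\xvec \in S^{n-1}} \xvec^\top \Bmat \vvec,
\qquad
\sigma_{\min}(\Bmat) = \min_{\vvec \in S^{d-1}} \max_{\xvec \in S^{n-1}} \xvec^\top \Bmat \vvec,
\end{align*}
and compare the centered Gaussian process $X_{\xvec,\vvec} := \xvec^\top \Bmat \vvec$ indexed by $S^{n-1} \times S^{d-1}$ to the simpler auxiliary process $Y_{\xvec,\vvec} := \inprod{\mathbf{g}}{\xvec} + \inprod{\mathbf{h}}{\vvec}$, with $\mathbf{g} \sim \NORMAL(\mathbf{0}, \mathbf{I}_n)$ and $\mathbf{h} \sim \NORMAL(\mathbf{0}, \mathbf{I}_d)$ independent. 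A direct computation of increments shows $X$ satisfies the hypotheses of Gordon's theorem relative to $Y$ (the min--max inequality of Gordon, the sharper refinement of Slepian's lemma). Applying Gordon's theorem to both saddle-point representations gives $\EE[\sigma_{\max}(\Bmat)] \leq \EE\vecnorm{\mathbf{g}}{2} + \EE\vecnorm{\mathbf{h}}{2}$ and $\EE[\sigma_{\min}(\Bmat)] \geq \EE\vecnorm{\mathbf{h}}{2} - \EE\vecnorm{\mathbf{g}}{2}$, from which Jensen's inequality together with the standard evaluation $\EE\vecnorm{\mathbf{g}}{2}^2 = n$ and $\EE\vecnorm{\mathbf{h}}{2}^2 = d$ produces the clean bounds $\EE[\sigma_{\max}(\Bmat)] \leq \sqrt{n} + \sqrt{d}$ and $\EE[\sigma_{\min}(\Bmat)] \geq \sqrt{d} - \sqrt{n}$.

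Combining the two steps, the upper tail of $\sigma_{\max}(\Bmat)$ exceeds $\sqrt{d} + \sqrt{n} + t$ with probability at most $e^{-t^2/2}$, and similarly for the lower tail of $\sigma_{\min}(\Bmat)$ below $\sqrt{d} - \sqrt{n} - t$. A union bound yields the stated guarantee (up to an absorbable factor of two that the sharper Davidson--Szarek argument avoids by working with the one-sided Gaussian concentration and Gordon bounds together). The main obstacle is step (ii): Gordon's theorem is the non-trivial piece, and the crucial check is that the increments of $X_{\xvec,\vvec}$ are dominated by those of $Y_{\xvec,\vvec}$ in the sense required by the min--max comparison, which reduces to the elementary algebraic identity $(\xvec^\top \xvec')(\vvec^\top \vvec') \leq \tfrac{1}{2}\bigl((\xvec^\top \xvec')^2 + (\vvec^\top \vvec')^2\bigr) + \text{correction}$ after expanding squared differences --- a short but slightly delicate calculation. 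Step (i), by contrast, is essentially automatic once Lipschitzness is noted.
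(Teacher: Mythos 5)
The paper does not actually prove this lemma; it imports it verbatim from Vershynin's survey (Theorem~5.32 there) and credits the original argument to Davidson and Szarek. Your outline is precisely the standard proof used by both sources --- Gordon's Gaussian comparison inequality to control the expectations of the extremal singular values, followed by Gaussian concentration for $1$-Lipschitz functionals of a standard Gaussian vector --- so conceptually the route is the same one underlying the cited theorem.

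One step is stated incorrectly, and it happens to be the delicate part of the Davidson--Szarek argument. Gordon's comparison gives $\EE\,\sigma_{\min}(\Bmat) \geq \EE\|\mathbf{h}\|_2 - \EE\|\mathbf{g}\|_2$ with $\mathbf{h}\sim\NORMAL(\mathbf{0},\mathbf{I}_d)$ and $\mathbf{g}\sim\NORMAL(\mathbf{0},\mathbf{I}_n)$. Jensen yields the \emph{upper} bounds $\EE\|\mathbf{g}\|_2 \leq \sqrt{n}$ and $\EE\|\mathbf{h}\|_2 \leq \sqrt{d}$, and the latter points in the wrong direction: you cannot conclude $\EE\|\mathbf{h}\|_2 - \EE\|\mathbf{g}\|_2 \geq \sqrt{d}-\sqrt{n}$ from these two inequalities alone. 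The needed extra ingredient is that $m \mapsto \sqrt{m} - \EE\|\mathbf{z}_m\|_2$ (with $\mathbf{z}_m \sim \NORMAL(\mathbf{0},\mathbf{I}_m)$) is nonincreasing, so that $\EE\|\mathbf{h}\|_2 - \EE\|\mathbf{g}\|_2 \geq \sqrt{d}-\sqrt{n}$; this is a short Gamma-function computation but a genuine additional fact. For $\sigma_{\max}$, by contrast, both Jensen bounds point the convenient way and nothing further is required, so your treatment of that side is fine. One more small mismatch worth recording: Vershynin's Theorem~5.32 has failure probability $2e^{-t^2/2}$ (union bound over the two tails), whereas the paper's restatement drops the factor of two; you correctly flagged this, and it is inconsequential for the downstream use in Corollaries~\ref{cor:fundamentalprice} and~\ref{cor:crabpot}.
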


We first use this lemma to prove Corollary~\ref{cor:fundamentalprice}.
We lower bound Equation~\eqref{eq:idealMSE} as
\begin{align*}
    \testerr^* &\geq \frac{\vecnorm{\Wtrain}{2}^2}{\lambda_{max}((\Btrain^\top )^\top\Btrain^\top)} .
\end{align*}

Now, we apply the \textit{upper} bound on the \textit{maximum} singular value of $\Btrain^\top$ as stated in Lemma~\ref{lem:gaussianconcentration}, substituting $t := \sqrt{n}$ to get
\begin{align}\label{eq:st3}
    \testerr^* &\geq \frac{\vecnorm{\Wtrain}{2}^2}{(\sqrt{d} + 2\sqrt{n})^2}
\end{align}
with probability at least $(1 - e^{-n/2})$.
Further, we have $\vecnorm{\Wtrain}{2}^2 = \sum_{i=1}^n W_i^2$ and the lower tail bound on chi-squared random variables~\cite[Chapter $2$]{wainwright2019high} gives us
\begin{align}\label{eq:st4}
    \vecnorm{\Wtrain}{2}^2 \geq n\sigma^2(1 - \delta)
\end{align}
with probability greater than or equal to $(1 - e^{-n \delta^2/8})$ for any $0 < \delta < 1$.
When Equations~\eqref{eq:st3} and~\eqref{eq:st4} both hold, we get the statement of Corollary~\ref{cor:fundamentalprice}.
\qed


Now, we prove Corollary~\ref{cor:crabpot}.
Denoting $\Btrain^T = \begin{bmatrix}
\bvec_1 & \ldots & \bvec_n
\end{bmatrix}$, we can upper bound Equation~\eqref{eq:idealMSE} as
\begin{align*}
    \testerr^* &\leq \frac{\vecnorm{\Wtrain}{2}^2}{\lambda_{min}((\Btrain^\top )^\top\Btrain^\top)} .
\end{align*}
Observe that the matrix $\Btrain$ has entries $B_{ij} \sim \NORMAL(0,1)$ due to the whitening.
Thus, we can apply the \textit{lower} bound on the \textit{minimum} singular value of $\Btrain^\top$ as stated in Lemma~\ref{lem:gaussianconcentration}, substituting $t := \sqrt{n}$ to get
\begin{align}\label{eq:st1}
    \testerr^* &\leq \frac{\vecnorm{\Wtrain}{2}^2}{(\sqrt{d} - 2\sqrt{n})^2} 
\end{align}
with probability at least $(1 - e^{-n/2})$.

Further, we have $\vecnorm{\Wtrain}{2}^2 = \sum_{i=1}^n W_i^2$ and the corresponding upper tail bound on chi-squared random variables\footnote{One could have also used the Hanson-Wright inequality~\cite{hanson1971bound} to get slightly more precise constants, but the tight concentration of the singular values of the random matrix $\Btrain$ implies that only constant factors would be improved.} gives us
\begin{align}\label{eq:st2}
    \vecnorm{\Wtrain}{2}^2 \leq n\sigma^2(1 + \delta)
\end{align}
with probability greater than or equal to $(1 - e^{-n \delta^2/8})$ for any $\delta > 0$.
When Equations~\eqref{eq:st1} and~\eqref{eq:st2} both hold, we get the upper inequality in Equation~\eqref{eq:crabpot}, thus proving Corollary~\ref{cor:crabpot}.

Using the union bound on the probability of non-event of Equations~\eqref{eq:st3},~\eqref{eq:st4},~\eqref{eq:st1} or~\eqref{eq:st2}, gives us the statements of Corollaries~\ref{cor:crabpot} as well as~\ref{cor:fundamentalprice} with probability greater than or equal to $(1 - 2e^{-n/2} - 2e^{-\delta^2 n/8})$.
This provides a characterization of the fundamental price of interpolation for the Gaussian case.
\qed

The proof of Theorem~\ref{thm:idealinterpolator} tells us that the ideal interpolator fits what can be thought of as \textit{a residual signal}, $(\alphabold - \alphastar)$, to noise.
This is well and truly an \textit{ideal} interpolator, as it requires knowledge of the true signal $\alphastar$ to implement -- and thus it should only be thought of as primarily a fundamental limit on generalization.
To understand whether we could achieve this limit, we turn to practically used interpolating schemes, beginning with the $\ell_2$-minimizing interpolator which has seen the most concurrent (empirical and theoretical) attention.

\section{The minimum-$\ell_2$-norm interpolator through the Fourier lens}\label{sec:l2}

For overparameterized linear models, it is natural to consider the minimum-$\ell_2$-norm solution that interpolates the data, i.e. satisfies Equation~\eqref{eq:interpolatingsoln}. 
After all, the proof of Theorem~\ref{thm:idealinterpolator} shows that the least harmful way in which to fit noise is to find the minimum-$\ell_2$-norm interpolator of (effectively) \textit{pure noise} using appropriately whitened features.
Furthermore, the minimum-$\ell_2$-norm interpolator is explicitly characterizable as a linear matrix operator on the output vector $\Ytrain$, and can be easily computed as well.
For example, gradient descent on the overparameterized linear regression problem with appropriate initialization converges to the minimum-$\ell_2$-norm interpolator~\cite{shah2018minimum, woodworth2019kernel}. 

As mentioned in Section~\ref{sec:relatedwork}, a couple of excellent papers~\cite{hastie2019surprises, bartlett2019benign} that center around comprehensive analyses of the $\ell_2$-minimizing interpolator were published concurrently\footnote{Through personal communication with Peter Bartlett in December 2018/January 2019, we became aware of the gist of the investigations underlying~\cite{bartlett2019benign} when we first presented our work.
In the earlier version of this work presented at IEEE ISIT 2019, we cite this communication accordingly.}.
When it comes to whitened or Gaussian features, any results discussed in this section are all reflected, more or less, across~\cite{hastie2019surprises} and~\cite{bartlett2019benign} respectively -- and we recommend reading these papers for the details.
The results, like our proofs of Theorem~\ref{thm:idealinterpolator}, Corollary~\ref{cor:fundamentalprice} and~\ref{cor:crabpot}, use fundamental advances in asymptotic and non-asymptotic random matrix theory.
Here, we provide a brief alternative exposition of the main ideas through a Fourier-theoretic lens on \textit{regularly spaced} data (Example~\ref{eg:fourier}).
Our aim for doing this is two-fold: one, simply to complement these papers; and two, to provide a signal processing oriented perspective on salient properties of the minimum-$\ell_2$-norm interpolator.

\multifigureexterior{fig:fourier_paradigmatic_all}{The converse bounds for interpolation, plotted on a log-log scale for $n = 15$. Here, the median is plotted for clarity where the randomness is due to how the features and training points are drawn. Notice that all the curves are similar in the significantly overparameterized regime, i.e. $d/n \geq 10$.}{
\subfigl{0.49\textwidth}{fig:fourier_paradigmatic}{Median plots.}{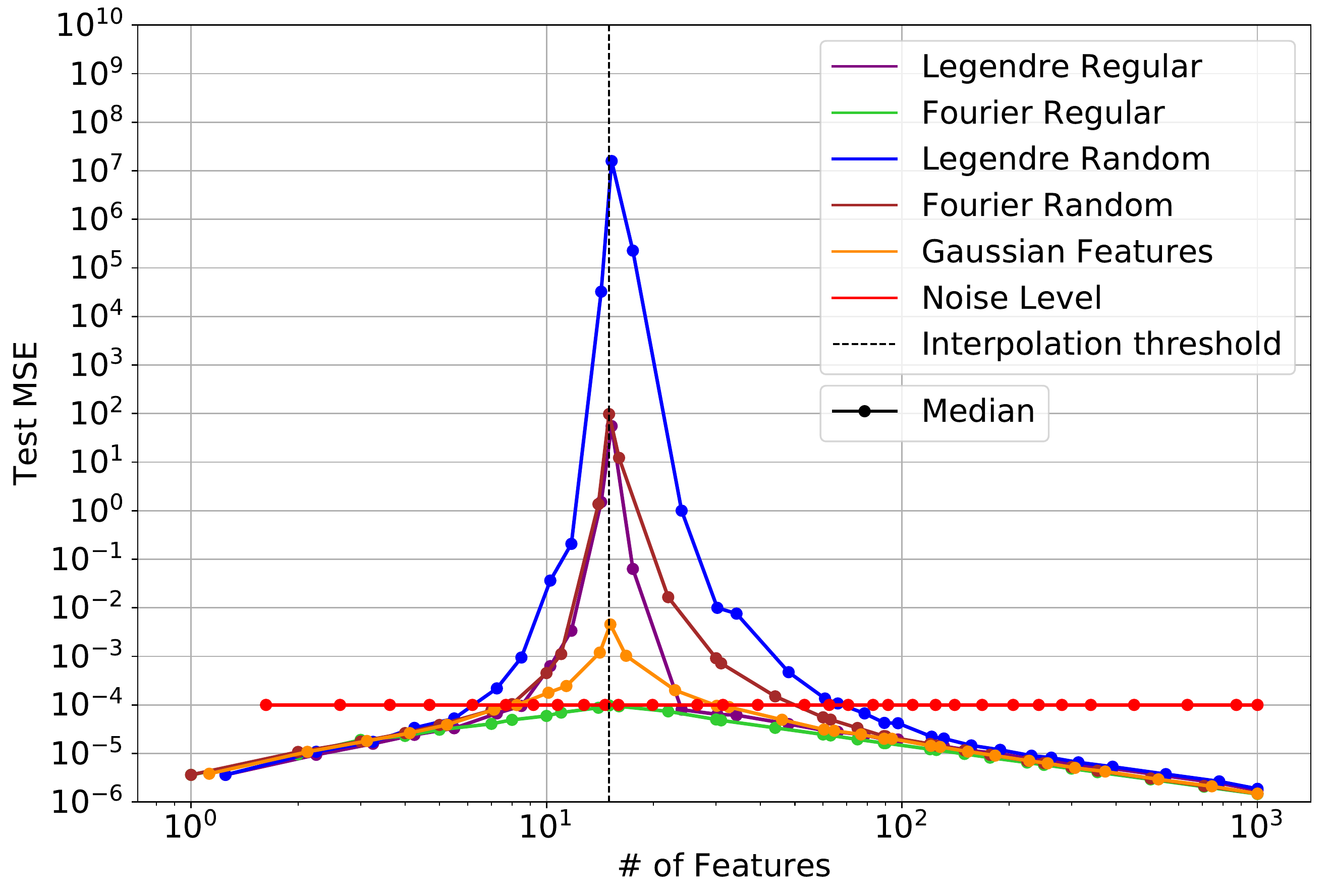}
\subfigl{0.49\textwidth}{fig:fourier_paradigmatic_errorbars}{Median plots with error bars. Notice the variability near the interpolation threshold.}{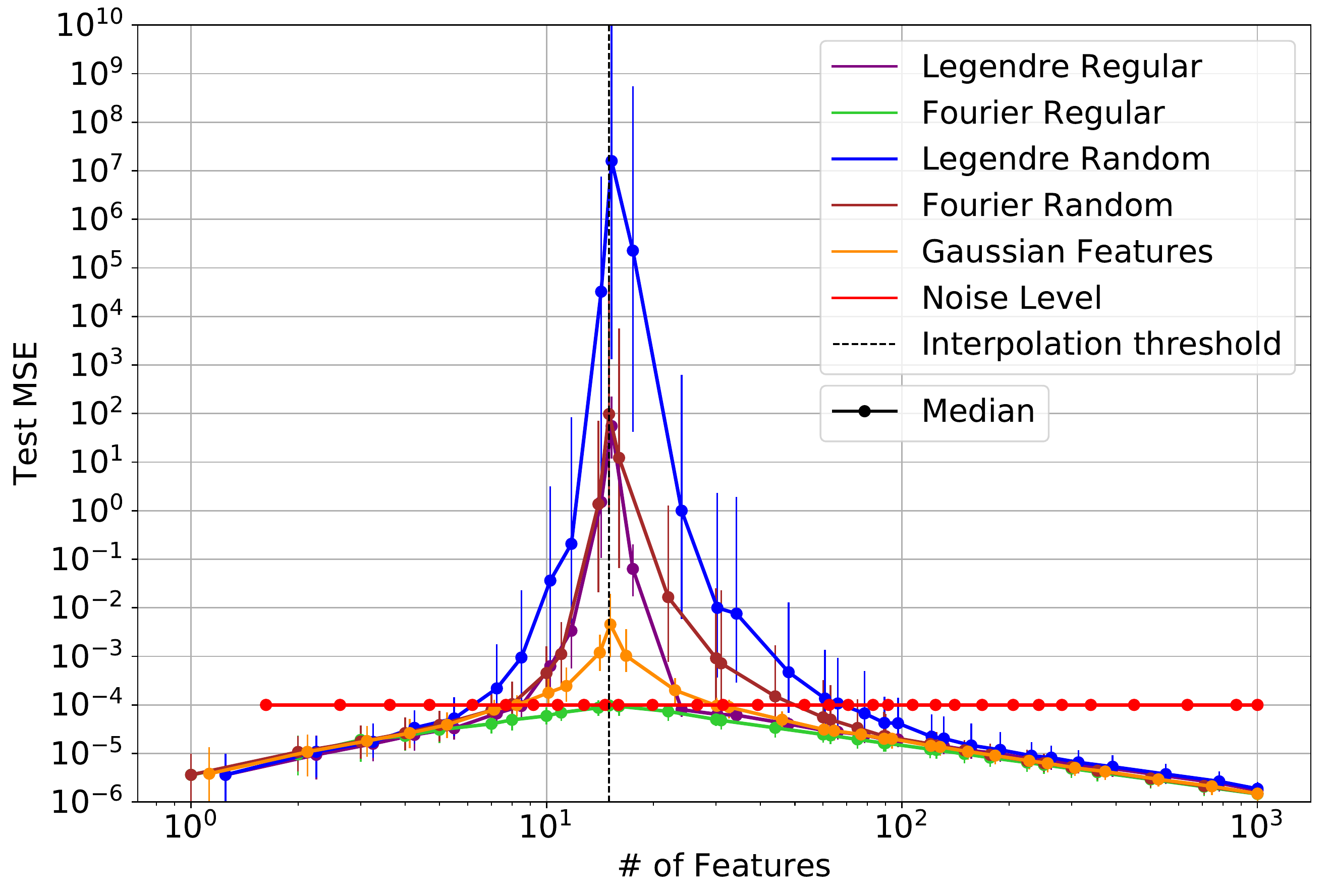}
}

From Corollaries~\ref{cor:fundamentalprice} and~\ref{cor:crabpot}, we saw that the fundamental price of interpolation of noise on test MSE scales as $\Theta\left(\frac{n}{d}\sigma^2\right)$, where $n$ is the number of training samples, $d$ is the number of features, and $\sigma^2$ is the noise variance. 
Looking at Figure~\ref{fig:fourier_paradigmatic_all}, we see that when $d$ is large, this is also the scaling that is achieved by the case of regularly spaced data points with Fourier features. 
This case, as an easy-to-understand paradigmatic example, provides a clear lens into understanding what is happening conceptually \textit{for $\ell_2$-minimizing solutions}\footnote{What we will use is the exact presence of exactly \textit{aliased} higher-frequency features corresponding to any low-frequency features.
This will give extremely clean expressions for the adverse effect that $\ell_2$-minimization has on recovering a signal with low-frequency components.
It is not as simple to understand the effect of sparsity-seeking methods, which require more generic properties like restricted isometry to work (although such properties have also been established for such sub-sampled Fourier matrices under nonuniform random sampling~\cite{rauhut2010compressive}).
}.
It is first useful see how the minimum-$\ell_2$-norm interpolator actually behaves for two contrasting examples.

\begin{example}[Standard Gaussian features, $k = 500$-sparse signal]\label{eg:standardgaussian}
We consider $d$-dimensional iid standard Gaussian features, i.e. Example~\ref{eg:gaussian} with $\Sigmabold = \mathbf{I}_d$.
In other words, the features $\{a(\Xvec)_j\}_{j=1}^d$ are iid and distributed as $\NORMAL(0,1)$.
Let the first $500$ entries of the true signal $\alphastar$ be non-zero and the rest be zero, i.e. $\supp(\alphastar) = [500]$.
We take $n=5000$ measurements, each of which is corrupted by Gaussian noise of variance $\sigma^2 = 0.01$. 
\end{example}

Figure~\ref{fig:RG_sparse_MSE} shows the test MSE of the minimum-$\ell_2$-norm interpolator on Example~\ref{eg:standardgaussian} as a function of the number of features $d$ for various choices of noise variance $\sigma^2$.
We immediately notice that the test MSE is converging to the same level as the test error from simply using a hypothetical $\alphabold = \mathbf{0}$.
This property of generalization error degenerating to that of simply predicting $0$ was also pointed out in \cite{hastie2019surprises} where this level was called out as the ``null risk.''

\begin{example}[Standard Gaussian features plus a constant, constant signal]\label{eg:wiggly}
In this example, we consider $d$-dimensional iid Gaussian features with unit mean and variance equal to $0.01$.
More precisely, the features $\{a(\Xvec)_j\}_{j=1}^d$ are iid and distributed as $\NORMAL(1,0.01)$.
We also assume the generative model for the training data:
\begin{align}\label{eq:wigglemodel}
Y = 1 + W
\end{align}
where as before, $W \sim \NORMAL(0, \sigma^2)$ is the observation noise in the training data, and we pick $\sigma^2 = 0.01$.
Note that in this example the true ``signal" is the constant $1$, which is not \textbf{exactly} expressible as a linear combination of the $d$ Gaussian features.
We take $n = 10$ noisy measurements of this signal.
\end{example}

\begin{figure}[htbp]
  \centering
  \includegraphics[width=0.6\textwidth]{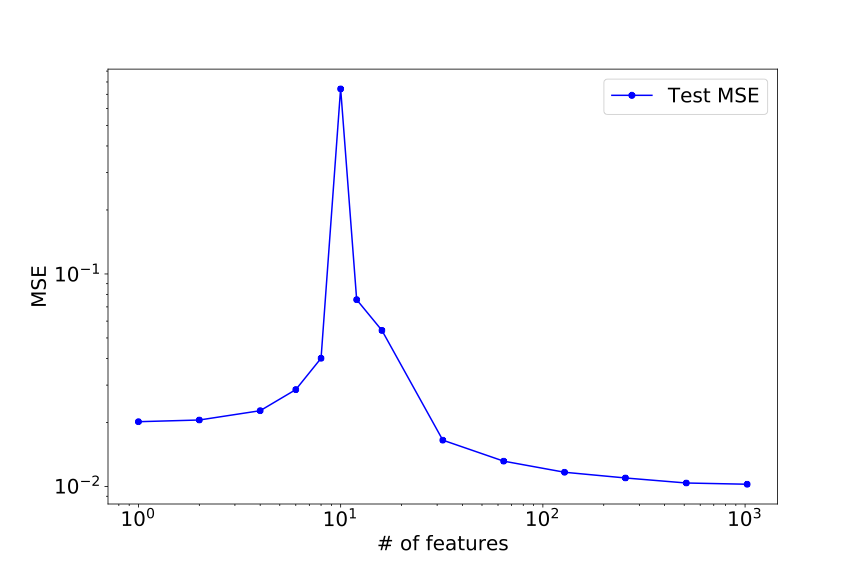}
  \caption{Log-log plot for test MSE for the min 2-norm interpolator (ordinary-least-squares to the left of the peak) vs the number of features for i.i.d. Gaussian features $\sim \mathcal N (1,0.01)$. Notice the clear double descent behavior when $d > n$. Here $n=10$ and the true signal is the constant $1$. This is Example~\ref{eg:wiggly}.} 
  \label{fig:wiggly}
\end{figure}

Example~\ref{eg:wiggly} is directly inspired by feature models\footnote{A more general version of this example would replace the constant $1$ in the means of the features by the relevant realizations of an underlying latent Gaussian random feature vector with the true signal being that latent Gaussian feature vector. 
The qualitative behavior of double-descent will be retained, and we defer a formal discussion of this to a forthcoming paper.
}  in several recent papers~\cite{belkin2019two, belkin2018reconciling,bartlett2019benign}, as well as having philosophical connections to other recent papers~\cite{mei2019generalization, hastie2019surprises}. 
Figure~\ref{fig:wiggly} shows the performance of the minimum-$\ell_2$-norm interpolator on Example~\ref{eg:wiggly} as we increase the number of features. 
Here, we clearly see the double-descent behavior as the test MSE of the minimum-$\ell_2$-norm interpolator decreases with increased overparameterization. 
Note from Equation~\eqref{eq:wigglemodel} that the true signal is not exactly representable by a linear combination of the random Gaussian features, and in fact Example~\ref{eg:wiggly} is an instance of the linear model being misspecified for any (finite) number of features $d$.
Improved approximability from adding more features to the model partially explains the double descent behavior, but it is not the main reason.
It turns out that we would still see the double-descent behavior with the minimum-$\ell_2$-norm interpolator if we added another feature that was always the constant $1$ (in which case the example would fall under the $(1,\sigma^2)$-sparse linear model; see Definition~\ref{def:sparse}.)

The minimum-$\ell_2$-norm interpolator generalizes well for Example~\ref{eg:wiggly}, showing double descent behavior -- but extremely poorly for Example~\ref{eg:standardgaussian}.
Why does one case fail while the other case works?  
Bartlett, Long, Lugosi and Tsigler~\cite{bartlett2019benign} give an account of what is happening directly using the language of random matrix theory.
Their paper defines a distinct pair of ``effective ranks" using the spectrum of the covariance matrix $\Sigmabold$ to state their necessary and sufficient conditions for this interpolator generalizing well.
Classic core concepts in signal processing provide an alternative lens to understand these conditions. 
To use the Fourier-theoretic lens, we will naturally map the number of \textit{regularly spaced} training samples $n$ to what is called the \textit{sampling rate} in signal processing, and the number of features $d$ to what is called the \textit{bandwidth} in signal processing. 
What we call overparameterization is what is called potential \textit{undersampling} in the signal processing literature.



\subsection{Aliasing --- the core issue in overparameterized models}

We have mapped overparameterization to undersampling of a true signal.
The fundamental issue with undersampling of a signal is one of identifiability: infinitely many solutions, each of which correspond to different signal functions, all happen to agree with each other on the $n$ regularly spaced data points.
These different signal functions, of course, disagree everywhere else on the function domain, so the true signal function is not truly reconstructed by most of them.
This results in increased test MSE when such an incorrect function is used for prediction.
Such functions that are different, but agree on the sampled points, are commonly called \textit{aliases} of each other in signal processing language.
Exact aliases naturally appear when the features are Fourier, as we see in the below example.
\begin{example}[Fourier features with a constant signal]\label{eg:fourieralias}
Denote $i := \sqrt{-1}$ as the imaginary number.
Consider the Fourier features as defined in complex form in Example~\ref{eg:fourier} and regularly spaced input on the interval $[0,1)$, i.e. $x_j = \frac{j-1}{n}$ for all $j \in [n]$.

Suppose the true signal is equal to $1$ everywhere and the sampling model in the absence of noise is 
	\begin{align}\label{eq:constantsignal}
	Y_j = 1  \text{ for all } j \in [n] ,
	\end{align}

The estimator has to interpolate this data with some linear combination of Fourier features\footnote{Why are we using complex features for our example instead of the real sines and cosines? Just because keeping track of which feature is an alias of which other feature is less notationally heavy for the complex case. The essential behavior would be identical if we just considered sines and cosines.} $f_k(x) = e^{i 2 \pi k x}$ for $k = 0 \ldots d$. 

A trivial signal function that agrees with Equation~\eqref{eq:constantsignal} at all the data points is the first (constant) Fourier feature: $f_0(x) = e^{i 2 \pi (0) x} = 1$.
It is, however, not the only one.
The complex feature $f_n(x) = e^{i 2 \pi (n) x}$ will agree with $f_0(x) = 1$ on \textit{all} the regularly spaced points $\{x_j\}_{j=1}^n$ by the cyclic property of complex Fourier features (i.e. we have $e^{i 2\pi (n) \frac{j-1}{n}} = e^{i 2 \pi (j-1) } = 1 = f_0(x)$).
This is similarly true for features $f_{ln}(x)$ for all $l = 1, 2, \ldots, \frac{d}{n} - 1$, and we thus have $\frac{d}{n}-1$ \textbf{exact aliases}\footnote{These aliases are essentially higher frequency sinusoids that look the same as the low frequency one when regularly sampled at the rate $n$. This is the classic ``movie of a fan under a strobelight'' visualization where a fan looks like it is stationary instead of moving at a fast speed!} of the true signal function $f_0(x)$ on the regularly spaced data points.
\end{example}

The above property is not unique to the constant function $f_0(x)$: for any true signal function that contains the complex sinusoid of frequency $k^* \in [n]$, i.e. $f_{k^*}(x) = e^{i 2 \pi (k^*) x}$, the one-complete-cycle signal function $f_{k^* + n} = e^{i 2\pi (k^* + n) x}$ again agrees on the $n$ regularly spaced data points, and for this signal function we again have the $\frac{d}{n}$ exact aliases $f_{k^* + ln}$ for all $l = 1, 2, \ldots, \frac{d}{n} - 1$.

The presence of these aliases will naturally affect signal reconstruction.
Before discussing this issue, we show an \textit{advantage} in having aliases: they naturally \textit{absorb} the noise that can harm generalization.
Critically, as defined in Example~\ref{eg:fourier}, the Fourier features are orthonormal to each other in complex function space (where the integral that defines the complex inner product is taken with respect to the uniform measure on $[0,1)$).
If we used only the first $n$ Fourier features (i.e. $f_k(x)$ upto $k = n - 1$) to fit an $n$-dimensional pure noise vector (as described by $\{Y_j = W_j\}_{j=1}^n$), the coefficients of the $n$-dimensional fit, i.e. $\{\widehat{\alpha}_k\}_{k=1}^n$, would directly correspond to the appropriate discrete Fourier 
transform (DFT)\footnote{The convention for defining the DFT depends on the chosen normalization. The symmetric/unitary DFT can be viewed as choosing the orthonormal basis vectors given by $\frac{1}{\sqrt{n}}e^{2\pi (k) ix}$ evaluated at $n$ regularly spaced points from $[0,1)$. The classic DFT is defined by a basis with a different scaling --- namely $\frac{1}{n}$ instead of $\frac{1}{\sqrt{n}}$. This results in the classic DFT having a factor of $\frac{1}{n}$ in the inverse DFT. We choose the convention for the DFT which normalizes \textit{in the opposite direction}. The basis vectors are just the un-normalized $e^{2\pi (k) ix}$ evaluated at $n$ regularly spaced points from $[0,1)$. We do not want any scaling factors in the relevant inverse DFT because we want to get the coefficients of the Fourier features.} of the $n$-dimensional noise vector.
By the appropriate\footnote{The reader can verify that the normalization convention we have chosen for the DFT implies $\vecnorm{\alphahat}{2}^2 = \frac{1}{n} \vecnorm{\Wtrain}{2}^2$.}  Parseval's relation in signal processing, the expected total energy in the feature domain, i.e. $\EE\left[\vecnorm{\alphahat}{2}^2\right]$ would be $\sigma^2$, and moreover (due to the isotropy of white/independent Gaussian noise), this energy would be equally spread across all $n$ Fourier features in expectation.
That is, for every $k \in [n]$ we would have $\EE\left[|\widehat{\alpha}_k|^2\right] = \frac{\sigma^2}{n}$. 

Now, consider what happens when we include the $\frac{d}{n} -1$ higher-frequency aliases corresponding to each lower frequency component $k \in [n]$.
This gives us $d$ Fourier features in total, and we now consider the minimum-$\ell_2$-norm interpolator of noise using all $d$ features.
The following is what will happen:
\begin{enumerate}
\item In an effort to minimize $\ell_2$-norm, the coefficient (absolute) values will be equally \textit{divided}\footnote{The reader who is familiar with wireless communication theory will be able to relate this to the concept of coherent combining gain.} among the $\frac{d}{n}$ aliased features for every realization of the noise samples, i.e. $|\widehat{\alpha}_{k + ln}| = |\widehat{\alpha}_k| \text{ for all } l \in \{1,2,\ldots,\frac{d}{n} - 1\}$ and for all $k \in [n]$.
\item For each $k \in [n]$, the expected \textit{total} contribution from the low frequency feature $k$ and its aliases is now reduced to $\left(\frac{1}{\left(\frac{d}{n}\right)}\right) \cdot \frac{\sigma^2}{n} = \frac{\sigma^2}{d}$.
This results in total $\EE\left[\vecnorm{\alphahat}{2}^2\right] = \frac{n}{d} \sigma^2$.
\end{enumerate}

For this case, we have zero signal and thus the test MSE for the (whitened) Fourier features is exactly $\vecnorm{\alphahat}{2}^2$.
Thus, we have \textit{exactly} recovered the $\Theta\left(\sigma^2\frac{n}{d}\right)$ scaling for the ideal MSE that we \textit{bounded} in Corollaries~\ref{cor:fundamentalprice} and~\ref{cor:crabpot}.
The aliases, when used with minimum-$\ell_2$-interpolation of noise, are literally dissipating noise energy, thus directly reducing its potentially harmful effect on generalization in the average\footnote{It is also clear that the average case might be very different than the worst case --- a phenomenon intimately connected to the fundamental issue of adversarial examples on neural networks that empirically generalize well.} case.

\subsubsection{Avoiding signal ``bleed"}

\multifigureexterior{fig:signal_bleed}{Depiction of signal components ``bleeding" out into spurious features as a result of using the minimum-$\ell_2$-norm interpolator. The ``bleeding" has two effects: lower ``survival" of signal in the original true features, and higher ``contamination" by spurious features.}{
\subfigl{0.5\textwidth}{fig:fourier_bleed}{Illustration of the ``bleed".}{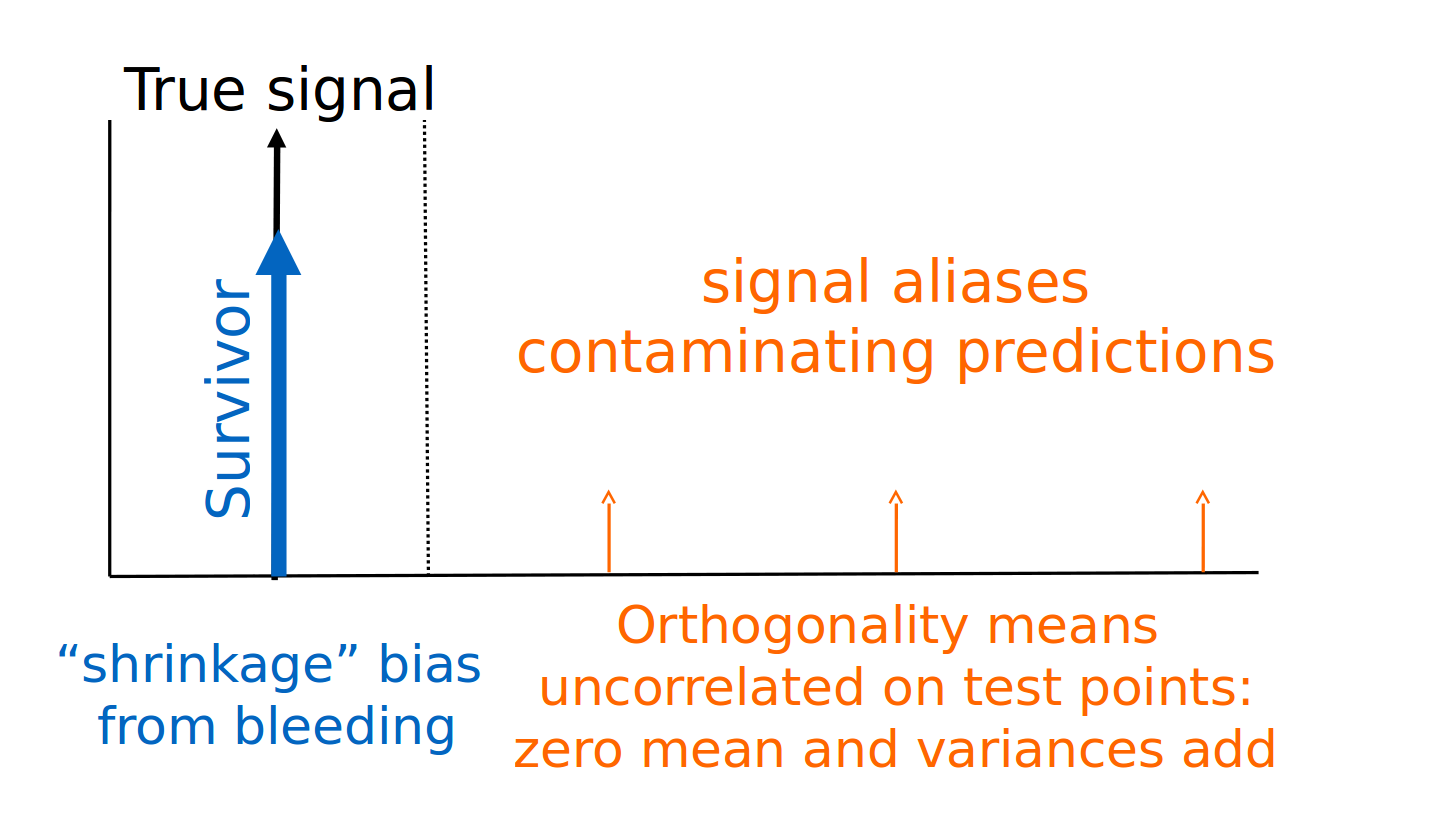}
\subfigl{0.4\textwidth}{fig:gaussian_bleed}{Plot of estimated signal components of minimum-$\ell_2$-interpolator for iid Gaussian features. Here, $n = 5000$, $d = 30000$ and the true signal $\alphastar$ has non-zero entries only in the first $500$ features.}{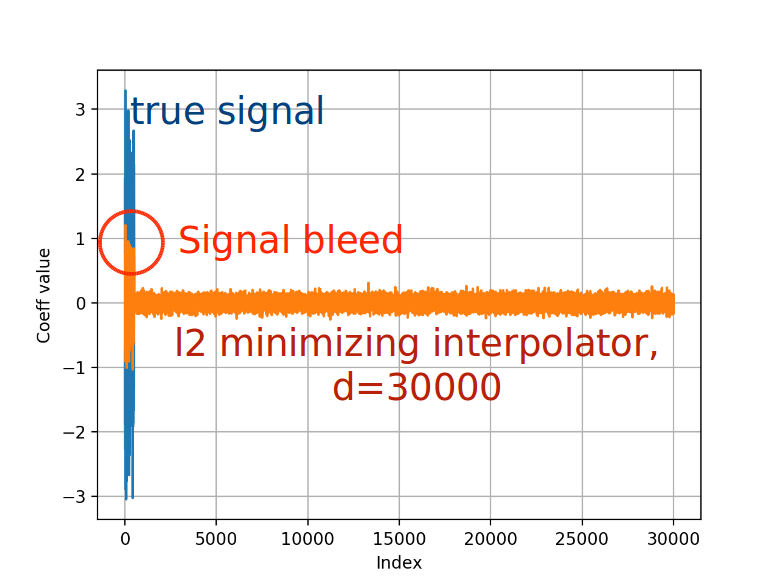}
} 
The problem with $\ell_2$-minimizing interpolation is that the above effect of absorbing and dissipating training label energy is generic, whether those labels are signal or noise.
Whereas being able to absorb and dissipate training harmful noise energy is a good thing, the same exact effect is harmful when there is true signal.
Suppose, as in Equation~\eqref{eq:constantsignal}, that the true signal was a constant (thus the only true frequency component is $k = 0$). 
A simple calculation shows that the estimated coefficient of the true function would also be attenuated in exactly the same way, and the absolute value of the coefficient corresponding to the constant feature (i.e. $|\widehat{\alpha}_0| = \frac{n}{d}$) decays to $0$ as $d \to \infty$.
True signal energy, which should ideally be concentrated in the constant feature, is literally \textit{bleeding} into its aliases in the inference by the $\ell_2$-norm-minimizing interpolating solution. 
This is what we are seeing in the scaling of the test MSE of the $\ell_2$-minimizing interpolator for iid Gaussian features (Example~\ref{eg:standardgaussian}, shown in Figure~\ref{fig:RG_sparse_MSE}) as well as the convergence of the test MSE of the minimum-$\ell_2$-norm interpolator to the ``null risk" proved by Hastie, Tibshirani, Rosset and Montanari~\cite{hastie2019surprises}.
An illustration of this bleeding effect is provided in Figure~\ref{fig:fourier_bleed}, and the realization of this effect on actually recovered signal components for Example~\ref{eg:standardgaussian} is shown in Figure~\ref{fig:gaussian_bleed}.

The asymptotic bleeding of signal is a generic issue for whitened features more generally (see~\cite[Lemma $2$]{hastie2019surprises}).
This may seem to paint a hopeless picture for the $\ell_2$-minimizing interpolator even in the absence of noise -- how, then, can it ever work?
The key is that we can rescale, and mix, the underlying whitened features to give rise to a \textit{transformed feature family}, with respect to which we seek an interpolating solution that minimizes the $\ell_2$-norm of the coefficients \textit{corresponding to these transformed features.}
The test MSE of such an interpolator will of course be different from the minimum-$\ell_2$-norm interpolator using whitened features.
The effective difference arises only through the effective rescaling of the whitened features through this transformation: the manifestation of the rescaling can be explicit (the features $\{a(\Xvec)_j\}_{j=1}^d$ can be visibly scaled by weights $w_j := \sqrt{\lambda_j}$ for some $\lambda_j > 0$) or implicit (the eigenvalues of the covariance matrix\footnote{Bartlett, Long, Lugosi and Tsigler~\cite{bartlett2019benign} present their results through this implicit viewpoint, but their analysis essentially reduces to the explicit viewpoint after a transformation in the underlying geometry.} $\Sigmabold$ corresponding to the transformed features $a(\Xvec)_j$ correspond to the squared weights $\lambda_j$).

Consider Example~\ref{eg:fourieralias} and rescaling $w_k > 0$ for Fourier feature $f_k$.
For a set of coefficients $\alphabold \in \reals^d$ corresponding to the original whitened features $\{f_k\}_{k=0}^{d-1}$, we denote the corresponding coefficients for the rescaled features by $\betabold \in \reals^d$.
Then, the minimum-$\ell_2$-norm interpolator with respect to the rescaled feature family is as below, for any output $\{Y_j\}_{j=1}^n$:
\begin{align*}
\betahat &:= {\arg \min} \vecnorm{\betabold}{2} \text{ subject to } \\
\sum_{k=0}^{d-1} \beta_k w_k f_k(x_j) &= Y_j \text{ for all } j \in [n] .
\end{align*}

This would recover equivalent coefficients $\alphahat$ for the minimum-\textit{weighted}-$\ell_2$-norm interpolator of the data with weight $\frac{1}{w_k}$ corresponding to feature $k$, as below:
\begin{align}\label{eq:hilbertnorm}
\alphahat &:= {\arg \min} \sum_{k=0}^{d-1} \frac{\alpha_k^2}{w_k^2} \text{ subject to } \\
\sum_{k=0}^{d-1} \alpha_k f_k(x_j) &= Y_j \text{ for all } j \in [n] .
\end{align}

Now, consider the case of a constant signal without noise, i.e. $Y_j = 1$ for all $j \in [n]$. 
We already saw that the true signal function, which is $f_0(x)$, satisfies $f(x_j) = 1$ for all $j \in [n]$, as does each of its $\left(\frac{d}{n} - 1\right)$ aliases $\{f_{ln}(x)\}$ for $l = 1,1,\ldots, \frac{d}{n} - 1$.
Thus, the coefficients of $\alphahat$ will be the unique linear combination of the aliases, with coefficients represented by $\{\widehat{\alpha}_{ln}\}$, that minimizes the \textit{re-weighted} $\ell_2$-norm subject to the sum of such coefficients being exactly equal to $1$ (to satisfy the interpolation constraint).
The special case of whitened features corresponds to $w_k = 1$ for all $k \in [d]$, and this intuitively results all aliases contributing equally to the recovered signal function.
What happens with non-uniform weights: in particular, what happens when $w_k$ decreases as a function of frequency $k$?
Intuitively, the weighted-$\ell_2$-norm objective implies that higher-frequency aliases are \textit{penalized more}, and thus a solution would favor smaller coefficients $\widehat{\alpha}_{ln}$ for higher integral values of $l$.
In fact, Appendix~\ref{app:fourier} shows by the principle of matched filtering that the $\ell_2$-minimizing coefficients are precisely

\begin{align}\label{eq:l2minimizingbeta}
\widehat{\alpha}_{ln} = \frac{w^2_{ln}}{V} \text{ where } V := \sum_{l=0}^{d/n - 1} w_{ln}^2 \text{ for all } l = 0,1,\ldots,\frac{d}{n} - 1 .
\end{align}

Since the true \textit{constant} signal is represented by coefficients $\alpha^*_0 = 1$ and zero everywhere else, we are particularly interested in the absolute value of $\widehat{\alpha}_0$: how much of the \textit{true signal component} have we preserved?
Then, the simple explicit calculation in Appendix~\ref{app:fourier} shows that this ``survival factor" is essentially\footnote{
This survival factor can also be understood as the outcome of a competition between two functions. The true signal $f_0$ that has squared weight $w_0^2$, and the most attractive orthogonal alias whose squared weight is $\sum_{l=1}^{\frac{d}{n}} w_{ln}^2$. The minimum 2-norm interpolator will pick a convex combination of the two by minimizing $\frac{\gamma^2}{w_{0}^2} + \frac{(1-\gamma)^2}{\sum_{l=1}^{\frac{d}{n}} w_{ln}^2}$ where $\gamma$ is the survival factor of the true feature. This is minimized by the answer given here for $\gamma = \mathsf{SU}$.}
\begin{align}\label{eq:survival}
\mathsf{SU} := \frac{\widehat{\alpha}_0}{\alpha^*_0} = \frac{w_0^2}{\sum_{l=0}^{\frac{d}{n}-1} w_{ln}^2} .
\end{align}

The \textit{inverse} of the survival factor $\mathsf{SU}$, after substituting $\lambda_k := w_k^2$, is very closely related to the first ``effective rank" condition introduced by Bartlett, Long, Lugosi and Tsigler to characterize the \textit{bias} of the minimum-$\ell_2$-norm interpolator in \cite{bartlett2019benign}.
Clearly, the survival factor intimately depends on the {\em relative weights} placed on different frequencies, how many frequencies there are in consideration, and how many perfect aliases there are (the number of aliases is inversely proportional to the number of training samples $n$).  It is illustrative to rewrite the survival factor $\mathsf{SU}$ as 
\begin{align} \label{eq:survivalBode}
\mathsf{SU} := \frac{1}{1 + \frac{\sum_{l=1}^{\frac{d}{n}-1} w_{ln}^2}{w_0^2}}.
\end{align}

Equation~\eqref{eq:survivalBode} is in a form reminiscent of the classic signal-processing ``one-pole-filter transfer function''. 
What matters is the relative weight of the favored feature $w_0$ to the combined weight of its competing aliases. 
As long as it is relatively high, i.e. $w_0^2 \gg \sum_{l=1}^{\frac{d}{n}-1} w_{ln}^2$, the true signal will survive. 
So in particular, if the weights are such that the sum $\sum_{l=1}^{\frac{d}{n}-1} w_{ln}^2$ converges even as the number of features grows, the true signal will at least partially survive even as $\frac{d}{n} \to \infty$. 
Meanwhile, if the sum $\sum_{l=1}^{\frac{d}{n}-1} w_{ln}^2$ diverges {\bf and does so faster than $w_0^2$}, the signal energy will completely \textit{bleed} out into the aliases (as happens for the whitened case $w_k = 1$ for all $k$).

This need for the relative weight on the true features to be high enough relative to their aliases is something that must hold true before any training data has even been collected.  In other words, the ability of the 2-norm minimizing interpolator to recover signal is fundamentally restricted. There needs to be a low-dimensional subspace (low frequency signals in our example) that is heavily favored in the weighting, and moreover the true signal needs to be well represented by this subspace. The weights essentially encode an \textit{explicit strong prior}\footnote{This is in stark contrast to feature selection operators like the Lasso, which select features in a \textit{data-dependent} manner.} that favors low-frequency features.


We can now start to understand the discrepancy between Examples~\ref{eg:standardgaussian} and~\ref{eg:wiggly}.
There is no prior effect favoring in any way the first $500$ features for Example~\ref{eg:standardgaussian}.
However, by their very nature the features used in Example~\ref{eg:wiggly} heavily (implicitly, when the eigenvalue decomposition of $\Sigmabold$ is considered\footnote{In fact, this very case is evaluated in~\cite[Corollary $1$]{hastie2019surprises}.}) favor the constant feature that best explains the data. 
This is because the maximal eigenvector of $\Sigma$ is a ``virtual feature" that is an average of the $d$ explicit features, i.e. its entries are iid $\NORMAL(1, \frac{0.01}{d})$.
This better and better approximates the \textit{constant} feature, the true signal, as $d$ increases -- and this improved approximability is the primary explanation for the double descent behavior observed in Figure~\ref{fig:wiggly}.

\multifigureexterior{fig:Spiked_l2}{Effect of different priors on weighted $\ell_2$ norm interpolation with $n=500, d = 11000$ when the true signal is the sign function.  The learned function approximates the true signal well when we have a strong prior on the low frequency features but suffers from signal bleed as weaken the prior.}{
\subfig{0.4\textwidth}{Strong prior- feature weights}{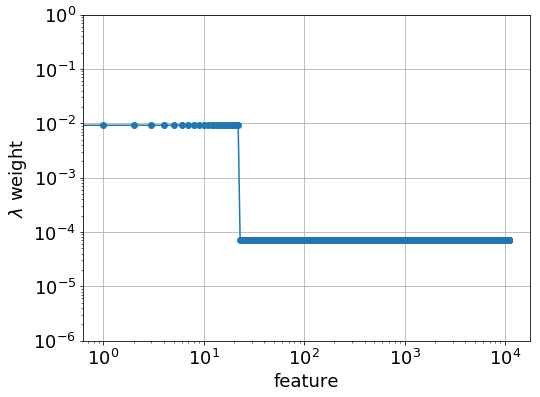}
\subfig{0.55\textwidth}{Strong prior- learned function}{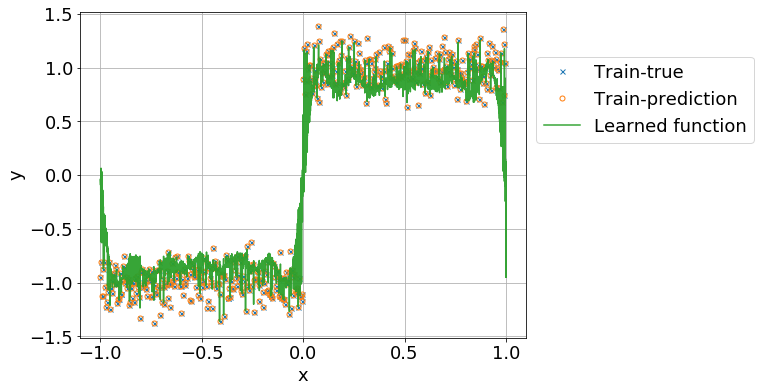}

\subfig{0.4\textwidth}{Medium prior- feature weights}{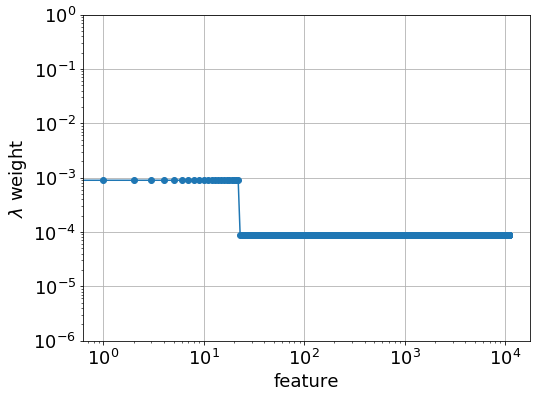}
\subfig{0.55\textwidth}{Medium prior- learned function}{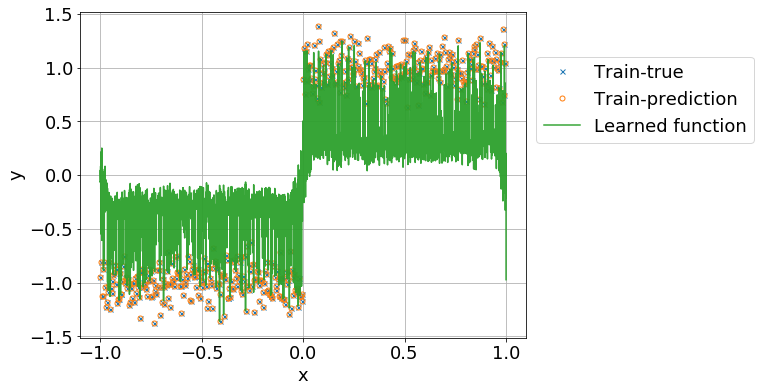}

\subfig{0.4\textwidth}{Weak prior- feature weights}{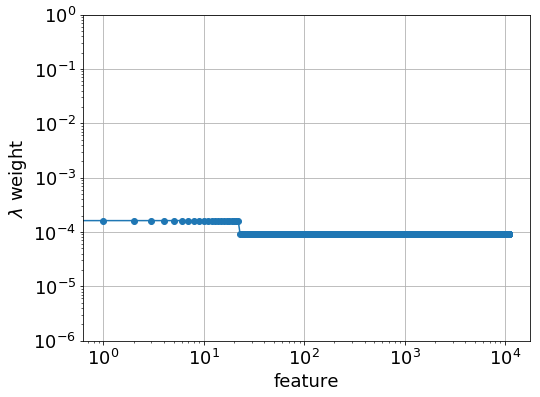}
\subfig{0.55\textwidth}{Weak prior - learned function }{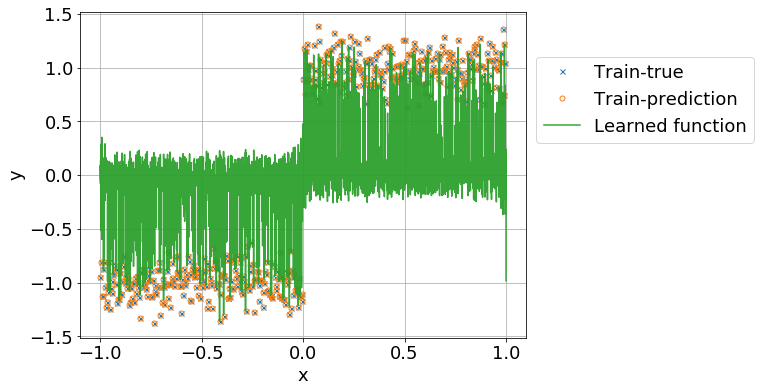}
} 

In Figure~\ref{fig:Spiked_l2}, we illustrate how changing the level of the prior weights impacts interpolative solutions using Fourier features for the simple case of a sign function. Here, there is noise in the training data, but the results would look similar even if there were no training noise --- the prior weights are primarily fighting the tendency of the interpolator to bleed signal.

\subsubsection{Avoiding signal contamination}



We have seen that a sufficiently strong prior in a low-dimensional subspace of features avoids the problem of asymptotically bleeding too much of the signal away --- as long as the true signal is largely within that subspace. But what happens when some of the true signal is bled away? How does this impact prediction beyond shrinking the true coefficients? Furthermore, the issue of signal bleed does not by itself answer the question of consistency, particularly with the additional presence of noise. How does the strong prior affect fitting of noise -- is it still effectively absorbed by the aliases, as we saw when the features were whitened?
sTo properly understand this, we need to introduce the idea of ``signal contamination.''


Consider Example~\ref{eg:fourieralias} now with the \textit{constant-signal-plus-noise} generative model for data:
\begin{align} \label{eq:noisesample}
Y_j = 1 + W_j \text{ for all } j \in [n] .
\end{align}

The output energy (signal as well as noise) bleeds away from the true signal component corresponding to Fourier feature $0$ -- but because we are exactly interpolating the output data, the energy has to go somewhere.
As a result, all energy that is bled from the true feature will go into the aliased features $\{f_{ln}\}_{l=1}^{d/n - 1}$.
Each of these features contributes uncorrelated zero-mean unit-variance errors on a test point, scaled by the recovered coefficients $\{\widehat{\alpha}_{ln}\}$. 
Because they are uncorrelated, their variances add and we can thus define the contamination factor
\begin{align*}
C := \sqrt{\sum_{l=1}^{d/n - 1} \widehat{\alpha}_{ln}^2} .
\end{align*}

Even if there were no noise, the test MSE would be at least $C^2$. Consequently, it is important to verify that $C \to 0$ as $(d,n) \to \infty$.


A straightforward calculation (details in Appendix~\ref{app:fourier}), again through matched-filtering, reveals that the absolute value of the coefficient on aliased feature $ln$ is directly proportional to the weight $w_{ln}$ and the original true signal strength. Thus contamination (measured as the standard-deviation, rather than the variance in order to have common units), like signal survival, is actually a factor
\begin{align}\label{eq:contaminationfinal}
C = \frac{\sqrt{\sum_{l=1}^{d/n - 1} w_{ln}^2}}{w_0^2 + \sum_{l=1}^{d/n - 1} w_{ln}^2}
\end{align}
for the minimum-weighted-$\ell_2$-norm interpolator corresponding to weights $\{w_k\}_{k=1}^d$.
Substituting $\lambda_k := w_k^2$ results in an error scaling that is very reminiscent of Bartlett, Long, Lugosi and Tsigler's second effective-rank condition.
Thus, we see that the two notions of effective ranks\cite{bartlett2019benign} correspond to these factors of survival and contamination, which Bartlett, Long, Lugosi and Tsigler sharply characterize for Gaussian features using random matrix theory.
The effective ``low-frequency features" there represent directions corresponding to the dominant eigenvalues of the covariance matrix $\Sigmabold$.

There is a tradeoff: while we saw earlier that the weight distribution described here should somewhat favor low-frequency features (dominant eigenvalue directions), it cannot put too \textit{little weight} on higher-frequency features either. 
If that happens, the bleeding prevention conditions can be met for a true signal that is in the appropriate low-dimensional subspace. 
But the noise will give rise to nonvanishing contamination and the variance of the prediction error will not go to $0$ as $(n,d) \to \infty$ -- then, the minimum-$\ell_2$-norm interpolator is inconsistent.
Equation~\eqref{eq:contaminationfinal} tells us that the contamination $C$ will be sufficiently small to ensure consistency \textit{iff}:
\begin{enumerate}
\item The weights $\{w_{ln}\}_{l \geq 1}$ decay \textit{slowly enough} so that the sum of squared alias-weights $\sum_{l=1}^{d/n-1} w_{ln}^2$ \textit{diverges}.
This means that there is sufficient \textit{effective} overparameterization to ensure harmless noise fitting.
\item If the sum of squared alias-weights $\sum_{l=1}^{d/n-1} w_{ln}^2$ does not diverge, the term $w_0^2$ must dominate this sum in the dominator. 
Then, we also need $w_0^2 \gg \sqrt{\sum_{l=1}^{d/n-1} w_{ln}^2}$ so that the denominator dominates the numerator.
\end{enumerate}
Clearly, avoiding non-zero contamination is its own condition, which is \textit{not} directly implied by avoiding bleeding. 

To get consistency, it must be the case that the contamination goes to zero with increasing $n,d$ for everywhere that has true signal as well as an asymptotically complete fraction of the other frequencies. 
If contamination doesn't go to zero where the signal is, the test predictions will experience a kind of non-vanishing self-interference from the true signal. If it doesn't go to zero for most of where the noise is, then that noise in the training samples will still manifest as variance in predictions. 

It is instructive to ask whether the above tradeoff in maximizing signal ``survival'' and minimizing signal ``contamination'' manifests as a clean bias-variance tradeoff~\cite{belkin2018reconciling}.
The issue is that the contamination can arise through signal and/or noise energy.
The fraction of contamination that comes from true signal is mathematically a kind of variance that behaves like traditional bias --- it is an approximation error that the inference algorithm makes even when there is no noise. 
The fraction of contamination that comes from noise is indeed a kind of variance that behaves like traditional variance --- it would disappear if there were no noise in training data. 

\subsubsection{A filtering perspective on interpolation}

\multifigureexterior{fig:strong_prior}{Weighted $\ell_2$ norm interpolation for regularly spaced Fourier features with a strong prior on low frequency features. }{
\subfigl{0.45\textwidth}{fig:strong_prior_kernel}{ Pulse shaping kernel for $n = 50, d = 354$.}{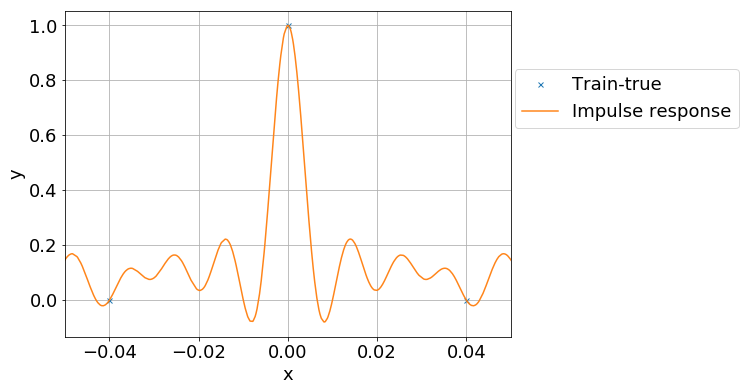}
\subfigl{0.45\textwidth}{fig:strong_prior_survival_contamination}{Flter view on the ``survival'' and contamination for $n = 500, d = 11000$.}{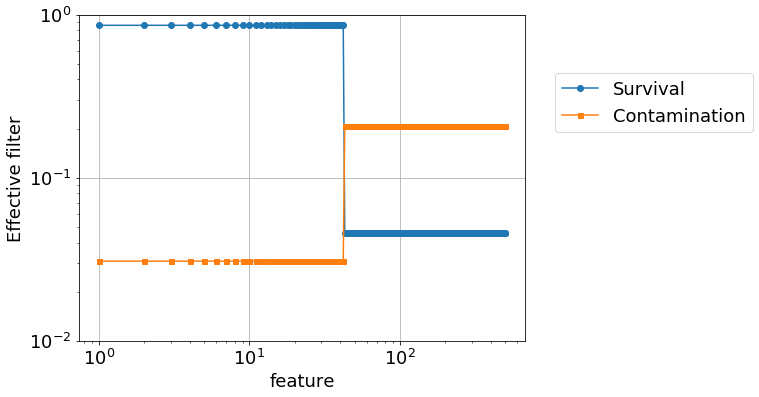}
} 

Returning to the case of Fourier features with regularly spaced training points, we can see that given the weightings $w_i$ on all the features, we can break the features into cohorts of perfect aliases. All the features are orthogonal (vis-a-vis the test distribution) and because of the regular sampling, each cohort is orthogonal to every other cohort even when restricted to the $n$ sample points. Consequently, we can understand the bleeding within each of the cohorts separately. Moreover, if we assume that the true signal is going to be low-frequency\footnote{This is just for simplicity of exposition and matching the standard machine learning default assumption that all things being equal, we prefer a smoother function to a less smooth function. If the weighting were different, then we could just as well redo this story looking at the highest-weight member of the alias cohort.}, then we can think about how much the lowest frequency representative of each cohort bleeds. This can be expressed in terms of the survival $0 \leq \mathsf{SU}(k^*) \leq 1$ for that low-frequency feature $k^*$ when using the weighted minimum 2-norm interpolator.  These $\{\mathsf{SU}(k^*)\}_{k=0}^{d-1}$ together can be viewed as a filter. This filter tells us how much the act of sampling and estimating attenuates each frequency in the true signal. This attenuation is clearly a kind of ``shrinkage.''

With the filtering perspective, we can immediately see that for the minimum-weighted-$\ell_2$-norm interpolator to succeed in recovering the signal, the true signal needs to be well approximated by the low-frequency features $\{k^*\}$ for which  $\mathsf{SU}(k^*) \approxeq 1$ -- otherwise the true pattern will be substantially bled away. 
We also see that to be able to substantially absorb/dissipate the noise energy (which is going to be spread equally across these $n$ cohorts by the isotropic property of white Gaussian noise), it must be the case that most of the survival coefficients $\{\mathsf{SU}(k^*)\}_{k=0}^{d-1}$ are quite small --- most of the noise energy needs to be bled away. 
As we tend $(n,d)$ to infinity, we can quantify the required conditions for consistency.
In the ``continuous time" setting, as $n$ is increasing, the continuous-time frequency (that corresponds to the ``fastest" feature) is growing with $n$. 
So, as long as the maximal value of this frequency $k^*$ for which the signal would survive (i.e. $\mathsf{SU}(k^*) \approxeq 1$) grows \textit{sub-linearly}\footnote{If this is reminiscent of the conditions discussed when one considers Nadaraya-Watson kernel estimation in nonparametric statistics, this is no coincidence as \cite{belkin2018does} points out clearly.} in $n$, {\bf and} the set of frequencies for which the signal would ``bleed out" (i.e. $\mathbf{SU}(k^*) \to 0$) is asymptotically $n - o(n)$ frequencies, there is hope of both recovering a low-frequency signal as well as absorbing noise. 

On one hand, if $\omega(n)$ of the $\mathsf{SU}$s stay boundedly above $0$, then those dimensions of the white noise will clearly not be attenuated as desired, and will show up in our test predictions as a classical kind of prediction variance that is not going to zero. 
On the other hand, if the true signal is not eventually expressible by low-frequency features whose ``survival" coefficients approach $1$, then there is asymptotically non-zero bias in the prediction. 

A further nice aspect of the filtering perspective is that it also lets us immediately see that since the relevant Moore-Penrose pseudo-inverse is a linear operator, we can also view it in ``time domain.'' 
In machine learning parlance, we could call this the ``kernel trick", by which the prediction rule has a direct (and in this case linear) dependence on the labels for the training points. 
In a traditional signal processing, or wireless communications, perspective, this arises from pulse-shaping filters, or interpolating kernels. A particular set of weights induces both a ``survival" filter and an explicit time-domain interpolation function. This is illustrated in Figure~\ref{fig:strong_prior} for a situation in which we put a substantial prior weight on the low-frequency features. Notice that the low-frequency features survive, and have very little contamination. Meanwhile, the higher-frequecies are attenuated, and though their energy is divided across even higher frequency aliases, the net contamination is also small. The time-domain interpolating kernel looks almost like a classical low-pass-filter, except that it passes through zero at the training point intervals to maintain strict interpolation.

\subsubsection{A short comment on Tikhonov regularization and $\ell_2$-minimizing interpolation}

Hastie, Tibshirani, Rosset and Montanari~\cite{hastie2019surprises} describe the test MSE of ridge-regularized solutions (with the optimal level of regularization, denoted by $\lambda$) as strictly better than that of the $\ell_2$-minimizing interpolator.
The discussion above of putting weights on features while we minimize the norm should clearly be evocative of Tikhonov regularization. It turns out that an elementary classical calculation is useful to help us understand what is happening in the new language of bleeding and contamination. 

Consider the classical perspective on Tikonov regularization, where here, we will use $\lambda^2 \Gamma$ as the Tikhonov regularizing matrix to separately call out the ridge-like part $\lambda^2 > 0$ which controls the overall strength of the regularizer and the non-uniformity of the feature weighting that $\Gamma$ represents. As is conventional, let us assume that $\Gamma$ is positive-definite and has a $d\times d$ invertible square-root $\Gamma^{\frac{1}{2}}$ so that $(\Gamma^{\frac{1}{2}})^\top \Gamma^{\frac{1}{2}} = \Gamma$. Then, we know that 
\begin{align}
 &= \argmin_{\alphabold} \|\Atrain \alphabold - \Ytrain\|^2 + \lambda^2 \alphabold^\top \Gamma \alphabold \\
 &= \Gamma^{-\frac{1}{2}} \argmin_{\widetilde{\alpha} } \|\Atrain \Gamma^{-\frac{1}{2}} \widetilde{\alpha} - \Ytrain\|^2 + \lambda^2 \widetilde{\alpha}^\top \widetilde{\alpha} \\
 &= [\Gamma^{-\frac{1}{2}}, 0] \argmin_{\begin{bmatrix}\widetilde{\alpha} \\ v\end{bmatrix} s.t. [\Atrain \Gamma^{-\frac{1}{2}}, \lambda I] \begin{bmatrix}\widetilde{\alpha} \\ v\end{bmatrix} = \Ytrain} \| \begin{bmatrix}\widetilde{\alpha} \\ v\end{bmatrix}\|.
\end{align}

In other words, all Tikhonov regularization can be viewed as being a minimum 2-norm interpolating solution for a remixed set of original features combined with the addition of $n$ more special ridge-features that just correspond to a scaled identity --- one special feature for each training point. These special ``ridge-features'' do not predict anything at test-time, but at training time, they do add aliases whose effective expense is controlled by $\frac{1}{\lambda}$. The bigger $\lambda$ is, the cheaper these aliases become, and the more that they bleed signal energy away from other features during minimum 2-norm ``interpolative'' estimation. Meanwhile, the postmultiplication of $\Atrain$ by $\Gamma^{-\frac{1}{2}}$ corresponds to a transformation of the originally given feature family to one that has essentially been premultiplied by $(\Gamma^{-\frac{1}{2}})^\top$, causing the new transformed feature family to have covariance $(\Gamma^{-\frac{1}{2}})^\top \Sigmabold \Gamma^{-\frac{1}{2}}$. This $\Gamma$ transformation can cause a change in the underlying eigenvalues that is essentially a reweighting. 

Consequently, we can understand the Tikhonov-part as essentially being a reweighting of the features. Such a reweighting, by favoring the true parts of the signal and reducing the relative attractiveness of natural aliases, can help control the bleeding if aligned to where the signal actually is. The impact on contamination is indirect and through the same mechanism. Such reweightings can conceivably cause the given feature family's natural alias structure to be better able to dissipate the noise in unfavored directions. However, the ridge-part is adding additional ``aphysical fake features'' that are contamination-free by their nature, though they may cause increased bleeding. The contamination-free nature of the ridge-features is coming from the same reason that they cause the prediction to no longer interpolate the training data. Within the context of interpolation, the role of the ridge part is to be a more attractive destination for bled energy than any natural false feature directions while not being attractive at all relative to true feature directions. 


This is what tells us immediately that the ridge-part alone cannot prevent attenuation/shrinkage (what we call ``bleed'') of signal\footnote{And this is the classical justification for regularizers like Lasso, which stands for ``least absolute \textit{shrinkage} selection operator"!}, it can mitigate the additional adverse effect of the attenuated energy manifesting in other features, which we designate as signal contamination. Adding a ridge regularizer cannot help if the main problem is bleeding.
This avoidance of contamination, that we have seen could arise either from bled signal or noise, provides a simple explanation for why ridge regularizing is better then $\ell_2$-minimizing interpolation~\cite{hastie2019surprises}.
It is also a nice \textit{high-level} explanation for the double-descent behavior of the \textit{Tikhonov regularizer} in~\cite{mei2019generalization}.

\section{Interpolation in the noisy sparse linear model}\label{sec:sparse}

\multifigureexterior{fig:RG_sparse_MSE}{Test MSE of a variety of sparse recovery methods for Gaussian data sampled from $\NORMAL(0,1)$. Here, $n = 5000$ and $k = 500$ and noise $W \sim \NORMAL(0, \sigma^2)$ for different choices of variance $\sigma^2$.}{
\subfig{0.45\textwidth}{$\sigma^2 = 10^{-4}$.}{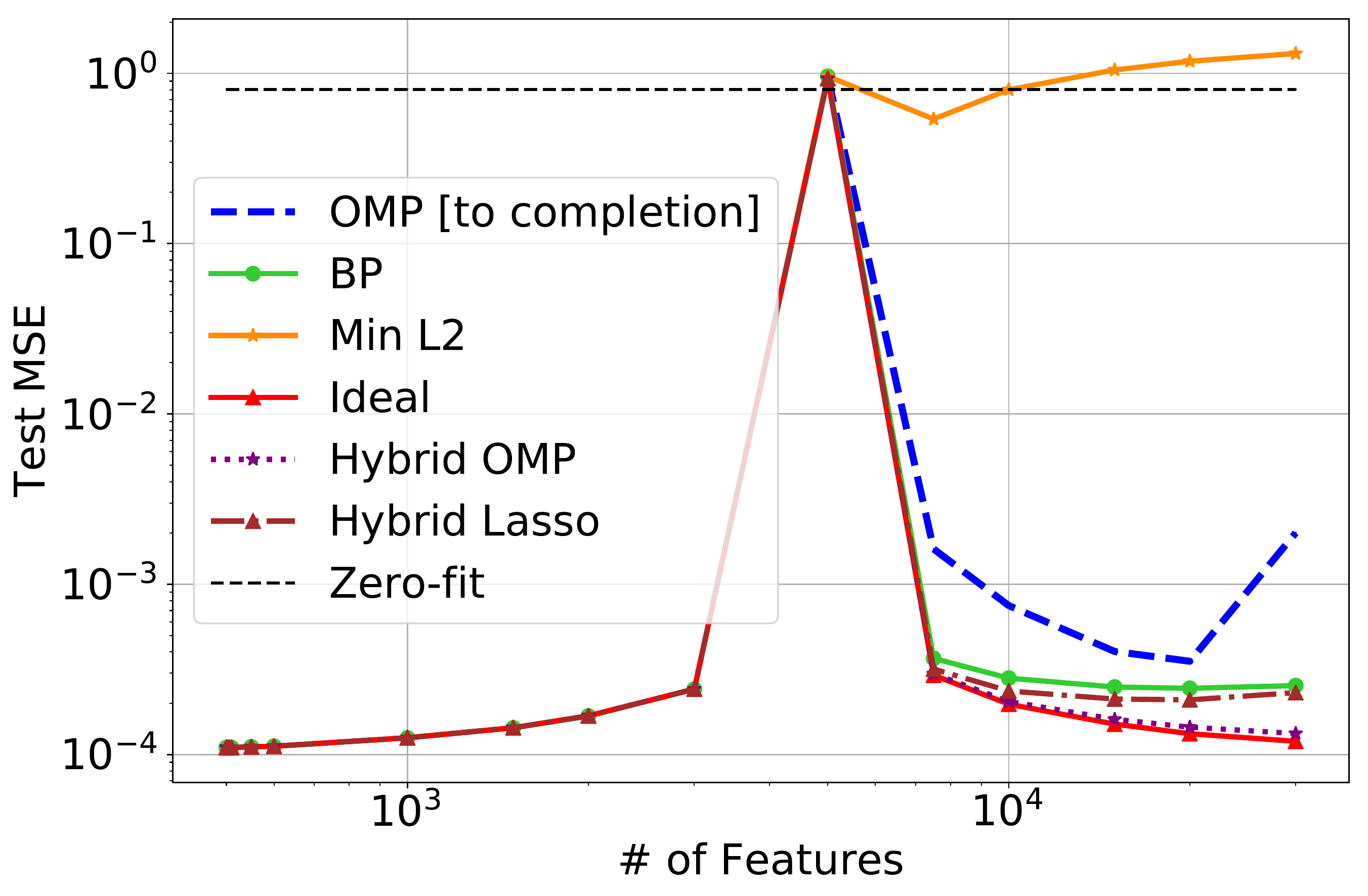}
\subfig{0.45\textwidth}{$\sigma^2 = 10^{-2}$.}{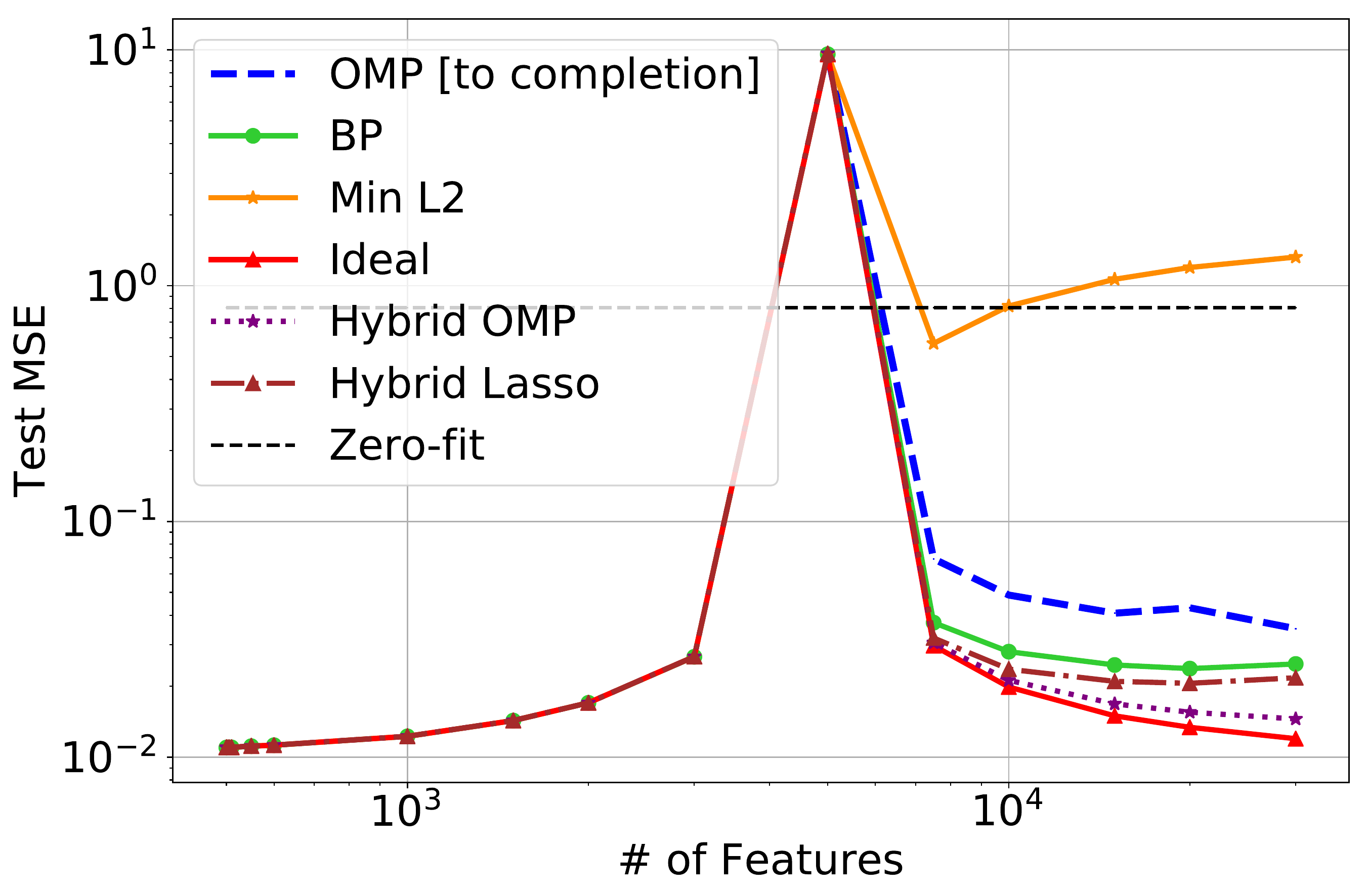}
\subfig{0.45\textwidth}{$\sigma^2 = 10^{-1}$.}{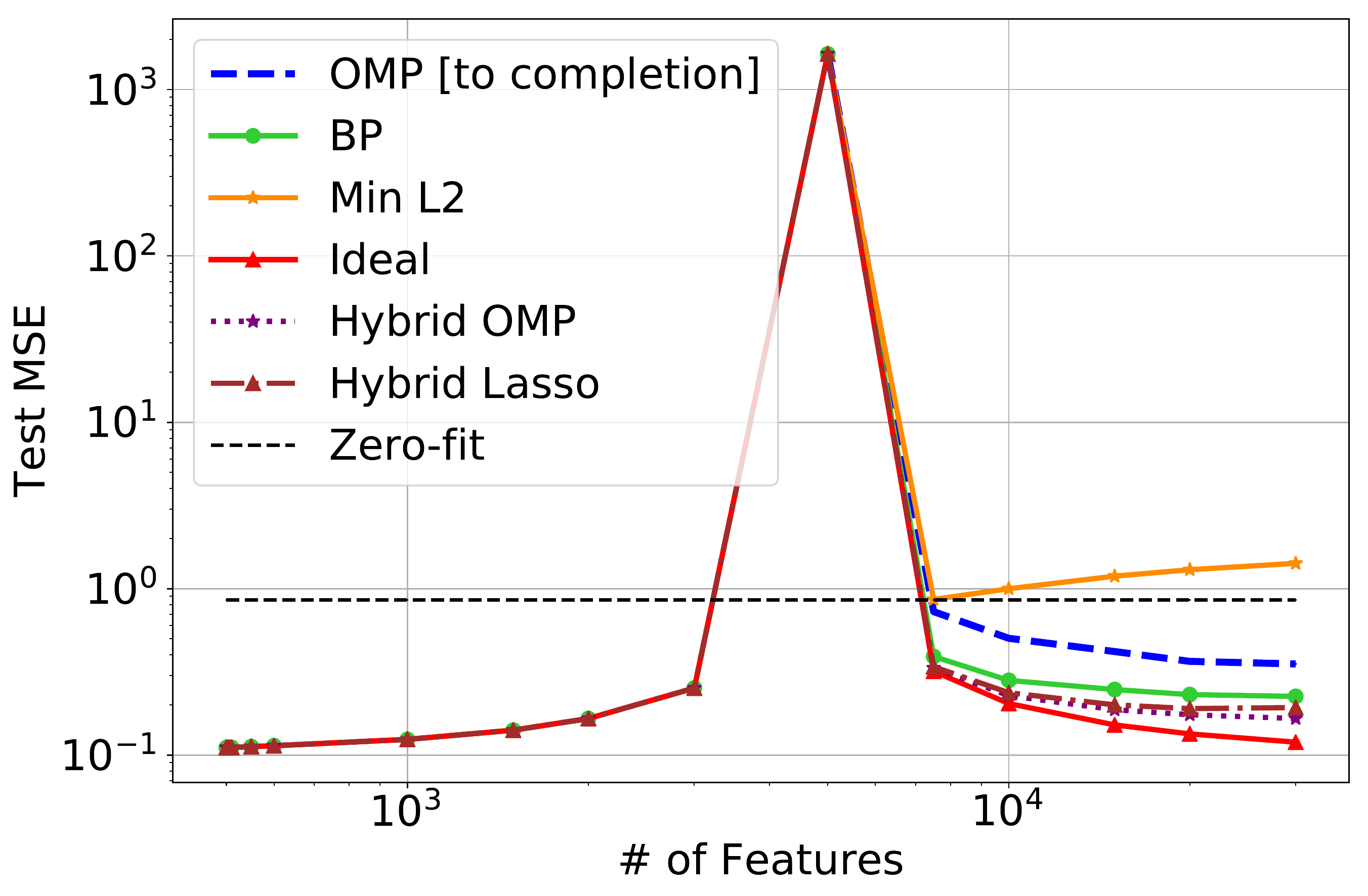}
} 

From the results obtained so far, we can think of any successful interpolator as recovering the signal in the noiseless high-dimensional regime, and then fitting noise harmlessly as in Corollary~\ref{cor:crabpot}.
We just saw that the $\ell_2$-interpolator is generically extremely poor at identifying the correct signal $\alphastar$ even in the noiseless setting -- this was fundamentally due to the issue of the signal \textit{``bleeding out"} across too many features, leading to practically a \textit{zero-fit} as the overparameterization went to infinity.
If the signal $\alphastar$ is dense, we know that there is information-theoretically no hope of identifying it~\cite{wainwright2009information,aeron2010information}.
On the other hand, if there is underlying sparsity in the signal we know that recovery is possible with estimators that are fundamentally \textit{sparsity-seeking}, i.e. estimators obtained through procedures that induce parsimony in the coefficients of $\alphahat$.
Traditionally, the estimators used do not interpolate \textit{in the noisy regime} -- they regularize to essentially denoise the output signal\footnote{(Common examples are the Lasso and orthogonal matching pursuit (OMP) with an early stopping rule.)
}.
In other words, fitting noise has always been thought of as harmful.
We now ask whether this is actually the case by analyzing estimators that \textit{interpolate in the presence of noise}.
First, we define the sparse linear model.

\begin{definition}[Noisy sparse linear model]\label{def:sparse}
For any $k \geq 0, \sigma > 0$, the $(k,\sigma)$-sparse whitened linear model describes output that is generated as
\begin{align*}
Y = \inprod{\avec(\Xvec)}{\alphastar} + W
\end{align*}

where $W \sim \NORMAL(0,\sigma^2)$ and $\alphastar$ is $k$-sparse, i.e. $\vecnorm{\alphastar}{0} \leq k$.
We denote the support of $\alphastar$ by $S^* := \mathsf{supp}(\alphastar)$ and note that $|S^*| \leq k$.
For ease of exposition, we also assume $\EE\left[\avec(\Xvec) \avec(\Xvec)^\top \right] = \mathbf{I}_d$, i.e. the input features are whitened.
\end{definition}


\subsection{Sparsity-seeking methods run to completion}\label{sec:l1}

A starting choice for a reasonable practical interpolator in the sparse regime might be an estimator meant for the \textit{noiseless} sparse linear model to fit the signal as well as noise. Two examples of such estimators are below:
\begin{enumerate}
    \item Orthogonal matching pursuit (OMP) to completion.
    \item Basis pursuit (BP): ${\arg \min} \vecnorm{\alphabold}{1} \text{ subject to Equation~\eqref{eq:interpolatingsoln}.}$
\end{enumerate}

The above methods are extensively analyzed for noiseless sparse linear models~\cite{chen2001atomic}, i.e. the $(k,0)$-sparse linear model as defined in Definition~\ref{def:sparse}.
While they are also known to successfully recover signal in the \textit{high-signal}/\textit{zero-noise} limit, less is known about their performance in the presence of substantial noise.
The test MSE of these interpolators as a function of $d$ in this case is shown in Figure~\ref{fig:RG_sparse_MSE}.
We observe that OMP run to completion and BP are significantly better than the minimum $\ell_2$ norm interpolator.
In fact, they appear only an extra noise multiple of test MSE over the ideal interpolator.
The vast improvement of these estimators over the minimum $\ell_2$-norm interpolator is not surprising on some level, as at the very least these estimators will preserve signal.
The estimator that uses BP in its first step, and then thresholds the coefficients that are below (half the) minimum absolute value of the non-zero coefficients of $\alphastar$, has been shown to recover the sign pattern, or the true support, of $\alphastar$ exactly~\cite{saligrama2011thresholded}.
Indeed, this implies that after the first step, the true signal components (corresponding to entries in $\supp(\alphastar)$) are guaranteed to be preserved by BP even in the presence of low-enough levels of noise.
This means that the signal is preserved, or ``survives" to use our language, with these sparsity-seeking interpolators in the presence of noise.
Similarly, one can show that the first $k$ steps of OMP run to completion will indeed select the correct features corresponding to $\supp(\alphastar)$ with high probability; for more details about OMP, see Appendix~\ref{app:sparsity}.


\multifigureexterior{fig:sparse_purenoise}{Test MSE of sparsity-seeking methods run to completion for Gaussian features sampled from $\NORMAL(0,1)$ and zero signal (pure noise). For every sample point, we have noise $W \sim \NORMAL(0, \sigma^2)$ for $\sigma^2 = 10^{-2}$.}{
\subfigl{0.5\textwidth}{fig:sparse_purenoise_1}{Fixed $n = 1000$, test MSE as a function of $d \geq n$.}{figures/n=1000--feature--gaussian__sigmasqured--1e-2}
\subfigl{0.5\textwidth}{fig:sparse_purenoise_4}{Fixed $n = 50$, test MSE as a function of $d \geq n$.}{figures/n=50--feature--gaussian__sigmasqured--1e-2}
\subfigl{0.5\textwidth}{fig:sparse_purenoise_2}{Fixed growth $d = n^2$, test MSE as a function of $n \geq 10$}{figures/d=n-2_plot--feature--gaussian__sigmasqured--1e-2}
\subfigl{0.5\textwidth}{fig:sparse_purenoise_3}{Fixed growth $d = e^n$, test MSE as a function of $n \geq 2$.}{figures/d=e-n_plot--feature--gaussian__sigmasqured--1e-2}
} 

What is not immediately clear about BP and OMP run to completion is their effect on fitting the noise itself -- does it overfit terribly, or harmlessly (like in Corollary~\ref{cor:crabpot})?
This is directly connected to the ``signal contamination" factor discussed in Section~\ref{sec:l2}, and is not directly answered by existing analysis.
For example,~\cite{saligrama2011thresholded} only guarantees that spurious coefficients are at most half the minimum \textit{non-zero} entry of $\alphastar$, which is generally a constant.
To get a clearer picture of what may happen to purely sparsity-seeking interpolators, we isolate the effect of sparsity-seeking interpolation on noise.
Consider the special case of \textit{zero signal}, i.e. let $\alphastar = \mathbf{0}$.
In this case, for whitened feature families the test MSE of an estimator $\alphahat$ is simply $\vecnorm{\alphahat}{2}^2$, and the estimator $\alphahat$ is in fact fitting pure noise.
Figure~\ref{fig:sparse_purenoise} shows the scaling of the test MSE in the zero-signal case as a function of $n$ and $d$ for OMP and BP.
The ideal test MSE, i.e., the fundamental price of interpolation of noise, is also plotted for reference.
We make these plots for three high-dimensional regimes:
\begin{enumerate}
\item $n$ fixed, and $d \geq n$ growing. In this regime, we ideally want the test MSE to decay to $0$ as $d \to \infty$.
This regime is plotted in Figure~\ref{fig:sparse_purenoise_1} for $n = 1000$, and~\ref{fig:sparse_purenoise_4} for $n = 50$.
\item $d = n^2$ and $n$ growing. We ideally want \textit{consistency} in the sense that we want the test MSE to decay to $0$ as $(n,d) \to \infty$.
This regime is plotted in Figure~\ref{fig:sparse_purenoise_2}.
\item $d = e^n$ and $n$ growing. As before, we want the test MSE to to decay to $0$ as $(n,d) \to \infty$.
This regime is plotted in Figure~\ref{fig:sparse_purenoise_3}.
\end{enumerate}

In all the regimes, Figure~\ref{fig:sparse_purenoise} shows us that the test MSE of the sparsity-seeking interpolators does slightly decrease with $(n,d)$, but extremely slowly and negligibly in comparison to the ideal test MSE.
The issue is that these interpolators are fundamentally parsimonious: they use $\mathcal{O}(n)$ features to fit the noise.
While this property was desirable for signal recovery, it is not that desirable for fitting noise as harmlessly as possible.
We define such an ``overly parsimonious interpolating operator" broadly below.
\begin{definition}\label{def:parsimonious}
For a fixed training data matrix $\Atrain$, consider any interpolating operator $\alphahat := \widehat{\alpha}(\mathbf{Y})$.
Let $S_{\mathsf{top},n} := \{s_j\}_{j=1}^n \subset [d]$ represent the indices for the top $n$ absolute values of coefficients of $\alphahat$.
Then, define truncated vector $\alphahat_{\mathsf{trunc}}$ such that
\begin{align}\label{eq:trunc}
(\alphahat_{\mathsf{trunc}})_j = \begin{cases}
\alphahat_j \text{ if } j \in S_{\mathsf{top},n} \\
0 \text{ otherwise. } 
\end{cases}
\end{align}
Then, for some constant $\beta \in (0,1]$, a $\beta$-\textbf{parsimonious interpolating operator} satisfies
\begin{align}\label{eq:parsimonious}
\vecnorm{\Atrain \alphahat_{\mathsf{trunc}}}{2}^2 \geq \beta \vecnorm{\mathbf{Y}}{2}^2
\end{align}

for all vectors $\mathbf{Y} \in \reals^n$.
For the special case $\beta = 1$, we will call this a \textbf{hard-sparse interpolating operator}.
\end{definition}

Definition~\ref{def:parsimonious} applies to interpolating operators that are a function of the output $\Ytrain$.
For a given data matrix, a parsimonious interpolating operator selects exactly $n$ features that capture a constant fraction of the energy of \textit{any output}.
(The locations of the $n$ features that are selected will, of course, depend on the output.)
For the special case of $\beta = 1$, we will see that \textit{hard-sparse interpolators} include OMP and the minimum-$\ell_1$-norm interpolator (BP).
Because a non-vanishing fraction (represented by the parameter $\beta > 0$) of the noise energy is concentrated in only $n$ features, the effective overparameterization benefit of ideal interpolation of noise (that we saw in Corollary~\ref{cor:crabpot}) can no longer be realized.

We state the main result of this section for a random whitened feature matrix $\Atrain$ satisfying one out of sub-Gaussianity (Assumption~\ref{as:ideal2}) or Gaussianity (Assumption~\ref{as:ideal3}).

\begin{theorem}\label{thm:parsimoniousnoisefit}
Consider any interpolating solution $\alphahat$ obtained by a $\beta$-parsimonious interpolating operator (as defined in Definition~\ref{def:parsimonious}) with constant $\beta \in (0,1]$.
Then, when applied to any random whitened feature matrix $\Atrain$ satisfying:
\begin{enumerate}
    \item Gaussianity (Assumption~\ref{as:ideal3}), there exists an instance of the $(k, \sigma^2)$-sparse linear model for any $k \geq 0$ for which the test MSE
    \begin{align}\label{eq:parsimoniousnoisefit_gaussian}
    \testerr(\alphahat) \geq \frac{\beta \sigma^2 (1 - \delta)}{4\ln \left(\frac{d}{n}\right)} 
    \end{align}
    for any $\delta > 0$, with probability at least $(1 - e^{-n \ln \left(\frac{d}{n}\right)} - e^{-n \delta^2/2})$ over realizations of feature matrix $\Atrain$ and noise $\Wtrain$.
    \item sub-Gaussianity of rows (Assumption~\ref{as:ideal2}) with parameter $K > 0$, there exists an instance of the $(k, \sigma^2)$-sparse linear model for which the test MSE
    \begin{align}\label{eq:parsimoniousnoisefit_subgaussian}
    \testerr(\alphahat) \geq \frac{\beta \sigma^2 (1 - \delta)}{C''_K \ln \left(\frac{d}{n}\right)} 
    \end{align}
    for any $\delta > 0$, with probability at least $(1 - e^{-n \ln \left(\frac{d}{n}\right)} - e^{-n \delta^2/2})$ over realizations of feature matrix $\Atrain$ and noise $\Wtrain$.
    Here, constant $C''_K > 0$ depends only on the upper bound on the sub-Gaussian parameter, $K$.
\end{enumerate}
\end{theorem}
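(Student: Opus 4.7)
My plan is to work with the worst-case instance $\alphastar = \mathbf{0}$ (allowed since the statement quantifies ``there exists an instance of the $(k,\sigma^2)$-sparse model for any $k \geq 0$''), because this isolates the pure noise-fitting penalty paid by a parsimonious operator and the argument then carries forward for arbitrary $k$. Under this instance we have $\Ytrain = \Wtrain$, and since the features are whitened ($\Sigmabold = \mathbf{I}_d$), the test MSE collapses to $\testerr(\alphahat) = \vecnorm{\alphahat}{2}^2 \geq \vecnorm{\alphahat_{\mathsf{trunc}}}{2}^2$, where $\alphahat_{\mathsf{trunc}}$ is the truncation to the top-$n$ coordinates $S := S_{\mathsf{top},n}$ defined in Equation~\eqref{eq:trunc}.

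The next step converts the $\beta$-parsimonious guarantee into a lower bound on $\vecnorm{\alphahat_{\mathsf{trunc}}}{2}$. Because $\alphahat_{\mathsf{trunc}}$ is supported on $S$ with $|S| \leq n$, we can write $\Atrain \alphahat_{\mathsf{trunc}} = (\Atrain)_S (\alphahat_{\mathsf{trunc}})_S$, where $(\Atrain)_S \in \reals^{n \times n}$ is the submatrix of columns indexed by $S$. The operator-norm inequality then gives
\begin{align*}
\beta \vecnorm{\Wtrain}{2}^2 \;\leq\; \vecnorm{\Atrain \alphahat_{\mathsf{trunc}}}{2}^2 \;\leq\; \sigma_{\max}\!\bigl((\Atrain)_S\bigr)^{2}\,\vecnorm{\alphahat_{\mathsf{trunc}}}{2}^2,
\end{align*}
using Equation~\eqref{eq:parsimonious}. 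Rearranging, $\testerr(\alphahat) \geq \beta \vecnorm{\Wtrain}{2}^2 / \sigma_{\max}((\Atrain)_S)^2$. So the job reduces to two probabilistic bounds: a lower tail on $\vecnorm{\Wtrain}{2}^2$, handled by a standard chi-squared bound giving $\vecnorm{\Wtrain}{2}^2 \geq (1-\delta) n \sigma^2$ with probability at least $(1 - e^{-n\delta^2/2})$; and, crucially, a \emph{uniform} upper bound on $\sigma_{\max}((\Atrain)_{S'})$ over all $\binom{d}{n}$ column-subsets $S'$ of size $n$, since the data-dependent $S$ is a random element of this collection.

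The main obstacle is this uniform singular-value bound: a naive per-subset concentration bound would fail after union-bounding over $\binom{d}{n} \leq (ed/n)^n$ subsets unless the tail decays faster than the combinatorial factor. In the Gaussian case (Assumption~\ref{as:ideal3}), each fixed $(\Atrain)_{S'}$ is $n \times n$ iid standard Gaussian, and Lemma~\ref{lem:gaussianconcentration} gives $\sigma_{\max}((\Atrain)_{S'}) \leq 2\sqrt{n} + t$ with failure probability $e^{-t^2/2}$; choosing $t^2 = 4n\ln(d/n) + 2n$ makes the union-bounded failure probability $\leq (ed/n)^n e^{-t^2/2} \leq e^{-n\ln(d/n)}$, and a direct squaring shows $\sigma_{\max}((\Atrain)_{S'})^2 \leq 4n\ln(d/n)$ uniformly once $d/n$ is large. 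Combining this with the chi-squared bound yields Equation~\eqref{eq:parsimoniousnoisefit_gaussian} with the constant $4$ in the denominator. For the sub-Gaussian rows case (Assumption~\ref{as:ideal2}), exactly the same argument goes through after replacing Lemma~\ref{lem:gaussianconcentration} with the sub-Gaussian-row concentration inequality (Theorem 5.39 in~\cite{vershynin2010introduction}), which produces a constant $C''_K$ depending only on the sub-Gaussian parameter $K$ in place of $4$, giving Equation~\eqref{eq:parsimoniousnoisefit_subgaussian}. The final probability statement then follows from a union bound over the two independent events (singular values and noise norm), and the case of general $k \geq 0$ follows immediately because an adversary may always choose $\alphastar = \mathbf{0}$, which is $k$-sparse for every $k \geq 0$.
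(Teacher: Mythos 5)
Your proposal is correct and follows essentially the same approach as the paper: reduce to the zero-signal instance, truncate to the top-$n$ coordinates, convert the parsimony condition into a lower bound on $\vecnorm{\alphahat_{\mathsf{trunc}}}{2}^2$ via the maximum singular value of the $n\times n$ submatrix $\Atrain(S)$, uniformize that singular value bound over all $\binom{d}{n}$ subsets by a union bound, and finish with a chi-squared lower tail on $\vecnorm{\Wtrain}{2}^2$. The only cosmetic difference is that you apply the operator-norm inequality $\vecnorm{\Atrain(S)\alphahat_{\mathsf{trunc}}}{2} \leq \sigma_{\max}(\Atrain(S))\vecnorm{\alphahat_{\mathsf{trunc}}}{2}$ directly rather than writing $\alphahat_{\mathsf{trunc}} = \Atrain(S)^{-1}\Wtrain'$ and bounding the quadratic form involving $(\Atrain(S)\Atrain(S)^\top)^{-1}$, which is a slightly cleaner path to the same inequality and avoids invoking invertibility of $\Atrain(S)$.
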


Theorem~\ref{thm:parsimoniousnoisefit} should be thought of as a negative result for the applicability of parsimonious interpolators meant for the noiseless setting in additionally fitting noise, even when the setting is heavily overparameterized -- that is, we are no longer enjoying harmless interpolation of noise.
Consider the following scalings of $d$ with respect to $n$:
\begin{enumerate}
\item $d = \gamma n$ for some constant $\gamma > 1$. 
In this case, Equations~\eqref{eq:parsimoniousnoisefit_gaussian} and~\eqref{eq:parsimoniousnoisefit_subgaussian} give us $\vecnorm{\alphahat}{2}^2 = \omega(\sigma^2)$, and the test MSE does not go to $0$ as $n \to \infty$. 
\item $d = n^q$ for some $q > 1$.
In this case, Equations~\eqref{eq:parsimoniousnoisefit_gaussian} and~\eqref{eq:parsimoniousnoisefit_subgaussian} give us $\vecnorm{\alphahat}{2}^2 = \omega\left(\frac{\sigma^2}{(q-1) \ln n}\right)$ which goes to $0$ as $n \to \infty$, but at an extremely slow \textit{logarithmic} rate.
\item $d = e^{\gamma n}$ for some $\gamma > 0$.
This is an extremely overparameterized regime.
In this case, Equations~\eqref{eq:parsimoniousnoisefit_gaussian} and~\eqref{eq:parsimoniousnoisefit_subgaussian} give us $\vecnorm{\alphahat}{2}^2 = \omega\left(\frac{\sigma^2}{\gamma n - \ln n}\right)$ which is a much faster rate.
However, in this exponentially overparameterized regime it is well known that successful signal recovery is impossible even in the absence of noise.
\end{enumerate}
Putting these conclusions together, Theorem~\ref{thm:parsimoniousnoisefit} suggests that while consistency of parsimonious interpolators might be possible in polynomially high-dimensional regimes -- it would be at an extremely slow logarithmic rate.

Theorem~\ref{thm:parsimoniousnoisefit} holds for a broad class of sparsity-seeking interpolators that successfully recover signal in the absence of noise (in the polynomially high-dimensional regime).
The following results hold for OMP run to completion, and BP -- both of which satisfy $1$-parsimonious interpolation on any output.

\begin{corollary}\label{cor:omptocompletion}
When the random whitened feature matrix $\Atrain$ satisfies one out of Gaussianity (Assumptions~\ref{as:ideal3}) or sub-Gaussianity (Assumption~\ref{as:ideal2}), the interpolator formed by OMP run to completion incurs test MSE
\begin{align*}
\testerr(\alphahat_{\mathsf{OMP}}) \geq \frac{\sigma^2 (1 - \delta)}{C\ln \left(\frac{d}{n}\right)}
\end{align*}
for some constant $C > 0$ with high probability for at least one instance of the $(k, \sigma)$-sparse linear model.
\end{corollary}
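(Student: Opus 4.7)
The plan is to deduce Corollary~\ref{cor:omptocompletion} as an immediate specialization of Theorem~\ref{thm:parsimoniousnoisefit} to $\beta = 1$, once we have verified that OMP run to completion produces a hard-sparse ($1$-parsimonious) interpolating operator in the sense of Definition~\ref{def:parsimonious}. With hard-sparseness in hand, Theorem~\ref{thm:parsimoniousnoisefit} directly gives $\testerr(\alphahat_{\mathsf{OMP}}) \geq \sigma^2(1-\delta)/(4\ln(d/n))$ under the Gaussian Assumption~\ref{as:ideal3} and $\testerr(\alphahat_{\mathsf{OMP}}) \geq \sigma^2(1-\delta)/(C''_K \ln(d/n))$ under the sub-Gaussian Assumption~\ref{as:ideal2}, on the same high-probability event asserted by the theorem. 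Setting $C := \max\{4,\, C''_K\}$ yields the statement of the corollary uniformly across both cases.

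The content of the proof therefore reduces to establishing hard-sparseness of $\alphahat_{\mathsf{OMP}}$. My approach is to recall the mechanics of OMP: at each iteration, the algorithm greedily augments the active support by the column of $\Atrain$ having maximum absolute inner product against the current residual, and then updates $\alphahat$ by orthogonal projection of $\Ytrain$ onto the span of the currently selected columns. The key structural observation is that after each projection step the residual is orthogonal to the active-support span, so a newly-selected column with non-zero correlation against the residual must have a component outside that span. Consequently, each iteration strictly enlarges the span by one dimension. Because $\Atrain$ has full row rank $n$ by the standing assumption in the paper, the residual remains non-zero until the active span reaches all of $\reals^n$, so OMP must terminate after at most $n$ iterations, producing an interpolating solution $\alphahat_{\mathsf{OMP}}$ with at most $n$ non-zero coordinates.

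This sparsity bound makes the truncation in Definition~\ref{def:parsimonious} trivial: the top-$n$ magnitude entries already contain the entire support of $\alphahat_{\mathsf{OMP}}$, so $\alphahat_{\mathsf{trunc}} = \alphahat_{\mathsf{OMP}}$. Hence $\vecnorm{\Atrain \alphahat_{\mathsf{trunc}}}{2}^2 = \vecnorm{\Atrain \alphahat_{\mathsf{OMP}}}{2}^2 = \vecnorm{\Ytrain}{2}^2$ for every output vector $\Ytrain \in \reals^n$, which is precisely hard-sparseness with $\beta = 1$, and the corollary follows. The one step that requires some care is the claim that each OMP iteration \emph{strictly} enlarges the active-support span; it would be tempting to conclude only weak monotonicity. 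Strictness follows because the residual is exactly orthogonal to the active span after projection, so any column selected in the next step with non-zero correlation against the residual introduces a genuinely new direction. The remainder of the argument is bookkeeping and appeal to Theorem~\ref{thm:parsimoniousnoisefit}.
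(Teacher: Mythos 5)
Your proposal is correct and takes essentially the same route as the paper: establish that OMP run to completion selects at most $n$ features (hence is $1$-parsimonious per Definition~\ref{def:parsimonious}) and then invoke Theorem~\ref{thm:parsimoniousnoisefit} with $\beta=1$. The paper states this more tersely, asserting that OMP "selects exactly $n$ features and stops," while you usefully spell out why this is so via the strict-span-enlargement argument and the full-row-rank assumption on $\Atrain$.
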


Corollary~\ref{cor:omptocompletion} is trivial because, by nature, OMP run to completion selects exactly $n$ features and stops (see Appendix~\ref{app:sparsity} for details.)
Thus, the OMP interpolator is always $n$-hard-sparse, thus $1$-parsimonious according to Definition~\ref{def:parsimonious}.

\begin{corollary}\label{cor:bp}
When the random whitened feature matrix $\Atrain$ satisfies one out of Gaussianity (Assumptions~\ref{as:ideal3}) or sub-Gaussianity (Assumption~\ref{as:ideal2}), the minimum-$\ell_1$-norm interpolator (BP) incurs test MSE
\begin{align*}
\testerr(\alphahat_{\mathsf{BP}}) \geq \frac{\sigma^2 (1 - \delta)}{C \ln \left(\frac{d}{n}\right)}
\end{align*}
for some constant $C > 0$ with high probability for at least one instance of the $(k, \sigma)$-sparse linear model.
\end{corollary}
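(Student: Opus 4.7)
The plan is to deduce Corollary~\ref{cor:bp} directly from Theorem~\ref{thm:parsimoniousnoisefit} by verifying that Basis Pursuit produces a $1$-parsimonious interpolating operator in the sense of Definition~\ref{def:parsimonious}, and then simply specializing the theorem to $\beta = 1$. This parallels the route for OMP in Corollary~\ref{cor:omptocompletion}; the only real content is confirming hard-sparsity.

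First, I would recall the standard linear-programming view of BP. Splitting $\alphabold = \alphabold_+ - \alphabold_-$ into non-negative and non-positive parts rewrites $\min \|\alphabold\|_1$ subject to $\Atrain \alphabold = \Ytrain$ as a linear program in $\reals^{2d}$ with $n$ equality constraints. Since $\mathrm{rank}(\Atrain) = n$ by assumption, every basic feasible solution of this LP has at most $n$ non-zero coordinates in the original $\alphabold$. Consequently, there always exists an optimal BP solution with at most $n$ non-zero entries; this is the one returned by standard LP solvers (simplex-type methods).

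Second, I would argue that for the random feature matrices $\Atrain$ covered by the corollary, the BP optimum is almost surely unique and genuinely $n$-sparse. Under either the Gaussian Assumption~\ref{as:ideal3} or an absolutely continuous sub-Gaussian Assumption~\ref{as:ideal2}, any $n$ columns of $\Atrain$ are linearly independent with probability one, so the LP is in general position and its optimal basic feasible solution $\alphahat_{\mathsf{BP}}$ is unique with $|\supp(\alphahat_{\mathsf{BP}})| \leq n$ almost surely. For every realized output $\Ytrain$, the top-$n$ truncation in Equation~\eqref{eq:trunc} therefore satisfies $\alphahat_{\mathsf{trunc}} = \alphahat_{\mathsf{BP}}$, so
\begin{equation*}
\vecnorm{\Atrain \alphahat_{\mathsf{trunc}}}{2}^2 = \vecnorm{\Atrain \alphahat_{\mathsf{BP}}}{2}^2 = \vecnorm{\Ytrain}{2}^2,
\end{equation*}
verifying the condition in Equation~\eqref{eq:parsimonious} with $\beta = 1$. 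Thus BP is a $1$-parsimonious (indeed, hard-sparse) interpolating operator.

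Finally, I would invoke Theorem~\ref{thm:parsimoniousnoisefit} with $\beta = 1$ applied to $\alphahat_{\mathsf{BP}}$. Under either assumption on $\Atrain$, the theorem produces an instance of the $(k,\sigma^2)$-sparse linear model on which $\testerr(\alphahat_{\mathsf{BP}}) \geq \sigma^2(1-\delta)/(C' \ln(d/n))$ with high probability, which is the claimed bound (absorbing $(1-\delta)$ and the constant from Theorem~\ref{thm:parsimoniousnoisefit} into $C$). The only step that requires any genuine care, and where I expect the main obstacle to lie, is ruling out degenerate realizations in which the BP optimum is non-unique or non-$n$-sparse; the general-position argument above handles this for the random feature families considered, but would need to be invoked (or replaced by a tie-breaking convention) if one wanted the statement for worst-case $\Atrain$.
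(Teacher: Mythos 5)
Your proposal is correct and takes essentially the same route as the paper: convert BP to an LP over $(\alphabold_+,\alphabold_-)\in\reals^{2d}$, observe that an optimal basic feasible solution has support at most $n$, argue uniqueness of the BP optimum under non-degeneracy of the random feature matrix, conclude $1$-parsimony, and invoke Theorem~\ref{thm:parsimoniousnoisefit} with $\beta=1$. The paper fills in the details you gesture at with ``general position'' via explicit lemmas (the BP--LP equivalence, the disjoint-support property of optimal $(\lpu,\lpv)$, and non-degeneracy Assumptions~\ref{as:nonzero}--\ref{as:nogperm} used to rule out ties), but the logical skeleton matches yours.
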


Corollary~\ref{cor:bp} is proved in Appendix~\ref{app:bp}. The result follows trivially from Theorem~\ref{thm:parsimoniousnoisefit} once we know that the minimum-$\ell_1$-norm interpolator also selects $n$ features.
While this is less immediately obvious than OMP, it is a classical result\footnote{Also referenced in Chen, Donoho and Saunder's classic survey~\cite{chen2001atomic}. The name ``basis pursuit'' is actually a give-away for this property, as we know that a basis in $\reals^n$ cannot contain more than $n$ elements.}, that follows from the linear program (LP) formulation of BP, that the $\ell_1$-minimizing interpolator selects at most $n$ features, and will generically select \textit{exactly} $n$ features.
We reproduce this argument in full in Appendix~\ref{app:bp} for completeness.

We close this section with the proof of Theorem~\ref{thm:parsimoniousnoisefit}.

\begin{proof}
For any $k > 0$, we consider the instance of the $(k,\sigma)$-sparse linear model for which there is zero signal, i.e. $\alphastar = \mathbf{0}$.
In this case, the output is pure noise, i.e. $\Ytrain = \Wtrain$ and the interpolator operates on pure noise as $\alphahat := \alpha(\Ytrain) = \alpha(\Wtrain)$.
Further, recall that the test MSE on whitened features for any estimator $\alphahat$ is defined as 
\begin{align*}
\testerr(\alphahat) := \vecnorm{\alphahat - \alphastar}{2}^2 = \vecnorm{\alphahat}{2}^2 .
\end{align*}

Recall that $\alphahat_{\mathsf{trunc}}$ is the truncated version of the interpolator $\alphahat$ as defined in Equation~\eqref{eq:trunc}.
Let $S = \mathsf{supp}(\alphahat_{\mathsf{trunc}})$ denote the support of the truncated vector $\alphahat_{\mathsf{trunc}}$.
Recall, that by definition, $|S| = n$. 
More generally, the actual composition of the $n$ elements in $S$ will depend both on the realizations of the random matrix $\Atrain$ and the noise $\Wtrain$.

Let $\Atrain(S)$ be the $n \times n$ matrix with columns sub-sampled from the set $S$ of size $n$, and denote $\Wtrain' := \Atrain \alphahat_{\mathsf{trunc}} = \Atrain(S) \alphahat_{\mathsf{trunc}}$.
Assuming that $\Atrain(S)$ is invertible (which is always almost surely true for a random matrix), we note that
\begin{align*}
\alphahat_{\mathsf{trunc}} = \Atrain(S)^{-1} \Wtrain' ,
\end{align*}

and so we have
\begin{align*}
\vecnorm{\alphahat_{\mathsf{trunc}}}{2}^2 &= (\Wtrain')^\top (\Atrain(S)^\top)^{-1} \Atrain(S)^{-1} (\Wtrain') \\
&= (\Wtrain')^\top (\Atrain(S) \Atrain(S)^\top)^{-1} (\Wtrain') \\
&\geq \vecnorm{\Wtrain'}{2}^2\lambda_{min}\left(\Atrain(S)\Atrain(S)^\top\right)^{-1} \\
&= \frac{\vecnorm{\Wtrain'}{2}^2}{\lambda_{max}\left(\Atrain(S)\Atrain(S)^\top\right)} \\
&= \frac{\vecnorm{\Wtrain'}{2}^2}{\lambda_{max}\left(\Atrain(S)^\top\Atrain(S)\right)} \\
&\geq \frac{\vecnorm{\Wtrain'}{2}^2}{\max_{S' \subset [d], |S'| = n} \lambda_{max}\left(\Atrain(S')^\top\Atrain(S')\right)}
\end{align*}

Thus, we need to prove a lower bound on the maximal eigenvalue $\lambda_{max}\left(\Atrain(S')^\top\Atrain(S')\right)$ that will hold, point-wise, for all $S' \subset [d], |S'| = n$.
We state and prove the following intermediate lemma.
\begin{lemma}\label{lem:rvuniform}
For matrix $\Atrain$ satisfying:
\begin{enumerate}
    \item Assumption~\ref{as:ideal3} (Gaussian features), we have
    \begin{align*}
    \max_{S' \subset [d]: |S'| = n} \lambda_{max}\left(\Atrain(S')^\top\Atrain(S')\right) \leq n \left(2 + 2\sqrt{\ln \left(\frac{d}{n}\right)}\right)^2
    \end{align*}
    with probability greater than or equal to $(1 - e^{-n \ln \left(\frac{d}{n}\right)})$.
    \item Assumption~\ref{as:ideal2} (Sub-Gaussianity), we have
    \begin{align*}
    \max_{S' \subset [d]: |S'| = n} \lambda_{max}\left(\Atrain(S')^\top\Atrain(S')\right) \leq n \left((C_K + 1) + \frac{\sqrt{2\ln \left(\frac{d}{n}\right)}}{c_K}\right)^2
    \end{align*}
    with probability greater than or equal to $(1 - e^{-n \ln \left(\frac{d}{n}\right)})$, where parameter $C_K, c_K > 0$ are \textbf{positive constants} that depend on the sub-Gaussian parameter $K$.
\end{enumerate}
\end{lemma}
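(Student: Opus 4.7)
}
The plan is to reduce a uniform control of $\lambda_{\max}(\Atrain(S')^\top \Atrain(S'))$ over all $\binom{d}{n}$ size-$n$ column subsets to a pointwise bound on the operator norm of the $n \times n$ submatrix $\Atrain(S')$, and then close the gap via a union bound. Note first that $\lambda_{\max}(\Atrain(S')^\top \Atrain(S')) = \sigma_{\max}(\Atrain(S'))^2$, so it suffices to show $\sigma_{\max}(\Atrain(S')) \le \sqrt{n}\bigl(2 + 2\sqrt{\ln(d/n)}\bigr)$ (respectively the sub-Gaussian analogue) simultaneously for all such $S'$.

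For each \emph{fixed} $S' \subset [d]$ with $|S'| = n$, the submatrix $\Atrain(S') \in \reals^{n \times n}$ is a square random matrix whose entries (or rows) satisfy the same whitened Gaussian (resp.\ sub-Gaussian) assumption as $\Atrain$ itself. In the Gaussian case, applying Lemma~\ref{lem:gaussianconcentration} with both dimensions equal to $n$ yields
\begin{equation*}
\sigma_{\max}(\Atrain(S')) \;\le\; 2\sqrt{n} + t \qquad \text{with probability } \ge 1 - e^{-t^2/2}.
\end{equation*}
In the sub-Gaussian case, I would invoke the corresponding non-asymptotic singular value bound for matrices with iid sub-Gaussian rows from~\cite{vershynin2010introduction} (Theorem~5.39 in that reference), which gives $\sigma_{\max}(\Atrain(S')) \le (C_K + 1)\sqrt{n} + t/c_K$ with probability at least $1 - e^{-c_K t^2}$, for constants $C_K, c_K > 0$ that depend only on the sub-Gaussian parameter.

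Next, I would union-bound over all $S' \subset [d]$ with $|S'| = n$, of which there are at most $\binom{d}{n} \le (ed/n)^n$. Choosing $t := 2\sqrt{n \ln(d/n)}$ in the Gaussian case gives a per-set failure probability of $e^{-2n \ln(d/n)}$, and multiplying by $(ed/n)^n = e^n \cdot e^{n \ln(d/n)}$ yields a total failure probability that is dominated by $e^{-n \ln(d/n)}$ (up to constants absorbed into the statement), together with the claimed bound $\sigma_{\max}(\Atrain(S')) \le 2\sqrt{n} + 2\sqrt{n\ln(d/n)}$. Squaring this gives exactly $n(2 + 2\sqrt{\ln(d/n)})^2$. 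An entirely analogous choice $t := \sqrt{2n \ln(d/n)}/c_K$ in the sub-Gaussian case gives the second claim, with the $C_K + 1$ and $1/c_K$ factors appearing as stated.

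The only mildly delicate step is choosing $t$ large enough that the $(ed/n)^n$ union-bound cost is overcome by the tail probability $e^{-t^2/2}$ (resp.\ $e^{-c_K t^2}$), while still keeping $t$ small enough that the resulting bound matches the stated constants. This is purely a bookkeeping issue --- the exponential decay of the Gaussian/sub-Gaussian singular-value tail easily dominates the $n \ln(d/n)$ entropy of the set of size-$n$ subsets, so the union bound closes comfortably. No further probabilistic tools are needed: the whole argument rests on Lemma~\ref{lem:gaussianconcentration} (and its sub-Gaussian counterpart) together with the standard binomial estimate $\binom{d}{n} \le (ed/n)^n$.
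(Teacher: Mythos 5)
Your proposal is correct and follows essentially the same route as the paper: apply the Gaussian (Lemma~\ref{lem:gaussianconcentration}) or sub-Gaussian (Lemma~\ref{lem:matrixconcentration}) singular-value concentration to each fixed $n\times n$ submatrix $\Atrain(S')$, union-bound over all $\binom{d}{n}$ subsets, and close via $\binom{d}{n}\le(ed/n)^n$; the paper simply chooses $t$ with an extra $+1$ under the square root (i.e.\ $t_{0}=\sqrt{4n(\ln(d/n)+1)}$) so the $e^{n}$ entropy factor cancels cleanly, whereas your $t=2\sqrt{n\ln(d/n)}$ leaves a residual $e^{n}$ that you note is absorbed into constants. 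Both routes reach the stated $\Oh\!\bigl(n\ln(d/n)\bigr)$ scaling; this is a bookkeeping difference, not a substantive one.
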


Notice that Lemma~\ref{lem:rvuniform} directly implies the statement of Theorem~\ref{thm:parsimoniousnoisefit}.
This is because with probability greater than or equal to $(1 - e^{-n \ln \left(\frac{d}{n}\right)})$, we then have for any subset $|S'| = n$,
\begin{align*}
\vecnorm{\alphahat_{\mathsf{trunc}}}{2}^2 &\geq \frac{\vecnorm{\Wtrain'}{2}^2}{\lambda_{max}\left(\Atrain(S')^\top\Atrain(S')\right)} \\
&\geq \frac{\vecnorm{\Wtrain'}{2}^2}{n \left(C + \sqrt{C'\ln \left(\frac{d}{n}\right)}\right)^2} \\
&\geq \frac{\vecnorm{\Wtrain'}{2}^2}{C''n \ln \left(\frac{d}{n}\right)}
\end{align*}

for positive constants $C,C',C''$ (we have dropped the subscript dependence on $K$ for the sub-Gaussian case for convenience).
Now, recall that $\Wtrain' = \Atrain \alphahat_{\mathsf{trunc}}$, and from Equation~\eqref{eq:parsimonious} in Definition~\ref{def:parsimonious}, we get
\begin{align*}
\vecnorm{\Wtrain'}{2}^2 = \vecnorm{\Atrain \alphahat_{\mathsf{trunc}}}{2}^2 \geq \beta \vecnorm{\Wtrain}{2}^2
\end{align*}

Also recall from the lower tail bound on chi-squared random variables that $\vecnorm{\Wtrain}{2}^2 \geq n\sigma^2(1- \delta)$ with probability greater than or equal to $(1 -e^{-\frac{n \delta^2}{8}})$.
Putting these facts together, we get
\begin{align*}
\vecnorm{\alphahat_{\mathsf{trunc}}}{2}^2 \geq \frac{\beta \sigma^2 (1 - \delta)}{C'' \ln \left(\frac{d}{n}\right)} .
\end{align*}

Finally, we note that truncation to $0$ only decreases the $\ell_2$-norm, and thus $\vecnorm{\alphahat}{2}^2 \geq \vecnorm{\alphahat_{\mathsf{trunc}}}{2}^2$.
This gives us
\begin{align*}
\testerr(\alphahat) = \vecnorm{\alphahat}{2}^2 \geq \frac{\beta \sigma^2 (1 - \delta)}{C'' \ln \left(\frac{d}{n}\right)} ,
\end{align*}

which is precisely the statement in Theorem~\ref{thm:parsimoniousnoisefit}.
Note that the overall probability of this statement is greater than or equal to $\left(1 - e^{-\frac{n \delta^2}{8}} - e^{-n \ln \left(\frac{d}{n}\right)}\right)$.

It only remains to prove Lemma~\ref{lem:rvuniform}, which we do below.
\begin{proof}
Under Gaussianity (Assumption~\ref{as:ideal3}), we use Lemma~\ref{lem:gaussianconcentration} which we introduced in the proof of Corollary~\ref{cor:fundamentalprice}.
For every $S' \subset [d], |S'| = n$, a direct substitution of of Lemma~\ref{lem:gaussianconcentration} applied to the \textit{maximum singular value} of matrix $\Atrain(S')$ with $t := t_{0,1} = \sqrt{4n\left(\ln \left(\frac{d}{n}\right) + 1\right)}$ gives us
\begin{align*}
\Pr\left[\lambda_{max}\left(\Atrain(S')^\top\Atrain(S')\right) > \left(2\sqrt{n} + t_{0,1}\right)^2\right] \leq e^{-\frac{t_{0,1}^2}{2}} = e^{-2n\ln \left(\frac{d}{n}\right)} .
\end{align*}

Similarly, under Assumption~\ref{as:ideal2}, we use Lemma~\ref{lem:matrixconcentration} stated in Appendix~\ref{app:technical}.
For every $S' \subset [d], |S'| = n$, a direct substitution of of Lemma~\ref{lem:matrixconcentration} applied to the \textit{maximum singular value} of matrix $\Atrain(S')$ with $t = t_{0,2} := \sqrt{\frac{2n\left(\ln \left(\frac{d}{n}\right) + 1\right)}{c_K}}$ gives us
\begin{align*}
\Pr\left[\lambda_{max}\left(\Atrain(S')^\top\Atrain(S')\right) > \left(\sqrt{n}(1 + C_K) + t_0\right)^2\right] \leq e^{-c_K t_0^2} = e^{-2n\ln \left(\frac{d}{n}\right)} .
\end{align*}

For convenience, we unify the rest of the argument for both assumptions, denoting $C := \{1, C_K\}$, $c := \{\frac{1}{2}, c_K\}$ and $t_0 := \{t_{0,1}, t_{0,2}\}$ for the Gaussian case and sub-Gaussian case respectively.
Applying the union bound for all $S' \subset [d], |S'| = n$ gives us
\begin{align*}
&\Pr\left[\max_{|S'| = n} \lambda_{max}\left(\Atrain(S')^\top\Atrain(S')\right) > \left(\sqrt{n}(1 + C) + t_0\right)^2\right] \\
&\leq \sum_{S' \subset [d]: |S'| = n} \Pr\left[ \lambda_{max}\left(\Atrain(S')^\top\Atrain(S')\right) > \left(\sqrt{n}(1 + C) + t_0\right)^2\right] \\
&\leq {d \choose n} \cdot e^{-c t_0^2} \\
&\stackrel{(\mathsf{i})}{<} \left(\frac{ed}{n}\right)^n \cdot e^{-c t_0^2}\\
&= e^{n \left(\ln \left(\frac{d}{n}\right) + 1\right) - c t_0^2} \\
&= e^{-n \left(\ln \left(\frac{d}{n}\right) + 1\right)}
\end{align*}
where we have substituted $t_0 := \sqrt{\frac{2n\left(\ln \left(\frac{d}{n}\right) + 1\right)}{c}}$ and inequality $(\mathsf{i})$ follows from Fact~\ref{fact:binomialcoeffbound} in Appendix~\ref{app:facts}.
This completes the proof for both the Gaussian case and the sub-Gaussian case.
\end{proof}

Notice that the union-bound used here is quite likely a bit loose. But we know that the maximum of independent random variables does not behave far from what the union-bound predicts, and so tightening here would require exploiting non-independence.
\end{proof}

\subsection{Order-optimal hybrid methods}\label{sec:hybrid}

Can we construct an optimal interpolation scheme in the $(k,\sigma)$-noisy sparse linear model for any $(k, \sigma)$? 
We define optimality in the sense of order-optimality, i.e. the optimal scaling for test MSE as a function of $(d,n)$ that holds for all instances\footnote{Statisticans commonly call this \textit{minimax-optimality}.} in the $(k,\sigma)$-noisy sparse linear model.
Clearly, any order-optimal interpolator needs to satisfy the following two conditions:
\begin{enumerate}
\item The test MSE due to the presence of signal should be proportional to $\sigma^2 \cdot \frac{k \ln \left(\frac{d}{k}\right)}{n}$.
This is the best possible scaling in test MSE that \textit{any} estimator can achieve in the noisy sparse linear model, whether or not such an estimator interpolates.
\item The test MSE due to the presence of noise, even when the signal is absent, should be proportional to $\sigma^2 \cdot \frac{n}{d}$.
As we saw in Corollaries~\ref{cor:fundamentalprice} and~\ref{cor:crabpot}, this is the fundamental price of interpolation of noise.
\end{enumerate}

Let us consider the requirements of an interpolator to meet the above two conditions for order-optimality.
For one, it should be able to uniquely identify the signal in the absence of noise; two, it needs to fit noise harmlessly across $\omega(d)$ features.
We saw in Section~\ref{sec:l2} that the minimum-$\ell_2$-norm interpolator fails at signal identifiability because it tends to bleed signal out across multiple features, rather than concentrating it in a few entries.
On the other hand, in Section~\ref{sec:l1} we saw that purely sparsity-seeking interpolators are poor choices to fit noise harmlessly because of their parsimony -- they only spread noise among $n$ features.
Thus, the goals for signal identifiability and harmless noise-fitting are fundamentally at odds with one another in the overparameterized regime with underlying sparsity.

This tradeoff suggests that the ``optimal" interpolator should use different procedures for fitting the part of the output that is signal, and the part of the output that is noise.
It suggests the application of a hybrid scheme, described in the $k$-sparse regime below.
Recall that we have assumed whitened feature families, i.e. $\Sigmabold = \mathbf{I}_d$, for ease of exposition\footnote{This is primarily to be able to state our corollaries for sparse signal recovery out-of-the-box.
However, more general versions of these results will hold for unwhitened feature families as well.
}.

\begin{definition}\label{def:hybrid}
Corresponding to any estimator $\alphahat_1$ of $\alphastar$ for the $(k,\sigma$)-whitened sparse linear model (note that this estimator \textbf{need not interpolate}), we can define a two-step \textbf{hybrid interpolator} as below:
\begin{enumerate}
    \item Compute the residual $\Wtrain' = \Ytrain - \Atrain \alphahat_1$.
    \item Return $\alphahat_{\mathsf{hybrid}} := {\arg \min} \vecnorm{\alphabold - \alphahat_1}{2}\text{ subject to $\alphabold$ satisfying Equation~\eqref{eq:interpolatingsoln} }$.
    Clearly, an equivalent characterization is $\alphahat_{\mathsf{hybrid}} = \alphahat_1 + \Deltahat$, where $\Deltahat := {\arg \min} \vecnorm{\Deltabold}{2} \text{ subject to } \Atrain \Deltabold = \Wtrain'$.
\end{enumerate}
\end{definition}

Observe that the feasibility constraint is just a rewriting of Equation~\eqref{eq:interpolatingsoln}, ensuring that the estimator $\alphahat_{\mathsf{hybrid}}$ interpolates the data.
The guarantee that such a hybrid scheme achieves on test MSE is stated in the following proposition.

\begin{proposition}\label{prop:sparseinterpolators}
Denote the \textbf{estimation error} and \textbf{prediction error} of the estimator $\alphahat_1$ by
\begin{align*}
\mathcal{E}_{\mathsf{est}}(\alphahat_1) &= \vecnorm{\alphahat_1 - \alphastar}{2}^2 \\
\mathcal{E}_{\mathsf{pred}}(\alphahat_1) &= \frac{1}{n}\vecnorm{\Atrain(\alphahat_1 - \alphastar)}{2}^2 .
\end{align*}
Then, for \textbf{any} estimator $\alphahat_1$, the hybrid estimator $\alphahat_{\mathsf{hybrid}}$ has test MSE
\begin{align}\label{eq:hybridinterpolator}
    \testerr(\alphahat_{\mathsf{hybrid}}) \leq \mathcal{E}_{\mathsf{est}}(\alphahat_1) + \frac{2\vecnorm{\Wtrain}{2}^2 + 2 n \mathcal{E}_{\mathsf{pred}}(\alphahat_1)}{\lambda_{min}(\Atrain \Atrain^\top)^{-1}} + \sigma^2
\end{align}
\end{proposition}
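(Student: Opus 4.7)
The plan is to exploit an orthogonal decomposition of $\alphahat_{\mathsf{hybrid}} - \alphastar$ with respect to the row and null spaces of $\Atrain$. First, since the features are whitened ($\Sigmabold = \mathbf{I}_d$), the calculation in the proof of Theorem~\ref{thm:idealinterpolator} gives $\testerr(\alphahat) = \vecnorm{\alphahat - \alphastar}{2}^2$ for every $\alphahat$. Also, by construction $\Atrain \alphahat_{\mathsf{hybrid}} = \Atrain \alphahat_1 + \Atrain \Deltahat = \Atrain \alphahat_1 + \Wtrain' = \Ytrain$, so $\alphahat_{\mathsf{hybrid}}$ is indeed an interpolator.

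Next, I would write $\Deltahat$ out explicitly. Since $\Deltahat$ is the minimum-$\ell_2$-norm solution to $\Atrain \Deltabold = \Wtrain'$, it equals $\Atrain^\top (\Atrain \Atrain^\top)^{-1} \Wtrain'$, where the inverse exists by the paper's standing assumption $\mathrm{rank}(\Atrain) = n$. Substituting $\Wtrain' = \Ytrain - \Atrain \alphahat_1 = \Wtrain + \Atrain(\alphastar - \alphahat_1)$ gives
\begin{align*}
    \Deltahat \;=\; \Atrain^\top (\Atrain \Atrain^\top)^{-1} \Wtrain \;-\; P_{\mathsf{row}}(\alphahat_1 - \alphastar),
\end{align*}
where $P_{\mathsf{row}} := \Atrain^\top (\Atrain \Atrain^\top)^{-1} \Atrain$ is the orthogonal projector onto the row space of $\Atrain$. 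Consequently,
\begin{align*}
    \alphahat_{\mathsf{hybrid}} - \alphastar \;=\; (\alphahat_1 - \alphastar) + \Deltahat \;=\; P_{\mathsf{null}}(\alphahat_1 - \alphastar) + \Atrain^\top (\Atrain \Atrain^\top)^{-1} \Wtrain,
\end{align*}
with $P_{\mathsf{null}} := \mathbf{I} - P_{\mathsf{row}}$.

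The two summands lie in orthogonal subspaces, so Pythagoras yields
\begin{align*}
    \testerr(\alphahat_{\mathsf{hybrid}}) \;=\; \vecnorm{P_{\mathsf{null}}(\alphahat_1 - \alphastar)}{2}^2 \;+\; \Wtrain^\top (\Atrain \Atrain^\top)^{-1} \Wtrain .
\end{align*}
The first term is at most $\vecnorm{\alphahat_1 - \alphastar}{2}^2 = \mathcal{E}_{\mathsf{est}}(\alphahat_1)$ because orthogonal projection is non-expansive, and the second is at most $\vecnorm{\Wtrain}{2}^2 / \lambda_{\min}(\Atrain \Atrain^\top)$. This is in fact tighter than the claim of Proposition~\ref{prop:sparseinterpolators}, and already suffices for the downstream corollaries. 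To reproduce the exact form stated (which features $\mathcal{E}_{\mathsf{pred}}$ explicitly), one can instead forgo the orthogonal decomposition: bound $\vecnorm{\Deltahat}{2}^2 \leq \vecnorm{\Wtrain'}{2}^2 / \lambda_{\min}(\Atrain \Atrain^\top)$, then apply the triangle inequality to $\Wtrain' = \Wtrain + \Atrain(\alphastar - \alphahat_1)$ to get $\vecnorm{\Wtrain'}{2}^2 \leq 2\vecnorm{\Wtrain}{2}^2 + 2 n \mathcal{E}_{\mathsf{pred}}(\alphahat_1)$, and combine with a direct expansion of $\vecnorm{(\alphahat_1 - \alphastar) + \Deltahat}{2}^2$.

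No step is genuinely delicate. The one piece requiring a little care is the rewriting of $\Deltahat$, which uses only the closed form for the Moore--Penrose pseudoinverse of a full-row-rank matrix and the linearity of $\Atrain^\top (\Atrain \Atrain^\top)^{-1}$ in $\Wtrain'$; apart from that, the argument is purely algebraic.
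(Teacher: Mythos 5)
Your proof is correct, and it takes a genuinely cleaner route than the paper's. The paper's own argument starts from the inequality
\begin{align*}
\vecnorm{\alphahat_{\mathsf{hybrid}} - \alphastar}{2}^2 \;\leq\; \vecnorm{\alphahat_1 - \alphastar}{2}^2 + \vecnorm{\alphahat_{\mathsf{hybrid}} - \alphahat_1}{2}^2,
\end{align*}
invoking a ``triangle inequality'' that drops the cross term $2\inprod{\alphahat_1 - \alphastar}{\Deltahat}$, which is not obviously nonpositive (indeed $\Deltahat$ carries a $\Wtrain$ component that can align with $\Atrain(\alphahat_1 - \alphastar)$). It then bounds $\vecnorm{\Deltahat}{2}^2$ through $\vecnorm{\Wtrain'}{2}^2 / \lambda_{\min}(\Atrain\Atrain^\top)$ and expands $\vecnorm{\Wtrain'}{2}^2$ to introduce the $\mathcal{E}_{\mathsf{pred}}$ term. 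Your decomposition $\alphahat_{\mathsf{hybrid}} - \alphastar = P_{\mathsf{null}}(\alphahat_1 - \alphastar) + \Atrain^\top(\Atrain\Atrain^\top)^{-1}\Wtrain$ shows exactly why the conclusion holds: the two pieces sit in genuinely orthogonal subspaces, so Pythagoras applies with \emph{equality}, the non-expansiveness of $P_{\mathsf{null}}$ kills the cross term cleanly, and you only pay $\Wtrain^\top(\Atrain\Atrain^\top)^{-1}\Wtrain \le \vecnorm{\Wtrain}{2}^2/\lambda_{\min}(\Atrain\Atrain^\top)$. This is strictly sharper than the stated Equation~\eqref{eq:hybridinterpolator}: no factors of $2$, no $\mathcal{E}_{\mathsf{pred}}$ contribution, and the residual $\Wtrain'$ never enters. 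What the paper's route buys is that the bound is written directly in terms of the standard prediction-error quantity $\mathcal{E}_{\mathsf{pred}}$ returned by off-the-shelf sparse-recovery guarantees, so it plugs in without further manipulation; what yours buys is rigor on the first step and a tighter constant, as you correctly observe. One small caveat on your alternative ``forgo the decomposition'' sketch: a literal direct expansion of $\vecnorm{(\alphahat_1 - \alphastar) + \Deltahat}{2}^2$ still leaves a cross term to control, so to land exactly on the paper's form one would either use $2ab \le a^2 + b^2$ (and then the coefficient on $\mathcal{E}_{\mathsf{est}}$ becomes $2$, not $1$) or fall back to your orthogonal decomposition anyway; the cleanest justification of the stated bound remains that your tighter inequality dominates it.
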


Proposition~\ref{prop:sparseinterpolators} shows a natural split in the error of such hybrid interpolators in terms of two quantities: the error that arises from signal recovery, and the error that arises from fitting noise.
The proof of Proposition~\ref{prop:sparseinterpolators} follows simply from the insights already presented and is given in Appendix~\ref{app:sparsity} for completeness.

Using the ideas from the proof of Corollary~\ref{cor:crabpot}, one can then show that the performance of a hybrid interpolator fundamentally depends on the ability of the sparse signal estimator to estimate the signal, plus the excess error incurred by the ideal interpolator of pure noise, i.e. the ideal test MSE arising from fitting noise.
For the rest of this section, we focus on the special case where the entries of $\Atrain$ are drawn from the standard normal distribution, i.e. $\Atrain$ satisfies Assumption~\ref{as:ideal3} together with whiteness.
The literature on sparse signal recovery is rich, and we can interpret the guarantee provided by Proposition~\ref{prop:sparseinterpolators} for a variety of choices of the first-step estimator $\alphahat_1$.
A summary of these results is contained in Table~\ref{tab:corollaries}.

\begin{table}[]
\centering
\begin{tabular}{ccc}
Estimator $\alphahat_1$ & Side information & Test MSE of hybrid interpolator  \\
\hline
Optimal (e.g. SLOPE)  & $\sigma^2$ & $\Oh\left(\sigma^2 \cdot \frac{k \ln \left(\frac{d}{k}\right)}{n} + \sigma^2 \cdot \frac{n}{d}\right) + \sigma^2$ \\                                   \\
Lagrangian Lasso  & $\sigma^2$ & $\Oh\left(\sigma^2 \cdot \frac{k \ln d}{n} + \sigma^2 \cdot \frac{n}{d}\right) + \sigma^2$ \\ \\
OMP upto $k$ steps & $k$  &  $\Oh\left(\sigma^2 \cdot \frac{k \ln d}{n} + \sigma^2 \cdot \frac{n}{d}\right) + \sigma^2$ \\ \\
OMP upto $k_0 > k$ steps & Upper bound on $k$ &  $\Oh\left(\sigma^2 \cdot \frac{k_0 \ln d}{n} + \sigma^2 \cdot \frac{n}{d}\right) + \sigma^2$ \\ \\
Square-root-Lasso & None &  $\Oh\left(\sigma^2 \cdot \frac{k \ln d}{n} + \sigma^2 \cdot \frac{n}{d}\right) + \sigma^2$ \\ \\
\end{tabular}
\caption{Corollaries of Proposition~\ref{prop:sparseinterpolators} for a variety of initial estimators for the $(k,\sigma)$-sparse linear model.}
\label{tab:corollaries}
\end{table}

First, we consider estimators $\alphahat_1$ that are optimal in their scaling with respect to $(k,\sigma^2,n,d)$ in the sparse regime.
From here on, we will call these order-optimal estimators.
\begin{corollary}\label{cor:minimaxoptimal}
Consider the feature matrix $\Atrain$ with iid standard Gaussian entries, and any estimator $\alphahat_1$ that is order-optimal in both estimation error and prediction error, i.e. there exist universal constants $C, C' > 0$ such that
\begin{subequations}
\begin{align}
\mathcal{E}_{\mathsf{est}}(\alphahat_1) &\leq \frac{Ck \ln \left(\frac{d}{k}\right)}{n} \label{eq:minimaxest} \\
\mathcal{E}_{\mathsf{pred}}(\alphahat_1)&\leq \frac{C'k \ln \left(\frac{d}{k}\right)}{n} \label{eq:minimaxpred} .
\end{align}
\end{subequations}

with high probability (over the randomness of both the noise $\Wtrain$ and the randomness in the whitened feature matrix $\Atrain$).
Then, provided that $d > 4n$, the hybrid estimator $\alphahat_{\mathsf{hybrid}}$ that is based on estimator $\alphahat_1$ gives us test MSE
\begin{align}\label{eq:minimaxoptimal}
\testerr(\alphahat_{\mathsf{hybrid}}) &\leq C''\sigma^2\frac{k \ln \left(\frac{d}{k}\right)}{n} + C'''\sigma^2\frac{n}{(\sqrt{d} - 2\sqrt{n})^2} + \sigma^2 
\end{align}
for universal constants $C'', C''' > 0$.
Such a hybrid interpolator is \textbf{order-optimal among all interpolating solutions.}
\end{corollary}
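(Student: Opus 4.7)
The plan is to combine the decomposition of Proposition~\ref{prop:sparseinterpolators} directly with the non-asymptotic Gaussian matrix bounds already developed in the proof of Corollary~\ref{cor:crabpot}. The hypotheses \eqref{eq:minimaxest}--\eqref{eq:minimaxpred} handle the signal-recovery contribution, so all that remains is to control the noise norm and the smallest eigenvalue of $\Atrain \Atrain^\top$ in high probability, and then to tally the orders.

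Concretely, I would plug three ingredients into the bound in Proposition~\ref{prop:sparseinterpolators}: (i) the hypotheses on $\mathcal{E}_{\mathsf{est}}(\alphahat_1)$ and $\mathcal{E}_{\mathsf{pred}}(\alphahat_1)$, absorbing the implicit $\sigma^2$ into the constants $C, C'$ as is customary for SLOPE-type estimators; (ii) the chi-squared upper-tail bound $\vecnorm{\Wtrain}{2}^2 \leq n\sigma^2(1+\delta)$ with probability at least $1 - e^{-n\delta^2/8}$, reused verbatim from the proof of Corollary~\ref{cor:crabpot}; and (iii) Lemma~\ref{lem:gaussianconcentration} applied with $t = \sqrt{n}$, giving $\lambda_{\min}(\Atrain \Atrain^\top) \geq (\sqrt{d} - 2\sqrt{n})^2$ with probability at least $1 - e^{-n/2}$. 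The hypothesis $d > 4n$ is precisely what keeps this lower bound strictly positive. A union bound over these three events, substituted into Proposition~\ref{prop:sparseinterpolators}, turns the noise piece into $2n\sigma^2(1+\delta)/(\sqrt{d}-2\sqrt{n})^2$, which matches the second summand of \eqref{eq:minimaxoptimal}, while the estimation-error hypothesis matches the first summand.

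For the order-optimality claim I would match each summand against a converse. The $\sigma^2 n/(\sqrt{d}-2\sqrt{n})^2 = \Theta(\sigma^2 n/d)$ piece is lower-bounded by Corollary~\ref{cor:fundamentalprice}, which applies to every interpolating solution in the Gaussian design setting. The $\sigma^2 k \ln(d/k)/n$ piece is matched by the classical minimax lower bound for estimation in the $(k,\sigma)$-sparse Gaussian linear model, which applies to any estimator, interpolating or not. Hence no interpolating solution can asymptotically improve on \eqref{eq:minimaxoptimal} in either its signal-recovery or its noise-fitting term.

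The one step I would expect to need a little care with is the absorption of the prediction-error contribution $\frac{2C' \sigma^2 k\ln(d/k)}{(\sqrt{d}-2\sqrt{n})^2}$ into the first summand of \eqref{eq:minimaxoptimal}. For $d$ bounded away from $4n$, say $d \geq (4+\epsilon)n$, the denominator is of order $n$, so this contribution is of order $\sigma^2 k\ln(d/k)/n$ and is folded into $C''$; but the effective constant degenerates as $d \downarrow 4n$, so one either strengthens the hypothesis of the corollary slightly or tracks an explicit $(1 - 2\sqrt{n/d})^{-2}$ factor in the final constants. In the deeply overparameterized regime that is the main point of the result, this subtlety disappears and both summands are clean.
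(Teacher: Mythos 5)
Your proof follows the paper's argument exactly: substitute the order-optimality hypotheses into Proposition~\ref{prop:sparseinterpolators}, apply Lemma~\ref{lem:gaussianconcentration} with $t=\sqrt{n}$ to bound $\lambda_{\min}(\Atrain\Atrain^\top)$ from below, use the chi-squared tail bound for $\vecnorm{\Wtrain}{2}^2$, and then fold the prediction-error contribution into the first summand using $d>4n$ so that $(\sqrt{d}-2\sqrt{n})^2 \gtrsim n$. Two small points in your favor: you correctly use the chi-squared \emph{upper} tail $\vecnorm{\Wtrain}{2}^2 \leq n\sigma^2(1+\delta)$ where the paper's write-up contains a sign typo, and your closing remark that the constant $C''$ degenerates as $d \downarrow 4n$ is a real subtlety that the paper's proof implicitly sidesteps by taking $d \geq cn$ with $c$ strictly greater than $4$.
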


To see that the rate in Equation~\eqref{eq:minimaxoptimal} is order-optimal, note that any interpolating solution would need to incur error at least $\sigma^2 \frac{n}{(\sqrt{d} + 2\sqrt{n})^2}$ due to the effect of fitting pure noise (from the lower bound part of Corollary~\ref{cor:crabpot}).
On the other hand, we know that \textit{any} sparse-signal estimator, regardless of whether it interpolates or not, would need to incur error at least $\sigma^2\frac{k \ln \left(\frac{d}{k}\right)}{n}$.
Thus, the test MSE of any interpolating solution on at least one instance in the sparse regime would have to be at least
\begin{align*}
\sigma^2 \max\left\{\frac{n}{(\sqrt{d} + 2\sqrt{n})^2},   \frac{k \ln \left(\frac{d}{k}\right)}{n} \right\} \geq \frac{\sigma^2}{2}\left(\frac{n}{(\sqrt{d} + 2\sqrt{n})^2} + \frac{k \ln \left(\frac{d}{k}\right)}{n}\right)
\end{align*}

which exactly matches the rate in Equation~\eqref{eq:minimaxoptimal} upto constants.

One example of such an order-optimal estimator is the SLOPE estimator which was recently analyzed~\cite{bellec2018slope}.
If one is willing to tolerate a slightly slower rate of $\Oh\left(\sigma^2 \cdot \frac{k\ln d}{n}\right)$ for signal recovery, several other estimators can be used.
We state the recovery guarantee (informally) for two choices below.
Formal statements are proved in Appendix~\ref{app:sparsity}.

\begin{corollary}\label{cor:lassoomp}
Consider the estimators $\alphahat_1 := \alphahat_{\mathsf{Las.},1}$ and $\alphahat_1 := \alphahat_{\mathsf{OMP},1}$ obtained by Lasso and OMP respectively for suitable choices of regularizer and stopping rule\footnote{Details in Appendix~\ref{app:sparsity}.} respectively.
Assuming lower bounds on number of samples $n$ and signal strength respectively, the hybrid estimator based on either of these estimators then gives us test MSE
\begin{align*}
\testerr(\alphahat_{\mathsf{hybrid}}) &\leq C''\sigma^2\frac{k \ln d}{n} + C'''\sigma^2\frac{n}{d} + \sigma^2 
\end{align*}
for universal constants $C'', C''' > 0$.
\end{corollary}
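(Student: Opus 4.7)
The plan is to reduce this to Proposition~\ref{prop:sparseinterpolators} combined with three ingredients: (i) standard prediction/estimation error bounds for Lasso and OMP in the $(k,\sigma)$-sparse linear model, (ii) the lower bound on $\lambda_{\min}(\Atrain\Atrain^\top)$ from Lemma~\ref{lem:gaussianconcentration}, and (iii) an upper tail bound on $\vecnorm{\Wtrain}{2}^2$ analogous to Equation~\eqref{eq:st2}.

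First, I would invoke Proposition~\ref{prop:sparseinterpolators} with $\alphahat_1 \in \{\alphahat_{\mathsf{Las.},1}, \alphahat_{\mathsf{OMP},1}\}$, which gives
\begin{align*}
\testerr(\alphahat_{\mathsf{hybrid}}) \leq \mathcal{E}_{\mathsf{est}}(\alphahat_1) + \frac{2\vecnorm{\Wtrain}{2}^2 + 2n\,\mathcal{E}_{\mathsf{pred}}(\alphahat_1)}{\lambda_{\min}(\Atrain\Atrain^\top)} + \sigma^2.
\end{align*}
For the first-step estimator, I would cite the classical high-probability guarantees: for Lagrangian Lasso with regularization parameter $\lambda \asymp \sigma\sqrt{\ln d / n}$, under a restricted eigenvalue condition (which holds with high probability for iid standard Gaussian rows when $n \gtrsim k\ln d$), one has both $\mathcal{E}_{\mathsf{est}}(\alphahat_{\mathsf{Las.},1}) \lesssim \sigma^2 k\ln d / n$ and $\mathcal{E}_{\mathsf{pred}}(\alphahat_{\mathsf{Las.},1}) \lesssim \sigma^2 k \ln d / n$. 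For OMP, under a sufficiently large minimum signal strength (the ``beta-min'' condition) and mutual-incoherence-type conditions on $\Atrain$, the first $k$ steps exactly recover $\supp(\alphastar)$, after which the estimator is the least-squares solution restricted to the true support, yielding the same $\sigma^2 k \ln d / n$ scaling for both error metrics.

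Next, I would control the denominator: since $\Atrain$ has iid $\NORMAL(0,1)$ entries, Lemma~\ref{lem:gaussianconcentration} applied with $t = \sqrt{n}$ gives $\lambda_{\min}(\Atrain\Atrain^\top) \geq (\sqrt{d} - 2\sqrt{n})^2$ with probability at least $1 - e^{-n/2}$, and the assumption $d > 4n$ (or any constant multiple bounded away from $1$) ensures this denominator is $\Theta(d)$. Combined with the chi-square upper-tail bound $\vecnorm{\Wtrain}{2}^2 \leq 2n\sigma^2$ on a high-probability event, the middle term is at most $C\sigma^2 n / d + C' \sigma^2 (k \ln d) / d$, and the latter is absorbed into $\sigma^2 k\ln d / n$ whenever $d \geq n$. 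Adding back $\mathcal{E}_{\mathsf{est}}(\alphahat_1)$ and $\sigma^2$ and taking a union bound over the three high-probability events yields the claimed rate.

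The main obstacle, in my view, is not the random matrix theory — which is essentially a direct application of Lemma~\ref{lem:gaussianconcentration} — but carefully assembling and stating the hypotheses under which the first-step guarantees hold. Lasso needs a restricted eigenvalue (or compatibility) condition plus $\lambda$ of the right order, while OMP needs a signal-strength lower bound and an incoherence condition. Both sets of hypotheses are known to hold with high probability for iid standard Gaussian $\Atrain$ when $n \gtrsim k \ln d$, but the bookkeeping — specifying ``suitable choices of regularizer and stopping rule,'' tracking that all high-probability events hold simultaneously, and verifying that the sparse-recovery randomness (over $\Atrain$) is compatible with the random-matrix concentration used for $\lambda_{\min}(\Atrain\Atrain^\top)$ — is the delicate part and is deferred to Appendix~\ref{app:sparsity}.
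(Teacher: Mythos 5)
Your proposal is correct and follows essentially the same route as the paper: Proposition~\ref{prop:sparseinterpolators} as the master inequality, the Bickel--Ritov--Tsybakov bounds for the Lagrangian Lasso under a restricted eigenvalue condition, the Cai--Wang support-recovery result for OMP under incoherence and a beta-min condition, Lemma~\ref{lem:gaussianconcentration} with $t=\sqrt{n}$ for the denominator, and the chi-square upper tail for $\vecnorm{\Wtrain}{2}^2$, assembled by a union bound. One small imprecision worth flagging: for OMP the paper's incoherence condition $\mu(\Atrain) < 1/(2k-1)$ requires $n = \Omega(k^2 \ln d)$ for iid Gaussian design, not $n \gtrsim k\ln d$ as you state for both estimators -- the weaker $k\ln d$ requirement applies only to the Lasso via the restricted eigenvalue condition.
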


Corollaries~\ref{cor:minimaxoptimal} and~\ref{cor:lassoomp} are proved in Appendix~\ref{app:sparsity} with explicit analysis of Lasso and OMP for completeness.
The signal recovery guarantees are slightly different for the Lagrangian Lasso and OMP - for the former, we restate a direct bound~\cite{bickel2009simultaneous} on the $\ell_2$-norm of the signal recovery error, which scales as given in the informal equation above.
For the latter, our error bound derives from a much stronger variable selection guarantee~\cite{cai2011orthogonal} which implicitly requires the SNR to scale as $\omega(\ln d)$.
For both, it is worth paying attention to the conditions strictly \textit{required} for successful recovery -- for the Lagrangian Lasso, the condition manifests as a lower bound on the number of samples $n$ as a function of $(k,d)$.
For OMP, the condition manifests as a lower bound on the absolute values of non-zero coefficients $\{\alpha^*_j\}, j \in \supp(\alphastar)$.
For further details on these conditions, see Appendix~\ref{app:sparsity}.

The examples of estimators provided so far for the first step of the hybrid interpolator require knowledge of the noise variance $\sigma^2$ -- they use it either to regularize appropriately, or to define an appropriate stopping rule.
While one could always estimate this parameter through cross-validation, there do exist successful estimators that do not have access to this side information.
One example is provided below.

\begin{corollary}\label{cor:sqrootlasso}
Consider the estimator $\alphahat_1 := \alphahat_{\mathsf{Sq.rt.Las.},1}$ which is based on the \textbf{square-root-Lasso} for suitable choice of regularizer \textbf{that does not depend on the noise variance $\sigma^2$} (details in Appendix~\ref{app:sparsity}).
The hybrid estimator then gives us test MSE
\begin{align*}
\testerr(\alphahat_{\mathsf{hybrid}}) &\leq C''\sigma^2\frac{k \ln d}{n} + C'''\sigma^2\frac{n}{d} + \sigma^2 
\end{align*}
for universal constants $C'', C''' > 0$.
\end{corollary}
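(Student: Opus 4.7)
The plan is to apply Proposition~\ref{prop:sparseinterpolators} directly with $\alphahat_1 = \alphahat_{\mathsf{Sq.rt.Las.},1}$, following exactly the template already used to establish Corollaries~\ref{cor:minimaxoptimal} and~\ref{cor:lassoomp}. The only novel ingredient I need is an estimation-plus-prediction error guarantee for the square-root-Lasso that is $\sigma$-free in its choice of regularizer. With that in hand, the rest is essentially plugging into the hybrid bound and reusing the random-matrix concentration already developed for Corollary~\ref{cor:crabpot}.

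First, I would recall the square-root-Lasso estimator
\begin{equation*}
\alphahat_{\mathsf{Sq.rt.Las.},1} \in \argmin_{\alphabold \in \reals^d} \ \frac{1}{\sqrt{n}}\vecnorm{\Ytrain - \Atrain \alphabold}{2} + \lambda \vecnorm{\alphabold}{1},
\end{equation*}
and choose the \emph{pivotal} regularizer $\lambda = C \sqrt{\ln d / n}$ for a universal constant $C > 0$ \textbf{independent of $\sigma$}. The key classical fact (Belloni--Chernozhukov--Wang) is that on a whitened Gaussian design satisfying a restricted eigenvalue condition—which holds w.h.p.\ for $\Atrain$ with iid $\NORMAL(0,1)$ entries when $n \gtrsim k \ln d$—this choice yields
\begin{equation*}
\mathcal{E}_{\mathsf{est}}(\alphahat_{\mathsf{Sq.rt.Las.},1}) \leq C_1 \sigma^2 \frac{k \ln d}{n}, \qquad \mathcal{E}_{\mathsf{pred}}(\alphahat_{\mathsf{Sq.rt.Las.},1}) \leq C_2 \sigma^2 \frac{k \ln d}{n},
\end{equation*}
each holding with probability tending to one in $(n,d)$. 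This is the same rate enjoyed by the Lagrangian Lasso (Corollary~\ref{cor:lassoomp}), but without needing side information about $\sigma^2$; the $\sigma$-dependence enters only through the empirical residual norm $\frac{1}{\sqrt{n}}\vecnorm{\Ytrain - \Atrain\alphabold}{2}$ itself, which is what makes the method pivotal. I would give a compressed restatement of this result and cite the original analysis rather than reproducing the KKT/restricted-eigenvalue argument in full.

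Next, I substitute these two bounds into Proposition~\ref{prop:sparseinterpolators}. The ``noise-fitting'' term $\frac{2\vecnorm{\Wtrain}{2}^2 + 2n \mathcal{E}_{\mathsf{pred}}(\alphahat_1)}{\lambda_{\min}(\Atrain \Atrain^\top)}$ is handled exactly as in the proof of Corollary~\ref{cor:crabpot}: combine the chi-squared upper tail $\vecnorm{\Wtrain}{2}^2 \leq n\sigma^2(1+\delta)$ with the Gaussian singular-value bound $\lambda_{\min}(\Atrain\Atrain^\top) \geq (\sqrt{d} - 2\sqrt{n})^2$ from Lemma~\ref{lem:gaussianconcentration}. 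Under $d > 4n$ the denominator is $\Theta(d)$, so this term contributes
\begin{equation*}
\Oh\!\left( \sigma^2 \frac{n}{d} \right) + \Oh\!\left( \frac{n \cdot \sigma^2 k \ln d / n}{d}\right) = \Oh\!\left(\sigma^2 \frac{n}{d}\right) + \Oh\!\left(\sigma^2 \frac{k \ln d}{d}\right),
\end{equation*}
and the second piece is dominated by $\sigma^2 k \ln d / n$ whenever $d > n$. Combining with $\mathcal{E}_{\mathsf{est}}(\alphahat_1) = \Oh(\sigma^2 k \ln d / n)$ and the leftover $+\sigma^2$ term from the proposition yields the advertised rate.

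The main obstacle is verifying the pivotal guarantee for the square-root-Lasso on the random Gaussian design, i.e.\ checking that the restricted eigenvalue/compatibility condition holds with high probability and that the $\sigma$-free choice of $\lambda$ dominates the ``noise score'' $\vecnorm{\Atrain^\top \Wtrain}{\infty}$ relative to the empirical residual scale. Everything else is mechanical plugging-in, identical in spirit to the proofs of Corollaries~\ref{cor:minimaxoptimal} and~\ref{cor:lassoomp}. I would defer the technical verification to an appendix and import the recovery bound as a black box, emphasizing in the main text only what is \emph{new}: that removing the dependence on $\sigma^2$ in the first-step estimator does not change the order of the hybrid interpolator's test MSE.
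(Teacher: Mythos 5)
Your proposal is correct and follows essentially the same route as the paper: cite the pivotal recovery guarantee of Belloni--Chernozhukov--Wang for the square-root-Lasso with a $\sigma$-free regularization parameter, invoke the restricted eigenvalue condition for iid Gaussian design, plug the resulting estimation/prediction error rates into Proposition~\ref{prop:sparseinterpolators}, and reuse the chi-squared tail and Gaussian singular-value concentration from Corollary~\ref{cor:crabpot}. The paper's appendix does exactly this (stating the guarantee as Theorem~\ref{thm:sqrootlassorecovery} with $\gamma_n = 2C'(\sqrt{2\ln d/n} + \delta)$ and then deferring to the Corollary~\ref{cor:minimaxoptimal} template), and your additional observation that the $n\mathcal{E}_{\mathsf{pred}}/\lambda_{\min}(\Atrain\Atrain^\top)$ term is dominated by the estimation error when $d > n$ is the same bookkeeping that makes that deferral go through.
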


Regardless of which sparse signal estimator is used in the first step of the hybrid interpolator (Definition~\ref{def:hybrid}), the qualitative story is the same: there is a tradeoff in how to overparameterize, i.e. how to set $d >> n$.
As we increase the number of features in the family, the error arising from fitting noise goes down as $1/d$ - but these ``fake features" also have the potential to be falsely discovered\footnote{This same idea inspired recent work on explicitly constructing fake ``knockoff" features to draw away energy from features that have the potential to be falsely discovered~\cite{barber2015controlling}.}, thus driving up the signal recovery error rate logarithmically in $d$.
This logarithmic-linear tradeoff still ensures that the best test error is achieved when we sizeably overparameterize, even if not at $d = \infty$ like if we were only fitting noise.
Figure~\ref{fig:RG_sparse_MSE} considers an example with iid Gaussian design and true sparsity level $500$ for various ranges of noise variance: $\sigma^2 = 10^{-4}, 10^{-2}$ and $10^{-1}$.
For all these ranges, we observe that hybrid interpolators' test MSE closely track the ideal MSE.

\section{Conclusions for high-dimensional generative model}
Our results, together with other work in the last year~\cite{hastie2019surprises,belkin2019two,bartlett2019benign,bibas2019new}, provide a significant understanding of the ramifications of selecting interpolating solutions of \textit{noisy} data generated from a high-dimensional, or \textit{overparameterized} linear model, i.e. where the number of features used far exceeds the number of samples of training data.
Key takeaways are summarized below:

\begin{enumerate}
\item While ``denoising" the output is always strictly preferred to simply interpolating both signal and noise, the additional price of this interpolation on noise can be minimal when the regime is extremely overparameterized.
The price decays to $0$ as the level of overparameterization goes to infinity. 
We can now intuitively see that this is a consequence of the ability of aliased features to absorb and dissipate training noise energy --- figuratively spreading it out over a much larger bandwidth. 
\item Realizing this potential for ``harmless interpolation of noise'' critically depends on the number of such aliased features, as shown for Gaussian feature families in~\cite{bartlett2019benign} for the minimum $\ell_2$-norm interpolator, and this in fact holds completely generally even for ideal interpolation. Overparameterization is \textit{necessary}, for harmless interpolation.
\item The $\ell_2$-minimizing interpolator of noisy data with whitened features successfully absorbs noise energy for maximal harmlessness, but also dissipates signal across multiple aliases, as also shown in~\cite{hastie2019surprises}. 
To prevent this ``bleeding of signal", there needs to be a strong implicit bias towards a small, \textit{a-priori known} set of pre-determined feature directions.
Moreover, as mentioned in~\cite{bartlett2019benign}, to be successful, we should only care about measuring generalization error along these pre-determined feature directions --- i.e.~the true signal needs to be in this known subspace to be reliably learned by the $\ell_2$-min interpolator.
\item The $\ell_1$-minimizing interpolator of \textit{noisy} data suffers from the reverse problem: it more successfully preserves signal (when indeed the true signal is sparsely representable), but does not absorb training noise harmlessly.
\item Interpolators in the sparse linear model satisfying the diametrically opposite goals of \textit{sparse signal preservation} and \textit{noise energy absorption} exist, and can be understood as having a two-step, ``hybrid" nature. They first fit the signal as best they can and then interpolate everything in a way that spreads noise out. This gives asymptotic rates that are, in an order sense, the best that could be hoped for.
\end{enumerate}

These takeaways paint a more measured picture for the practical use of interpolation in \textit{noisy} high-dimensional linear regression.
While it is intellectually interesting that interpolation need not be harmful, it is practically always suboptimal to regularization with denoising.
This was pointed out in~\cite[Sections $6$ and $7$]{hastie2019surprises} for the $\ell_2$-minimizing interpolator \textit{vis-a-vis} Tikhonov regularization at the optimal level (and this optimal level can be estimated using cross-validation). Our discussion of Tikhonov and ridge regularization shows that this is because ridge regression can be interpreted as a further level of overparameterization, except using features that manifest as $0$ during test-time. The
best possible interpolating estimators are an additional $\Oh\left(\sigma^2 \frac{n}{d}\right)$ worse in test MSE than their optimal denoising counterparts. 

Moreover, because of the fundamental tradeoff between signal preservation and noise absorption, constructed consistent estimators in a two-step procedure by building on optimal regularizers to further fit noise -- a second step that indeed appears rather artificial and unnecessary. 
A conceivable benefit to interpolating solutions was their lack of dependence on prior knowledge of the noise variance $\sigma^2$; however, regularizing estimators are also now known to work in the absence of this knowledge -- either by modifying the optimization objective to make it SNR-invariant~\cite{belloni2011square}, or by first estimating the SNR~\cite{verzelen2018adaptive}. 

\subsection{Future Directions}

The benefits of overparameterization, however, deserve considerable future attention and scrutiny -- regardless of whether the solutions interpolate or regularize.
The missing ingredient in our discussion above is the potential for \textit{model misspecification}, i.e. data that can only be approximated by the high-dimensional linear model. (The example~\ref{eg:wiggly} in Section~\ref{sec:l2} is just a caricature in that direction.)
In the empirical double descent papers~\cite{geiger2018jamming,belkin2019reconciling}, the second descent of the test MSE in the overparameterized regime is \textit{strictly better} than the first descent in the underparameterized regime.
From our concrete understanding of the correctly specified high-dimensional regime, it is clear that this is only possible in the presence of an approximation-theoretic benefit of adding more features into the model.
Analyzing this benefit is very challenging generally, but in the special case when the features are independent,~\cite[Theorem $5$]{hastie2019surprises} provides asymptotic minimal scalings for the approximation-theoretic benefit under which a double descent curve can be obtained with the $\ell_2$-minimizing interpolator.
Even here, the global minimum of the empirical risk is \textit{not} at infinite overparameterization.
Very recetly, Mei and Montanari~\cite{mei2019generalization} provided a promising result: they theoretically recovered the second descent of the double-descent curve for the Tikhonov-regularized solution (note that this is \textit{not} an interpolator) on random Fourier features, which are clearly not independent.
More work remains to be done to characterize necessary \textit{and} sufficient conditions for a double descent in the empirical risk -- such conditions include the generative model for the data, the choice of featurization, the choice of estimator, and whether this estimator should even interpolate.

Finally, as also pointed out by~\cite{bartlett2019benign}, we are far from understanding the ramifications of overparameterization on generalization for the original case study that motivated recent interest in the overparameterized regime: deep neural networks with many more parameters than training points~\cite{neyshabur2014search,zhang2016understanding}.
In most practically used neural networks, the overparameterization includes both\textit{depth} rather than width, resulting in the presence of substantial non-linearity -- how this affects generalization, even in this intuitive and simple picture of signal preservation and noise absorption -- remains unclear.
As a starting point to investigate this case,~\cite[Section $8$]{hastie2019surprises} provides a model that interpolates between full linearity and non-linearity.

Quantifiable ramifications of interpolation and overparameterization on non-quadratic loss functions (such as those that appear in classification problems) would also be very interesting to understand, but the investigation here suggests that similar effects (bleeding, noise absorption, etc.) should exist there as well since the core intuition underlying them is that they are consequences of aliasing. The signal-processing perspective here actually suggests a pair of families of meta-conjectures that can help drive forward our understanding. On the one hand, we can use the lens of regularly-sampled training data and Fourier features to understand phenomena of interest whenever we seek to understand the significantly overparameterized regime in the context of inference algorithms that are minimum 2-norm in nature. The resulting conjectures can then be validated in Gaussian models and beyond. But perhaps even more interesting is the reverse direction --- our lens suggests that the many phenomena and techniques developed over the decades in the regularly-sampled Fourier world should have counterparts in the world of overparameterized learning.

\section*{Acknowledgments}

We thank Peter L Bartlett for insightful initial discussions, and Aaditya Ramdas for thoughtful questions about an earlier version of this work.
We would like to acknowledge the staff of EECS16A and EECS16B at Berkeley for in part inspiring the initial ideas, as well as the staff of EECS189.

We thank the anonymous reviewers of IEEE ISIT 2019 for useful feedback, and both the information theory community and the community at the Simons Institute Foundations of Deep Learning program, Summer 2019, for several stimulating discussions that improved the presentation of this paper -- especially Andrew Barron, Mikhail Belkin, Shai Ben-David, Meir Feder, Suriya Gunasekar, Daniel Hsu, Tara Javidi, Ashwin Pananjady, Shlomo Shamai, Nathan Srebro, and Ram Zamir.

Last but not least, we acknowledge the support of the ML4Wireless center member companies and NSF grants AST-144078 and ECCS-1343398.

\bibliographystyle{IEEEtran}
\bibliography{references.bib}

\newpage
\appendix

\section{Supplemental proofs}

We start by formally stating the matrix concentration results that are used in the proof of Corollary~\ref{cor:fundamentalprice} and~\ref{cor:crabpot}.

\subsection{Matrix concentration lemmas}\label{app:technical}

We begin with matrices satisfying sub-Gaussianity of rows (Assumption~\ref{as:ideal2}) and then consider matrices whose rows are heavy-tailed (Assumption~\ref{as:ideal1}).

\subsubsection{Supplemental lemmas for sub-Gaussian case}

We define a sub-Gaussian random variable and a sub-Gaussian random vector\footnote{Common examples of sub-Gaussian random variables are Gaussian, Bernoulli and all bounded random variables.} below.
\begin{definition}\label{def:subgaussian}
A random variable $X$ is sub-Gaussian with parameter at most $K < \infty$ if for all $p \geq 1$, we have
\begin{align*}
    p^{-1/2} \EE[|X|^p]^{1/p} \leq K .
\end{align*}
Further, a random vector $\Xvec$ is sub-Gaussian with parameter at most $K$ if for every (fixed) vector $\vvec$, the random variable $\frac{\inprod{\Xvec}{\vvec}}{\vecnorm{\vvec}{2}}$ is sub-Gaussian with parameter at most $K$.
\end{definition}

We cite the following lemma for sub-Gaussian matrix concentration.
\begin{lemma}[Theorem $5.39$, ~\cite{vershynin2010introduction}]\label{lem:matrixconcentration}
Let $\Bmat$ be a $n \times d$ matrix whose rows $\{\bvec_i\}_{i=1}^n$ are independent sub-Gaussian isotropic random vectors in $\reals^d$.
Then, for every $t > 0$ we have the following inequality:
\begin{align*}
    \sqrt{n} - C_K \sqrt{d} - t \leq \sigma_{min}(\Bmat) \leq \sigma_{max}(\Bmat) \leq \sqrt{n} + C_K \sqrt{d} + t
\end{align*}
with probability at least $1 - 2e^{-c_K t^2}$, where $C_K, c_K$ depend only on the sub-Gaussian parameter $K$ of the columns.
\end{lemma}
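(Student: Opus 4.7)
The plan is to prove this via the classical ``fix-and-cover'' argument on the unit sphere $S^{d-1}$. By the variational characterization of singular values, $\sigma_{\max}(\Bmat) = \sup_{\xvec \in S^{d-1}} \vecnorm{\Bmat \xvec}{2}$ and $\sigma_{\min}(\Bmat) = \inf_{\xvec \in S^{d-1}} \vecnorm{\Bmat \xvec}{2}$, so the entire statement reduces to a two-sided uniform control of $\vecnorm{\Bmat \xvec}{2}^2 = \sum_{i=1}^n \inprod{\bvec_i}{\xvec}^2$ over $\xvec \in S^{d-1}$.

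First, I would fix a unit vector $\xvec$ and use the isotropy assumption $\EE[\bvec_i \bvec_i^\top] = \mathbf{I}_d$ together with the sub-Gaussianity of the rows. Under these conditions the scalars $Z_i := \inprod{\bvec_i}{\xvec}$ are iid, mean zero, unit variance, and sub-Gaussian with parameter at most $K$ (directly from Definition~\ref{def:subgaussian} applied to $\bvec_i$). Consequently each centered random variable $Z_i^2 - 1$ is sub-exponential with norm bounded by a constant depending only on $K$, so Bernstein's inequality yields
\begin{align*}
\Pr\left[\left|\tfrac{1}{n}\textstyle\sum_{i=1}^n Z_i^2 - 1\right| > s\right] \leq 2\exp\bigl(-c_K \min(s^2, s)\, n\bigr).
\end{align*}
Next I would build an $\epsilon$-net $\mathcal{N}$ of $S^{d-1}$ of cardinality at most $(1+2/\epsilon)^d$, specialized to $\epsilon = 1/4$ so that $|\mathcal{N}| \leq 9^d$. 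A union bound over $\mathcal{N}$ shows that $|\vecnorm{\Bmat \xvec}{2}^2 - n| \leq sn$ simultaneously at every net point with probability at least $1 - 2 \cdot 9^d \exp\bigl(-c_K \min(s^2,s)\, n\bigr)$. A standard approximation argument, using that $\vecnorm{\Bmat(\xvec - \xvec_0)}{2} \leq \sigma_{\max}(\Bmat)\vecnorm{\xvec-\xvec_0}{2}$, then transfers the bound from $\mathcal{N}$ to all of $S^{d-1}$ at the cost of a factor depending only on $\epsilon$, giving the stated two-sided bound after taking $s$ of the order $C_K\sqrt{d/n} + t/\sqrt{n}$ and matching constants.

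The main obstacle is the tight calibration of the deviation parameter $s$: one must choose it just large enough that the sub-Gaussian-regime tail $\exp(-c_K s^2 n)$ beats the $9^d = \exp(d \ln 9)$ covering cost, while simultaneously carving out an additive slack proportional to $t$ that yields the free parameter $t$ in the final bound. This is what forces the $\sqrt{d}$ term to appear with the $K$-dependent constant $C_K$ and the $t$ term to appear with a bare $1$, and it is also what determines the constant $c_K$ inside the probability $1 - 2e^{-c_K t^2}$. In particular, one must be careful that in the sub-Gaussian regime ($s \leq 1$) Bernstein gives the quadratic-in-$s$ rate, and ensure that the chosen $s$ stays in this regime for the relevant range of $t$; otherwise the sub-exponential (linear-in-$s$) branch of Bernstein kicks in and the bound degrades.
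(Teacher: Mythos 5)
Your sketch reproduces Vershynin's own proof of Theorem 5.39 (epsilon-net covering plus Bernstein for sums of centered sub-exponential squares), which is exactly the reference the paper cites for this lemma without reproducing the argument. One small calibration note: the worry you flag about staying in the quadratic Bernstein branch is defused in Vershynin's writeup by bounding $\left\|\tfrac{1}{n}\Bmat^\top\Bmat - \mathbf{I}_d\right\|$ by $\max(\delta,\delta^2)$ with $\delta = C_K\sqrt{d/n} + t/\sqrt{n}$, so the tail exponent is always of order $\delta^2 n$ and both Bernstein regimes are covered uniformly rather than requiring $\delta \leq 1$.
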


To prove Corollary~\ref{cor:fundamentalprice} for matrices $\Btrain$ satisfying the sub-Gaussian Assumption~\ref{as:ideal2}, we apply Lemma~\ref{lem:matrixconcentration} for the matrix $\Bmat := \Btrain$ itself.
We recall that we lower bounded the ideal test MSE as 
\begin{align*}
\testerr^* &\geq \frac{\vecnorm{\Wtrain}{2}^2}{\lambda_{max}(\Btrain \Btrain^\top)} \\
&= \frac{\vecnorm{\Wtrain}{2}^2}{\lambda_{max}(\Btrain^\top \Btrain)} \\
&= \frac{\vecnorm{\Wtrain}{2}^2}{\sigma^2_{max}(\Btrain)}
\end{align*}

and then substituting Lemma~\ref{lem:matrixconcentration} for the quantity $\sigma_{max}(\Btrain)$ with $t := \sqrt{n}$, together with the chi-squared tail bound on $\vecnorm{\Wtrain}{2}^2$ yields 
\begin{align*}
\testerr^* &\geq \frac{n\sigma^2(1 - \delta)}{(\sqrt{n} + (C_K + 1) \sqrt{d})^2} ,
\end{align*}
which is precisely Equation~\eqref{eq:fp_ideal2} with probability at least $(1 - e^{-c_K n} - e^{-n\delta^2/8})$.
This completes the proof of Corollary~\ref{cor:fundamentalprice} for sub-Gaussian feature vectors.
\qed

Before we move on to the heavy-tailed case, we remark on why we were not able to prove a version of Corollary~\ref{cor:crabpot} for the case of the feature \textit{vectors} (rows of $\Btrain$) being sub-Gaussian.
First, recall that to upper bound the ideal test MSE $\testerr^*$, one needs to lower bound the minimum singular value of $\Btrain$ (or equivalently $\Btrain^\top$).
Lemma~\ref{lem:matrixconcentration} provides only a vacuous bound for the minimum singular value, as $n < d \leq C_K \sqrt{d}$ in general.
Vershynin~\cite[Theorem $5.58$]{vershynin2010introduction} does provide a concentration bound for tall matrices with independent, sub-Gaussian columns -- which would be suitable to lower bound the minimum singular value of $\Btrain^\top$.
However, this result uses an (unrealistically) restrictive condition of needing the columns to be exactly normalized as $\vecnorm{\bvec_i}{2} = \sqrt{d}$ almost surely.
Vershynin also provides an explicit example of a random tall matrix with sub-Gaussian columns that violates this condition, for which the minimum singular value does \textit{not} concentrate.
This suggests that sub-Gaussianity of high-dimensional feature \textit{vectors} by itself may be too weak a condition to expect concentration on the \textit{minimum} singular value, altogether a more delicate quantity.

However, we can prove Corollary~\ref{cor:crabpot} for the more special case where $\Btrain$ has independent, sub-Gaussian \textit{entries} of unit variance.
In other words, the random variable $B_{ij}$ is sub-Gaussian and unit variance, and the random variables $\{B_{ij}\}$ are independent.
We cite the following lemma for concentration of the \textit{minimum singular value} of such a random matrix $\Btrain$.

\begin{lemma}[Theorem $5.38$,~\cite{vershynin2010introduction}]\label{lem:matrixconcentration_iid}
For $d \geq n$, let $\Bmat$ be a $d \times n$ (or $n \times d$) random matrix whose entries are independent sub-Gaussian random variables with zero mean, unit variance, and sub-Gaussian parameter at most $K$.
Then, for $\epsilon \geq 0$, we have
\begin{align*}
\Pr\left[\sigma_{min}(\Btrain) \leq \epsilon (\sqrt{d} - \sqrt{n-1}) \right] \leq (C_K \epsilon)^{d - n + 1} + c^d
\end{align*}
where $C_K > 0$ and $c_K \in (0,1)$ depend only on the sub-Gaussian parameter $K$.
\end{lemma}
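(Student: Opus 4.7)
The plan is to combine a small-ball probability bound for $\|\mathbf{B}x\|_2$ at each fixed $x \in S^{n-1}$ with an $\epsilon$-net argument on the sphere. The starting point is the variational identity $\sigma_{\min}(\mathbf{B}) = \inf_{x \in S^{n-1}} \|\mathbf{B}x\|_2$: I reduce this infimum to a finite minimum via a net, control each term using a Chernoff bound on the Laplace transform of $\sum_i Z_i^2$, and then carefully balance the net scale against the small-ball rate to land on the correct exponent $d-n+1$.

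For the small-ball step, fix $x \in S^{n-1}$ and write $\|\mathbf{B}x\|_2^2 = \sum_{i=1}^d Z_i^2$ with $Z_i := \langle B_i, x\rangle$, where $B_i$ is the $i$-th row of $\mathbf{B}$. Because the entries of $\mathbf{B}$ are independent, mean-zero, unit-variance and sub-Gaussian with parameter $K$, the $Z_i$ are independent, mean-zero, unit-variance, and sub-Gaussian with a parameter depending only on $K$. A direct Laplace-transform computation yields $\mathbb{E}[e^{-\lambda Z_i^2}] \leq 1 - c\lambda$ for small $\lambda > 0$, so Markov's inequality gives $\Pr[\|\mathbf{B}x\|_2 \leq \epsilon\sqrt{d}] \leq e^{\lambda \epsilon^2 d}(1-c\lambda)^d$, and optimizing in $\lambda$ produces $(C_1 \epsilon)^d$ for sufficiently small $\epsilon$.

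To promote this pointwise bound to a uniform statement, apply Lemma~\ref{lem:matrixconcentration} with $t = \sqrt{d}$ to obtain $\|\mathbf{B}\|_{\mathrm{op}} \leq M\sqrt{d}$ on an event $\mathcal{G}$ of probability at least $1 - c_0^d$, and build a $\delta$-net $\mathcal{N} \subset S^{n-1}$ with $|\mathcal{N}| \leq (3/\delta)^n$ by a standard volumetric bound. For any $x \in S^{n-1}$ there is $y \in \mathcal{N}$ within distance $\delta$, so $\|\mathbf{B}x\|_2 \geq \|\mathbf{B}y\|_2 - \delta \|\mathbf{B}\|_{\mathrm{op}}$, and on $\mathcal{G}$ the inequality $\min_{y \in \mathcal{N}} \|\mathbf{B}y\|_2 \geq (\epsilon + \delta M)\sqrt{d}$ implies $\sigma_{\min}(\mathbf{B}) \geq \epsilon \sqrt{d}$. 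A union bound over $\mathcal{N}$ then yields
\[
\Pr\bigl[\sigma_{\min}(\mathbf{B}) \leq \epsilon \sqrt{d}\bigr] \leq (3/\delta)^n \bigl(C_1 (\epsilon + \delta M)\bigr)^d + c_0^d,
\]
and taking $\delta$ as a small constant multiple of $\epsilon$ consolidates $n$ of the $\epsilon$-powers into absolute constants, leaving an envelope of the form $(C_K \epsilon)^{d-n+1}$. Since $\sqrt{d} - \sqrt{n-1} \leq \sqrt{d}$, the event $\sigma_{\min}(\mathbf{B}) \leq \epsilon(\sqrt{d} - \sqrt{n-1})$ is contained in the one just controlled, giving the claim.

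The delicate part is the final bookkeeping: the naive net argument produces $\epsilon^d$ times a $\delta^{-n}$ factor, and only a careful choice $\delta \sim \epsilon$ converts $n$ of those $\epsilon$-powers into absolute constants while preserving the exponent $d-n+1$ on $\epsilon$. A cleaner alternative, which yields $d-n+1$ more transparently, uses the distance lower bound $\sigma_{\min}(\mathbf{B}) \gtrsim \min_k \mathrm{dist}(B_{(k)}, H_{(k)})$, where $B_{(k)}$ is the $k$-th column of $\mathbf{B}$ and $H_{(k)}$ is the span of the remaining columns: conditional on $H_{(k)}$, the projection of the independent sub-Gaussian column $B_{(k)}$ onto the fixed $(d-n+1)$-dimensional orthogonal complement $H_{(k)}^\perp$ has small-ball probability $(C_K \epsilon)^{d-n+1}$ by the same Laplace-transform method applied in the reduced dimension, and a union bound over $k$ together with an exceptional event on the geometry of $H_{(k)}$ absorbs the $c^d$ term.
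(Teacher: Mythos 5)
The paper does not prove this lemma; it is cited verbatim as Theorem 5.38 of Vershynin's survey, which in turn states Rudelson and Vershynin's smallest-singular-value theorem for rectangular sub-Gaussian matrices. Your sketch misses the central difficulty. In the small-ball step you claim that for every fixed $x \in S^{n-1}$ a Laplace-transform bound on $Z_i = \langle B_i, x\rangle$ gives $\Pr[\|Bx\|_2 \leq \epsilon\sqrt{d}] \leq (C_1\epsilon)^d$, uniformly over $x$. This is false for compressible directions: take $B_{ij}$ symmetric Bernoulli and $x = (e_1+e_2)/\sqrt{2}$, so that $Z_i = (B_{i1}+B_{i2})/\sqrt{2}$ vanishes with probability $1/2$; then $\Pr[\|Bx\|_2 = 0] = 2^{-d}$, a lower bound on the small-ball probability that does not decay at all as $\epsilon \to 0$ and exceeds $(C_1\epsilon)^d$ once $\epsilon$ is small. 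This is exactly why Rudelson and Vershynin decompose $S^{n-1}$ into compressible and incompressible vectors: the compressible set contributes the additive $c^d$ term via a net, while only for incompressible $x$ do Littlewood--Offord anti-concentration bounds, governed by the arithmetic structure of $x$, yield an $\epsilon$-dependent estimate.

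Even granting the pointwise bound, the plain net argument does not produce the exponent $d-n+1$: a $\delta$-net of size $(3/\delta)^n$ with $\delta$ a constant multiple of $\epsilon$ yields $\epsilon^{d-n}$ up to constants, and no tuning of $\delta$ recovers the missing power of $\epsilon$. That shift by one is precisely what makes the square case $d = n$ nontrivial (exponent $1$, a genuine invertibility estimate, versus exponent $0$, a triviality). Your column-distance alternative is closer to the actual proof structure, but it drops the $n^{-1/2}$ factor in $\sigma_{\min}(B) \geq n^{-1/2}\min_k\mathrm{dist}(B_{(k)},H_{(k)})$, pays an extra union bound over $n$ columns, and the conditional small-ball estimate for the projection of $B_{(k)}$ onto the random subspace $H_{(k)}^\perp$ again requires controlling the arithmetic structure of that subspace's normal vectors, not merely a Laplace transform in the reduced dimension. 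The paper's choice to cite this theorem rather than reprove it is appropriate.
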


We apply Lemma~\ref{lem:matrixconcentration_iid} substituting $\epsilon = \frac{1}{2C_K}$.
Then, we get
\begin{align*}
\Pr\left[\sigma_{min}(\Btrain) \leq \frac{1}{2C_K} (\sqrt{d} - \sqrt{n-1}) \right] \leq \left(\frac{1}{2}\right)^{d - n + 1} + c^d \to 0 \text{ as } (d,n) \to \infty .
\end{align*}

Noting as before that
\begin{align*}
\testerr^* &\leq \frac{\vecnorm{\Wtrain}{2}^2}{\sigma^2_{min}(\Btrain)} ,
\end{align*}

and substituting the upper chi-squared tail bound on the quantity $\vecnorm{\Wtrain}{2}^2$ as well as the lower tail bound on $\sigma_{min}(\Btrain)$ above directly gives us the statement of Corollary~\ref{cor:crabpot} for the iid sub-Gaussian case.

\subsubsection{Supplemental lemmas for heavy-tailed case}

We cite the following lemma for heavy-tailed matrix concentration.
\begin{lemma}[Theorem $5.41$, ~\cite{vershynin2010introduction}]\label{lem:heavytailedconcentration}
Let $\Bmat$ be a $n \times d$ matrix whose rows $\{\bvec_i\}_{i=1}^n$ are independent, isotropic random vectors in $\reals^d$.
Let $\vecnorm{\bvec_i}{2} \leq \sqrt{n}$ almost surely for all $i \in [n]$.
Then, for every $t > 0$ we have the following inequality:
\begin{align*}
    \sqrt{n} - t \sqrt{d} \leq \sigma_{min}(\Bmat) \leq \sigma_{max}(\Bmat) \leq \sqrt{n} + t \sqrt{d} 
\end{align*}
with probability at least $1 - 2ne^{-ct^2}$ for some positive constant $c > 0$.
\end{lemma}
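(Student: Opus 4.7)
The plan is to convert the singular-value claim into a concentration statement about the empirical second-moment matrix $\Bmat^\top \Bmat = \sum_{i=1}^n \bvec_i \bvec_i^\top$ and then invoke the non-commutative (matrix) Bernstein inequality. Since the singular values of $\Bmat$ are the square roots of the eigenvalues of $\Bmat^\top \Bmat$, and since isotropy gives $\mathbb{E}[\bvec_i \bvec_i^\top] = \mathbf{I}_d$ so that $\mathbb{E}[\Bmat^\top \Bmat] = n \mathbf{I}_d$, it suffices to show that $\|\Bmat^\top \Bmat - n \mathbf{I}_d\|_{\mathrm{op}} \leq 2t\sqrt{nd}$ with the advertised probability. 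Such a bound would squeeze every eigenvalue of $\Bmat^\top \Bmat$ into $[(\sqrt{n} - t\sqrt{d})^2, (\sqrt{n} + t\sqrt{d})^2]$ in the nontrivial range $t\sqrt{d} \leq \sqrt{n}$, giving the two-sided singular-value conclusion after taking square roots.

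Writing $\Xvec_i := \bvec_i \bvec_i^\top - \mathbf{I}_d$, the difference $\Bmat^\top \Bmat - n \mathbf{I}_d = \sum_{i=1}^n \Xvec_i$ is a sum of $n$ independent centered symmetric random matrices, and the two ingredients required by matrix Bernstein follow from the norm hypothesis. First, the uniform bound $\|\Xvec_i\|_{\mathrm{op}} \leq \|\bvec_i\|_2^2 + 1 \leq n + 1$. Second, the matrix variance
$$
\Bigl\|\sum_{i=1}^n \mathbb{E}[\Xvec_i^2]\Bigr\|_{\mathrm{op}} \leq n \bigl\|\mathbb{E}[\|\bvec_i\|_2^2\,\bvec_i \bvec_i^\top] - \mathbf{I}_d\bigr\|_{\mathrm{op}} \leq n^2,
$$
where the last step uses the Loewner inequality $\mathbb{E}[\|\bvec_i\|_2^2 \bvec_i \bvec_i^\top] \preceq (\sup \|\bvec_i\|_2^2)\,\mathbb{E}[\bvec_i \bvec_i^\top] \preceq n \mathbf{I}_d$. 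Tropp's matrix Bernstein inequality then yields
$$
\Pr\bigl[\|\Bmat^\top \Bmat - n \mathbf{I}_d\|_{\mathrm{op}} \geq s\bigr] \leq 2d \exp\!\left(-\frac{s^2/2}{n^2 + (n+1)s/3}\right).
$$
Setting $s = 2t\sqrt{nd}$ and using $t\sqrt{d} \lesssim \sqrt{n}$ to absorb the linear term in the denominator into the $n^2$ term, the exponent simplifies to order $-t^2$. The intrinsic-dimension prefactor $2d$ is at most $2n$, since isotropy together with the hypothesis forces $d = \mathbb{E}[\|\bvec_i\|_2^2] \leq n$, yielding the claimed bound $1 - 2n \exp(-ct^2)$.

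The only nonroutine step is the matrix-variance bound: one computes $\Xvec_i^2 = \|\bvec_i\|_2^2\,\bvec_i \bvec_i^\top - 2\bvec_i \bvec_i^\top + \mathbf{I}_d$ and then needs the Loewner inequality $\mathbb{E}[\|\bvec_i\|_2^2 \bvec_i \bvec_i^\top] \preceq (\sup \|\bvec_i\|_2^2)\,\mathbb{E}[\bvec_i \bvec_i^\top]$, which comes from pulling the almost-sure norm bound out of the expectation against a positive-semidefinite rank-one matrix. Everything else reduces to algebra in the Bernstein denominator and taking square roots. An alternative proof, which is the route originally taken by Rudelson and presented in Vershynin's book, replaces matrix Bernstein by a decoupling step and an $\varepsilon$-net argument on $S^{d-1}$; it arrives at the same bound with the same intrinsic-dimension factor $n$ appearing through the log-cardinality of the $\varepsilon$-net.
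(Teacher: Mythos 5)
The paper provides no proof of this lemma: it is cited directly from Vershynin's survey~\cite{vershynin2010introduction}, where Theorem~5.41 is proved by exactly the route you take --- matrix Bernstein applied to $\sum_i(\bvec_i\bvec_i^\top - \mathbf{I}_d)$, converted to two-sided singular-value bounds via the approximate-isometry lemma (Lemma~5.36 there). So your strategy matches the cited source, but two steps in the execution do not go through. The threshold $s=2t\sqrt{nd}$ is too large: $\|\Bmat^\top\Bmat - n\mathbf{I}_d\|\leq s$ only places the eigenvalues of $\Bmat^\top\Bmat$ in $[n-s,\,n+s]$, and for the lower endpoint you need $n-s\geq(\sqrt{n}-t\sqrt{d})^2 = n - 2t\sqrt{nd}+t^2d$, i.e.\ $s\leq 2t\sqrt{nd}-t^2d$, which your choice violates for every $t>0$. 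The clean choice from Lemma~5.36 is $s=n\max(\delta,\delta^2)=\max(t\sqrt{nd},\,t^2 d)$ with $\delta=t\sqrt{d/n}$, which is $t\sqrt{nd}$ in the nontrivial regime $\delta\leq1$.

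More seriously, with the variance proxy $\sigma^2\leq n^2$ and uniform bound $L\leq n+1$ that the stated hypothesis $\vecnorm{\bvec_i}{2}\leq\sqrt{n}$ forces, the Bernstein exponent at $s\asymp t\sqrt{nd}$ is $-\Theta(t^2 d/n)$, not $-\Theta(t^2)$. Your claim that this ``simplifies to order $-t^2$'' requires $d\asymp n$, but the hypothesis --- as you note yourself --- only gives $d\leq n$, and if $d\ll n$ then $e^{-\Theta(t^2 d/n)}$ is far weaker than $e^{-ct^2}$ for any absolute constant $c$. The root cause is that the lemma as transcribed here is internally inconsistent: Vershynin's theorem reads, in this paper's letters, $\sqrt{n}-t\sqrt{m}\leq\sigma_{\min}(\Bmat)\leq\sigma_{\max}(\Bmat)\leq\sqrt{n}+t\sqrt{m}$ whenever $\vecnorm{\bvec_i}{2}\leq\sqrt{m}$ a.s., with probability at least $1-2d\,e^{-ct^2}$ (the prefactor is the ambient dimension $d$, and the deviation scale $\sqrt{m}$ matches the row-norm bound). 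The conclusion here uses $t\sqrt{d}$, the lemma is applied under Assumption~\ref{as:ideal1} (i.e.\ $\vecnorm{\bvec_i}{2}\leq\sqrt{d}$), and the overparameterized regime $d>n$ of interest would make $\vecnorm{\bvec_i}{2}\leq\sqrt{n}$ contradict isotropy, so the intended hypothesis is $m=d$ and the intended prefactor is $2d$. Under that correction $\sigma^2\leq nd$, $L\leq d$, and $s=t\sqrt{nd}$ gives exponent $-\tfrac{3}{8}t^2$ uniformly, and your argument closes; as written, though, the exponent does not come out to an absolute constant times $-t^2$.
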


We will use Lemma~\ref{lem:heavytailedconcentration} to prove our lower bound for whitened feature matrices $\Btrain$ satisfying Assumption~\ref{as:ideal1}.
Observe that Lemma~\ref{lem:heavytailedconcentration} contains the deviation parameter $t$ as a multiplicative constant as opposed to an additive one, which is typical of concentration results for heavy-tailed distributions.
Accordingly, the eventual scaling of the lower bound on the ideal test MSE, as well the extent of high probability in the result, will be accordingly weaker.

Substituting $t := \sqrt{\frac{2}{c}(\ln n)}$ yields the upper tail inequality
\begin{align*}
\sigma_{max}(\Bmat) \leq \sqrt{n} + \sqrt{\frac{2d}{c}\ln n}
\end{align*}

with probability at least $(1 - \frac{1}{n})$.
In a similar argument to the sub-Gaussian case, we can then lower bound the ideal test MSE as 
\begin{align*}
\testerr^* &\geq \frac{n\sigma^2(1 - \delta)}{(\sqrt{n} + C\sqrt{d \ln n})^2}
\end{align*}

for constant $C > 0$, completing the proof of Corollary~\ref{cor:fundamentalprice} for heavy-tailed random feature vectors.
\qed

As with the sub-Gaussian case, controlling the minimum singular value is much more delicate and in general it is not well-controlled.
Vershynin~\cite[Theorem $5.65$]{vershynin2010introduction} also has a result for tall matrices with independent columns more generally -- but this result imposes even more conditions on top of the normalization requirement $\vecnorm{\bvec_i}{2} = \sqrt{d}$: the columns need to be sufficiently incoherent in a pairwise sense.

\subsection{Proof of Corollary~\ref{cor:bp}}\label{app:bp}
\newcommand{\A} {\Atrain}

\newcommand{\w} {\boldsymbol \alpha}
\newcommand{\W} {W}
\newcommand{\V} {\ensuremath{\mathbf V}}
\newcommand{\y} {\mathbf{y}}
\newcommand{\mc} {\mathcal}
\newcommand{\sw} {\alpha^*}
\newcommand{\tran}{^{\top}}
\newcommand{\AS}{\mathbf{A_S}}
\newcommand{\Q}{\mathbf{Q}}
\newcommand{\Qt}{\mathbf{Q\tran}}
\newcommand{\R}[1]{\mathbb{R}^{#1}}
\newcommand{\event}{\mathcal{E}}
\newcommand{\q}{q}
\newcommand{\abold}{\ensuremath{\mathbf{a}}}
\newcommand{\abs}[1] {| #1 |}
\newcommand{\translate}{\leftrightarrow}
\newcommand{\subt}{\text{s.t.}}
\newcommand{\ones}{\mathbf{1}}
\newcommand{\norm}[1]{\|#1\|}
\newcommand{\inv}[1]{(#1)^{-1}}
\newcommand{\gperm}{\mathcal{P}^n}
\newcommand{\pbp}{\ensuremath{p^*_{BP}}}
\newcommand{\plp}{\ensuremath{p^*_{LP}}}

\newcommand{\indicator}[1]{\mathbb{I}\{#1\}}
\newcommand{\lpu}{\ensuremath{\mathbf{u}}}
\newcommand{\lpv}{\ensuremath{\mathbf{v}}}
\newcommand{\altu}{\ensuremath{\mathbf{\widetilde{u}}}}
\newcommand{\altv}{\ensuremath{\mathbf{\widetilde{v}}}}
\newcommand{\ustar}{\ensuremath{\mathbf{u^*}}}
\newcommand{\uhat}{\ensuremath{\widehat{\mathbf{u}}}}
\newcommand{\uone}{\ensuremath{\lpu_1}}
\newcommand{\utwo}{\ensuremath{\lpu_2}}
\newcommand{\vone}{\ensuremath{\lpv_1}}
\newcommand{\vtwo}{\ensuremath{\lpv_2}}
\newcommand{\alphaone}{\ensuremath{\alphabold_1}}
\newcommand{\alphatwo}{\ensuremath{\alphabold_2}}

\newcommand{\vstar}{\ensuremath{\mathbf{v^*}}}
\newcommand{\vhat}{\ensuremath{\widehat{\mathbf{v}}}}
\newcommand{\cstar}{\ensuremath{\mathbf{c^*}}}
\newcommand{\chat}{\ensuremath{\widehat{\mathbf{c}}}}
\newcommand{\altalpha}{\ensuremath{\tilde{\alphabold}}}
\newcommand{\alphastarpos}{\ensuremath{\alphastar_+}}
\newcommand{\alphastarneg}{\ensuremath{\alphastar_-}}
\newcommand{\Atrainsone}{\ensuremath{\Atrain(S_1)}}
\newcommand{\Atrainstwo}{\ensuremath{\Atrain(S_2)}}
\newcommand{\Atrains}{\ensuremath{\Atrain(S)}}
\newcommand{\veczero}{\ensuremath{\mathbf{0}}}
\newcommand\numeq[1]%
  {\stackrel{\scriptscriptstyle(\mkern-1.5mu#1\mkern-1.5mu)}{=}}

Recall from Section~\ref{sec:l1} that the basis pursuit program (BP) is given by
\begin{align}
    \pbp = \min & \: \norm{\alphabold}_1 \label{prog:bp}\\
    \subt \: &\Atrain \alphabold=\Wtrain  \nonumber .
\end{align}

Consider the following linear program (LP):
\begin{align}
    \plp = \min& \: {\sum_{i=1}^d \lpu_i + \sum_{i=1}^d \lpv_i}  \label{prog:lp}\\
    \subt \: &\begin{bmatrix}\Atrain & -\Atrain \end{bmatrix} \begin{bmatrix}\lpu \\ \lpv\end{bmatrix}=\Wtrain, \nonumber \\
    & \begin{bmatrix}\lpu \\\lpv\end{bmatrix} \geq \veczero . \nonumber 
\end{align}
Note that the concatenated vector $\begin{bmatrix}\lpu^\top & \lpv^\top \end{bmatrix} \in \reals^{2d}$.
Let $\alphahat$ refer to the optimal solution of BP in Equation~\eqref{prog:bp} and let $\begin{bmatrix}\uhat^\top & \vhat^\top\end{bmatrix}$ be the optimal solution of the LP in Equation~\eqref{prog:lp}.
A classical result is that the BP program in Equation~\eqref{prog:bp} and the LP in Equation~\eqref{prog:lp} are equivalent -- not only in the sense that the maximal objective value is the same for both, but also that the optimal solution(s) are equivalent up to a linear transformation.
We state the equivalence as a lemma and provide the entire proof argument for completeness.

\begin{lemma}\label{lemma:lpbpequiv}
The BP program \eqref{prog:bp} and the LP program \eqref{prog:lp} are equivalent in the following senses:
\begin{enumerate}
\item The maximal objectives are equal, i.e. $\pbp = \plp$.
\item Given an optimal solution $(\uhat, \vhat)$ for the LP, we can obtain an optimal solution for the BP program through the linear transformation $\alphahat = \uhat-\vhat$. 
\item Given an optimal solution $\alphahat$ for the the BP, we can obtain an optimal solution for the LP through the transformation
\begin{align*}
\widehat{u}_j &:= \widehat{\alpha}_j \Ind[\widehat{\alpha}_j \geq 0] \\
\widehat{v}_j &:= -\widehat{\alpha}_j \Ind[\widehat{\alpha}_j < 0] .
\end{align*}
\end{enumerate}
\end{lemma}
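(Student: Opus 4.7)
The plan is to prove the three claims of Lemma~\ref{lemma:lpbpequiv} together by establishing two inequalities between $\pbp$ and $\plp$ via explicit constructions that exchange feasible solutions between the two programs. The whole argument is a standard LP-rewriting of the $\ell_1$ norm; the only care required is in showing that the natural transformation is objective-preserving in one direction and objective-non-increasing in the other, so that the two optima must coincide.

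First, I would show $\plp \leq \pbp$ by exhibiting the transformation claimed in part (3). Given any feasible $\alphabold$ for \eqref{prog:bp}, set $u_j := \alpha_j \Ind[\alpha_j \geq 0]$ and $v_j := -\alpha_j \Ind[\alpha_j < 0]$. Then $(u_j, v_j) \geq 0$ coordinatewise, $u_j - v_j = \alpha_j$ so $\Atrain (\lpu - \lpv) = \Atrain \alphabold = \Wtrain$ which is exactly the LP feasibility constraint, and the supports of $\lpu$ and $\lpv$ are disjoint so $u_j + v_j = |\alpha_j|$. This gives LP objective exactly $\|\alphabold\|_1$, so applying this to the BP optimum yields $\plp \leq \pbp$ and also exhibits the explicit feasible LP point claimed in part (3).

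Second, I would show $\pbp \leq \plp$ by exhibiting the transformation claimed in part (2). Given any feasible $(\lpu, \lpv)$ for \eqref{prog:lp}, define $\alphabold := \lpu - \lpv$. Then $\Atrain \alphabold = [\Atrain, -\Atrain] \begin{bmatrix} \lpu \\ \lpv \end{bmatrix} = \Wtrain$, so $\alphabold$ is BP-feasible. Since $\lpu, \lpv \geq 0$, the triangle inequality gives $|\alpha_j| = |u_j - v_j| \leq u_j + v_j$ for every $j$, hence $\|\alphabold\|_1 \leq \sum_j (u_j + v_j)$. Applying this to the LP optimum yields $\pbp \leq \plp$ and constructs a BP-feasible point with objective at most $\plp$.

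Combining the two inequalities gives $\pbp = \plp$ (part (1)), and the two explicit constructions immediately upgrade to optimality statements (parts (2) and (3)): plugging the optimum of one program into the appropriate transformation produces a feasible point of the other program whose objective equals the common optimum. I do not expect a genuine obstacle; the only subtlety is noting that in the ``LP $\to$ BP'' direction the inequality $|u_j - v_j| \leq u_j + v_j$ may be strict when $u_j$ and $v_j$ are both positive, which is precisely why we need the other direction (with the disjoint-support construction producing equality) to conclude that the two optima agree. This also shows that any optimal LP solution must in fact have coordinatewise-disjoint supports of $\lpu$ and $\lpv$, since otherwise the transformation $\alphabold = \lpu - \lpv$ would beat $\plp$ in the BP program — a small remark worth mentioning for completeness but not needed for the three stated conclusions.
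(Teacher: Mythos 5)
Your proof is correct, and it takes a genuinely different route from the paper's. The paper's argument front-loads a structural lemma (their Lemma~\ref{lemma:support_disjoint}) showing that every LP optimum necessarily has $\supp(\uhat) \cap \supp(\vhat) = \emptyset$, proved by a perturbation argument that constructs a strictly better feasible point whenever the supports overlap; only then does it set up a one-to-one, objective-preserving correspondence between BP-feasible points and disjoint-support LP-feasible points, and invoke the lemma to conclude that this correspondence captures all LP optima. You instead run the standard two-inequality sandwich: the split construction gives $\plp \leq \pbp$, and the map $\alphabold = \lpu - \lpv$ together with the pointwise triangle inequality $|u_j - v_j| \leq u_j + v_j$ (valid since $\lpu, \lpv \geq \mathbf 0$) gives $\pbp \leq \plp$, so $\pbp = \plp$ and both transformations carry optima to optima. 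Your route avoids the perturbation argument entirely and obtains the disjoint-support property of LP optima as a corollary rather than a prerequisite; the paper's route makes the structural observation the centerpiece. Both are correct, but yours is somewhat more economical and is the textbook treatment of the $\ell_1$-to-LP reformulation, while the paper's explicitly isolates the disjoint-support fact, which it then reuses downstream (via Equation~\eqref{eq:suppeq}) when bounding the support size of the BP solution — so the extra lemma is not wasted effort in the paper's overall development, even if your proof of this particular statement is tighter.
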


\begin{proof}
Denote the objective for the BP as $f(\alphabold) = \norm{\alphabold}_1$ and the objective for the LP as $g(\lpu, \lpv) = \sum_{i=1}^d \lpu_i + \sum_{i=1}^d \lpv_i$. \\

The crucial step is to show that a \textit{necessary} condition for a solution $\begin{bmatrix} \lpu^\top & \lpv^\top \end{bmatrix}$ to be optimal is that $\supp(\lpu) \cap \supp(\lpv) = \emptyset$, i.e. that the supports of the vectors $\lpu$ and $\lpv$ are disjoint.
This is stated formally in the lemma below.
\begin{lemma}\label{lemma:support_disjoint}
The supports of any optimal solution $\begin{bmatrix} \uhat^\top & \vhat^\top \end{bmatrix}$ are disjoint, i.e $\supp(\uhat) \cap \supp(\vhat) = \emptyset$.
\end{lemma}

Taking Lemma~\ref{lemma:support_disjoint} to be true for the moment, let us look at what it implies.
For any solution where $\lpu$ and $\lpv$ are disjoint in their support, we can construct vector $\alphabold := \lpu - \lpv$, and we note that
\begin{align*}
f(\alphabold) = \norm{\alphabold}_1 = \sum_{i=1}^d u_i + \sum_{i=1}^d v_i = g(\lpu,\lpv) .
\end{align*}

In the other direction, for any feasible solution $\alphabold$, we can \textit{uniquely} construct vectors $(\lpu, \lpv)$ as
\begin{align*}
u_j &:= \alpha_j \Ind[\alpha_j \geq 0] \text{ for } j \in [d]\\
v_j &:= -\alpha_j \Ind[\alpha_j < 0] \text{ for } j \in [d] ,
\end{align*}
and we will again have 
\begin{align*}
g(\lpu,\lpv) &= \sum_{i \in \supp(\lpu)} \alpha_j \Ind[\alpha_j \geq 0] + \sum_{i \in \supp(\lpv)}  -\alpha_j \Ind[\alpha_j < 0] \\
&= \vecnorm{\alphabold}{1} .
\end{align*}

Thus, we have established a \textit{one-one} equivalence between all feasible solutions for the BP program and all feasible solutions for the LP that satisfy the disjoint-support condition.
By Lemma~\ref{lemma:support_disjoint}, we know that all optimal solutions to the LP need to satisfy this condition.
Thus, we have proved equivalence between the BP program~\eqref{prog:bp} and the LP~\eqref{prog:lp}.



It only remains to prove Lemma~\ref{lemma:support_disjoint}, which we do below.

\begin{proof}
It suffices to show that any solution not satisfying the disjointness property is strictly suboptimal.
Suppose, for a candidate solution $\begin{bmatrix} \uhat^\top & \vhat^\top \end{bmatrix}$, there exists index $j \in [d]$ such that $ \widehat{u}_j > 0 \text{ and } \widehat{v}_j > 0$. Let $\widehat{c}_j = \widehat{u}_j - \widehat{v}_j$. \\ 
If $\widehat{c}_j > 0$, the alternate solution $\begin{bmatrix} \altu^\top & \altv^\top \end{bmatrix}$ with $\widetilde{u}_i = \widehat{u}_i, \widetilde{v}_i= \widehat{v}_i$ for $i \neq j$ and $\widetilde{u}_j = \widehat{c}_j, \widetilde{v}_j = 0$ will be feasible and have lower objective values. 
To see this, observe that 
\begin{align*}
   \vecnorm{\uhat}{1} + \vecnorm{\vhat}{1} - ( \vecnorm{\altu}{1} + \vecnorm{\altv}{1}) &= \abs{\widehat{u}_j} - \abs{\widehat{c}_j} + \abs{\widehat{v}_j} \\
   &= \abs{\widehat{u}_j} + \abs{\widehat{v}_j} - \abs{\widehat{u}_j - \widehat{v}_j} \\
   &> 0,
\end{align*}
where the last step follows since $\widehat{u}_j, \widehat{v}_j > 0$.
On the other hand, if $\widehat{c} < 0$, we can construct the alternative solution the alternate solution $\begin{bmatrix} \altu^\top & \altv^\top \end{bmatrix}$ with $\widetilde{u}_i = \widehat{u}_i, \widetilde{v}_i= \widehat{v}_i$ for $i \neq j$ and $\widetilde{u}_j = 0, \widetilde{v}_j = -\widehat{c}_j$.
Clearly, this solution is feasible and by a similar argument also has lower objective value.
This completes the proof.
\end{proof}

Since we have proved Lemma~\ref{lemma:support_disjoint}, we have completed our proof of equivalence.
\end{proof}

As a corollary of the above equivalence, it is clear that the support of any optimal solution for the BP program, $\alphahat$ is the same as the support of its equivalent solution $\begin{bmatrix} \uhat^\top & \vhat^\top\end{bmatrix}$ that is optimal for the LP.
That is, we have
\begin{align}\label{eq:suppeq}
\abs{\supp(\alphahat)} = \abs{\supp\left(\begin{bmatrix}\uhat^\top & \vhat^\top\end{bmatrix}\right)} = \abs{\supp(\uhat)} + \abs{\supp(\vhat)}.
\end{align}

As a consequence of the BP-LP equivalence, we can show through basic LP theory that there exists an optimal solution for the BP program whose support size is at most $n$.
We state and prove this lemma below.

\begin{lemma}
There exists an optimal solution for the BP program \eqref{prog:bp} for which the support size is at most $n$.\label{lemma:bp1}\\
\end{lemma}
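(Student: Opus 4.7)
The plan is to leverage the BP–LP equivalence already established in Lemma~\ref{lemma:lpbpequiv}, together with the classical fact from linear programming theory that a feasible, bounded LP in standard form always admits an optimal \emph{basic feasible solution} (BFS), i.e.\ a vertex of the feasible polytope. Since the LP \eqref{prog:lp} has $n$ equality constraints and $2d$ nonnegative variables, any BFS has at most $n$ nonzero entries.

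More concretely, I would proceed as follows. First, note that the LP \eqref{prog:lp} is feasible: since $\text{rank}(\Atrain) = n$, we may take any $\alphabold$ with $\Atrain \alphabold = \Wtrain$ and split it into its positive and negative parts to obtain a feasible $(\lpu,\lpv)$. Second, the LP objective is bounded below by $0$, so the LP has a finite optimum. Third, by the fundamental theorem of linear programming, there exists an optimal solution $\begin{bmatrix}\uhat^\top & \vhat^\top\end{bmatrix}$ that is a basic feasible solution of the constraint matrix $\begin{bmatrix}\Atrain & -\Atrain\end{bmatrix}$; such a BFS has at most $n$ nonzero coordinates (the number of equality constraints).

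Fourth, applying the equivalence from Lemma~\ref{lemma:lpbpequiv}, the vector $\alphahat := \uhat - \vhat$ is optimal for the BP program \eqref{prog:bp}. By Lemma~\ref{lemma:support_disjoint} the supports of $\uhat$ and $\vhat$ are disjoint, so by Equation~\eqref{eq:suppeq},
\begin{align*}
\abs{\supp(\alphahat)} \;=\; \abs{\supp(\uhat)} + \abs{\supp(\vhat)} \;=\; \abs{\supp\!\begin{bmatrix}\uhat^\top & \vhat^\top\end{bmatrix}} \;\leq\; n.
\end{align*}
This gives an optimal BP solution of support size at most $n$, as desired.

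The argument is essentially bookkeeping once the BFS theorem is invoked, so there is no real obstacle; the only mildly delicate point is ensuring that the BFS one extracts is actually \emph{optimal} (not merely feasible). This is handled by the standard statement that if an LP has an optimum, then the optimum is attained at a vertex—for instance, by starting from any optimal solution and iteratively eliminating nonzero coordinates along directions in the null space of the active constraint submatrix until the support becomes linearly independent, which preserves both feasibility and objective value.
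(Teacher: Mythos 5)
Your proof takes essentially the same route as the paper's: invoke the BP--LP equivalence of Lemma~\ref{lemma:lpbpequiv}, apply the LP fact that an optimum is attained at a basic feasible solution (hence support size at most the number of equality constraints, $n$), and translate back via $\alphahat = \uhat - \vhat$ using Equation~\eqref{eq:suppeq}. One small point in your favor: the paper cites Bauer's maximum principle, which as stated requires compactness of the feasible set (not obviously guaranteed here), whereas your argument --- boundedness below plus the iterative support-reduction to a vertex --- is the correct form of the fundamental theorem of linear programming for possibly unbounded polyhedra.
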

\begin{proof}
First we show that there exists an optimal solution $\begin{bmatrix}\uhat^\top & \vhat^\top \end{bmatrix}$ to the equivalent LP~\eqref{prog:lp} with support size at most $n$. 
A basic feasible solution (BFS) for the LP is of the form $\begin{bmatrix} \lpu^\top & \lpv^\top \end{bmatrix}$ with support size at most $n$. 
It is well-known that for any LP, basic feasible solutions are extreme points of the feasible set~\cite{Bertsimas:1997:ILO:548834}. 
Now, from Bauer's maximum principle, we know that the minimum of a linear objective over a convex compact set is attained some extreme point of the feasible set, i.e. a BFS.
Thus, there exists an extreme point, i.e. BFS, $\begin{bmatrix} \uhat^\top &  \vhat^\top \end{bmatrix}$ that attains the minimum objective value for the LP and is thus an optimal solution. By the definition of BFS, this solution has support size at most $n$. Moreover, by the second statement of Lemma \ref{lemma:lpbpequiv}, $\alphahat = \uhat - \vhat$ is an optimal solution for the BP. 
From Equation~\eqref{eq:suppeq} we have
\begin{align*}
\abs{\supp(\alphahat)} = \abs{\supp\left(\begin{bmatrix}\uhat^\top & \vhat^\top \end{bmatrix}\right)} \leq n .
\end{align*}
This completes the proof. 
\end{proof}

Clearly, there exists \text{an} optimal solution of support size \textit{at most} $n$.
Generically, this optimal solution will be unique, and its support size will be \textit{exactly} $n$.
Showing these two properties is sufficient to show that BP is $1$-parsimonious (as defined in Definition~\ref{def:parsimonious}).
We conclude this section by proving these properties one-by-one for an appropriately non-degenerate matrix $\Atrain$, and Gaussian noise $\Wtrain \sim \NORMAL(\mathbf{0},\sigma^2\mathbf{I}_n)$.
The sense in which we define non-degeneracy of covariate matrix $\Atrain$ is in the following three assumptions, which are extremely weak and will in general hold for any random ensemble with probability $1$.
For any subset $S \subset [d]$, we denote the submatrix of $\Atrain$ corresponding to columns in subset $S$ by $\Atrain(S)$.

\begin{assumption}\label{as:nonzero}
For every subset $S$ of size $n$, the matrix $\Atrain(S) \neq \mathbf{O}$.
\end{assumption}

\begin{assumption}\label{as:invertibility}
For every subset $S$ of size $n$, the matrix $\Atrain(S)$ is invertible.
\end{assumption}

\begin{assumption}\label{as:nogperm}
For any two \textit{distinct} subsets $S_1, S_2$ of size $n$, the matrix $\inv{\Atrain(S_1)}\Atrain(S_2) \notin \gperm$, where $\gperm$refers to the set of \textbf{generalized permutation matrices} of size $n \times n$.
\end{assumption}

Assumption~\ref{as:nogperm} will be used to show uniqueness of the optimal solution to BP, which we state below as a lemma.
\begin{lemma}
The solution $\alphahat$ is unique with probability 1 over Gaussian noise $\Wtrain \sim  \mathcal{N}(0, \sigma^2\mathbf{I}_n)$ when $\Atrain$ satisfies Assumptions~\ref{as:nonzero},~\ref{as:invertibility} and~\ref{as:nogperm}.
\end{lemma}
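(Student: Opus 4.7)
The plan is to reduce non-uniqueness of the BP minimizer to an event on $\Wtrain$ that, after conditioning on finite combinatorial choices, decomposes into a finite union of hyperplanes in $\reals^n$---a Gaussian-null set.

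By Assumption~\ref{as:invertibility}, for each $S \subset [d]$ with $|S| = n$ the matrix $\Atrain(S)$ is invertible, so the unique feasible vector supported on $S$ is $\Atrain(S)^{-1}\Wtrain$. Combined with Lemma~\ref{lemma:bp1}, this shows that the BP optimum has the closed form $\min_{S:\,|S|=n} \|\Atrain(S)^{-1}\Wtrain\|_1$; two distinct optimal $\alphahat$'s must therefore come from two distinct supports $S_1 \neq S_2$ that tie at this minimum, since a common support would force equal solutions by invertibility. Next I would stratify by the sign patterns $\epsilon_i := \mathrm{sgn}(\Atrain(S_i)^{-1}\Wtrain) \in \{\pm 1\}^n$, which are almost surely well-defined because the rows of $\Atrain(S)^{-1}$ are nonzero (by Assumptions~\ref{as:nonzero}--\ref{as:invertibility}), making every entry of $\Atrain(S)^{-1}\Wtrain$ a nondegenerate Gaussian. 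Conditional on a fixed quadruple $(S_1, S_2, \epsilon_1, \epsilon_2)$, equality of $\ell_1$ values unfolds into the linear constraint
\begin{equation*}
\bigl(\epsilon_1^\top \Atrain(S_1)^{-1} - \epsilon_2^\top \Atrain(S_2)^{-1}\bigr)\Wtrain = 0,
\end{equation*}
which cuts out a proper hyperplane in $\reals^n$, hence a Gaussian-null set, whenever the bracketed row vector is nonzero.

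The main obstacle is ruling out the degenerate alternative in which the row vector above vanishes identically, i.e.\ $(\Atrain(S_1)^{-1}\Atrain(S_2))^\top \epsilon_1 = \epsilon_2$, so that $\Atrain(S_1)^{-1}\Atrain(S_2)$ sends one $\pm 1$ vector onto another under transposition. This is the structural coincidence Assumption~\ref{as:nogperm} is designed to forbid: for distinct $S_1, S_2$ the product $\Atrain(S_1)^{-1}\Atrain(S_2) \notin \gperm$, and interpreting this exclusion strongly enough precludes any such sign-preserving mapping between $\pm 1$ vectors. Once this per-quadruple event is shown to be Gaussian-null, a union bound over the at most $\binom{d}{n}^2 \cdot 4^n$ many choices of $(S_1, S_2, \epsilon_1, \epsilon_2)$ lifts the per-quadruple bound to the conclusion that $\alphahat$ is a.s.\ unique.
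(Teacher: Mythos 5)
Your sign-pattern decomposition is a genuinely different route from the paper's: the paper invokes the characterization of $\ell_1$ isometries to argue that $\|\Atrain(S_1)^{-1}\Atrain(S_2)V\|_1 = \|V\|_1$ at a single sampled $V$ forces $V=0$ or $\Atrain(S_1)^{-1}\Atrain(S_2) \in \gperm$, whereas you decompose the tie event over sign-pattern pairs and reduce each piece to a linear constraint on $\Wtrain$. Your reduction is the sharper instrument --- the isometry theorem only classifies maps preserving the $\ell_1$ norm for \emph{all} vectors, not at a single realization --- and it correctly exposes where the real difficulty lies. But the step where you dispatch the degenerate case (the functional $\epsilon_1^\top \Atrain(S_1)^{-1} - \epsilon_2^\top \Atrain(S_2)^{-1}$ vanishing identically) does not follow from Assumption~\ref{as:nogperm}. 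The condition $M := \Atrain(S_1)^{-1}\Atrain(S_2) \notin \gperm$ does not preclude $M^\top \epsilon_1 = \epsilon_2$ for particular $\pm 1$ vectors, and your hedge ``interpreting this exclusion strongly enough'' is standing in for a claim that is in fact false.

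Concretely, take $n=2$, $d=4$, and
\begin{equation*}
\Atrain = \begin{pmatrix} 1 & 0 & 2 & -1 \\ 0 & 1 & -1 & 2 \end{pmatrix}.
\end{equation*}
Every $2\times 2$ submatrix is invertible, and no column of $\Atrain$ is a nonzero scalar multiple of another, so $\Atrain(S_1)^{-1}\Atrain(S_2) \notin \gperm$ for every ordered pair of distinct supports; Assumptions~\ref{as:nonzero}--\ref{as:nogperm} all hold. Yet with $S_1 = \{1,2\}$, $S_2 = \{3,4\}$, $\epsilon_1 = \epsilon_2 = (1,1)^\top$, the matrix $M = \Atrain(S_1)^{-1}\Atrain(S_2) = \bigl(\begin{smallmatrix} 2 & -1 \\ -1 & 2 \end{smallmatrix}\bigr)$ satisfies $M^\top(1,1)^\top = (1,1)^\top$, so the functional you need to be nonzero vanishes identically. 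One can check directly that for every $\Wtrain$ in the positive quadrant (probability $1/4$) the BFS supported on $\{1,2\}$, $\{3,4\}$, $\{1,4\}$, $\{2,3\}$ all attain the common optimal value $W_1 + W_2$, so the BP minimizer is genuinely non-unique there. To close your argument you must strengthen the non-degeneracy hypothesis to rule out exactly this coincidence --- say, that $\Atrain(S_1)^{-\top}\epsilon_1 \neq \Atrain(S_2)^{-\top}\epsilon_2$ for all distinct size-$n$ supports $S_1, S_2$ and all $\epsilon_1,\epsilon_2 \in \{\pm 1\}^n$ --- which is still a measure-one property of a generic $\Atrain$, but strictly stronger than $M \notin \gperm$ and not implied by it.
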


\begin{proof}
To show uniqueness of the optimal solution to the BP program~\eqref{prog:bp}, it suffices to show that there cannot exist more than one optimal BFS for the equivalent LP~\eqref{prog:lp}.
We will show that the probability that there exists two basic feasible solutions for the LP that attain the same objective value is exactly zero. 
Let $\begin{bmatrix} \uone^\top & \vone^\top \end{bmatrix}$ and $\begin{bmatrix} \utwo^\top & \vtwo^\top \end{bmatrix}$ be two  basic feasible solutions to the LP with supports restricted to $S_1$ and $S_2$ where $S_1, S_2 \subset [d]$ and $S_1 \neq S_2$. 
Let $\alphaone = \uone -\vone $ and $\alphatwo = \utwo - \vtwo$ denote their equivalent solutions for the BP program. 
Since $|S_1| = |S_2| = n$, from Assumption~\ref{as:invertibility} we can write $\alphaone = \inv{\Atrainsone}\Wtrain, \alphatwo = \inv{\Atrainstwo}\Wtrain.$  
Denoting $\V := \inv{\Atrainstwo}\Wtrain$, we have $\V \sim \mathcal{N}(0,\inv{\Atrainstwo} (\inv{\Atrainstwo})\top)$.
Thus, we get
\begin{align*}
    \Pr\left[\norm{\alphaone}_1 = \norm{\alphatwo}_1\right] &= \Pr\left[\norm{\inv{\Atrainsone}\Wtrain}_1 = \norm{\inv{\Atrainstwo}\Wtrain}_1\right]\\
    &= \Pr\left[\norm{\inv{\Atrainsone} \Atrainstwo\V}_1 = \norm{\V}_1\right].
\end{align*}
From the well-known result on the isometry of the $\ell_1$ norm \cite{doi:10.1080/00029890.1994.11996972}, the last equality holds \textit{iff} either $\V = \veczero$ or if $\inv{\Atrainsone} \Atrainstwo \in \gperm$. 
From our assumptions,$\inv{\Atrainsone} \Atrainstwo \notin \gperm$ and the equality can hold \textit{only if} $\V = \veczero$. 
However, by Assumption~\ref{as:nonzero}, the random variable $\V$ has a non-zero covariance matrix, and so $\Pr\left[\norm{\alphaone}_1 = \norm{\alphatwo}_1\right]= 0$. 
Hence, we have $\Pr\left[\norm{\alphaone}_1 = \norm{\alphatwo}_1\right] = 0$ and the probability that two BFS's can both be optimal for a random realization of noise $\Wtrain$ is equal to $0$.
This completes the proof.
\end{proof}

Finally, we show that the unique solution is of support size \textit{exactly} $n$, which establishes $1$-parsimony for BP and allows us to apply Theorem~\ref{thm:parsimoniousnoisefit} for the $\ell_1$-minimizing interpolator.
\begin{lemma}
Under the above assumptions, the support of the \textbf{unique} solution $\alphahat$ is exactly $n$ with probability $1$ over Gaussian noise $\Wtrain \sim  \mathcal{N}(0, \sigma^2 \mathbf{I}_n)$.
\end{lemma}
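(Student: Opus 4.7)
The plan is to show that the support size cannot be strictly less than $n$ with probability $1$, which combined with the previously established upper bound of $n$ yields the claim. The key observation is that if $\alphahat$ were supported on some set $S \subset [d]$ with $|S| = k < n$, then the interpolation constraint $\Atrain(S) \alphahat_S = \Wtrain$ would force the Gaussian noise vector $\Wtrain \in \reals^n$ to lie in the $k$-dimensional column space of $\Atrain(S)$. For generic (random) $\Atrain$ and Gaussian $\Wtrain$, this is a measure-zero event.

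More precisely, I would proceed as follows. First, fix any subset $S \subset [d]$ with $|S| = k < n$. Under a mild non-degeneracy assumption on $\Atrain$ analogous to Assumption~\ref{as:invertibility} (namely, that every $n$-column submatrix is invertible, which implies every $k$-column submatrix with $k \leq n$ has full column rank $k$), the column space $\mathcal{C}_S := \text{range}(\Atrain(S))$ is a proper $k$-dimensional linear subspace of $\reals^n$. Since $\Wtrain \sim \NORMAL(\mathbf{0}, \sigma^2 \mathbf{I}_n)$ has a density with respect to the Lebesgue measure on $\reals^n$, we have $\Pr[\Wtrain \in \mathcal{C}_S] = 0$. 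If the unique optimal solution $\alphahat$ satisfies $\supp(\alphahat) \subseteq S$, then necessarily $\Wtrain = \Atrain \alphahat = \Atrain(S) \alphahat_S \in \mathcal{C}_S$.

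Second, I would apply a union bound over the finitely many subsets $S \subset [d]$ with $|S| < n$:
\begin{align*}
\Pr[|\supp(\alphahat)| < n] \leq \sum_{k=0}^{n-1} \sum_{S \subset [d], |S|=k} \Pr[\Wtrain \in \mathcal{C}_S] = 0,
\end{align*}
since there are at most $\sum_{k=0}^{n-1} \binom{d}{k} < \infty$ such subsets, each contributing zero probability. Combining with the previous lemma that $|\supp(\alphahat)| \leq n$ almost surely, we conclude $|\supp(\alphahat)| = n$ with probability $1$.

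No step here is expected to be a serious obstacle; the argument is essentially a dimension-counting observation combined with a finite union bound. The only point requiring a touch of care is ensuring that all relevant $k$-column submatrices of $\Atrain$ (for $k < n$) have full column rank so that $\mathcal{C}_S$ is truly a proper subspace --- but this follows from Assumption~\ref{as:invertibility} applied to any extension of $S$ to a size-$n$ subset, since a subset of linearly independent columns is linearly independent.
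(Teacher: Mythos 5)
Your proof is correct and takes essentially the same approach as the paper: both argue that $\Wtrain$ lying in the column span of $\Atrain(S)$ for any $|S| = k < n$ is a probability-zero event, then finish with a finite union bound. The paper phrases the measure-zero step by exhibiting a unit vector $\lpu$ orthogonal to the columns of $\Atrain(S)$ and noting $\Wtrain^\top \lpu \sim \NORMAL(0,\sigma^2)$ is never exactly zero, whereas you appeal directly to absolute continuity of $\Wtrain$ with respect to Lebesgue measure on $\reals^n$; these are equivalent (and neither actually needs the full-column-rank remark, since $\dim \mathcal{C}_S \leq k < n$ suffices).
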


\begin{proof}
It is sufficient to show that for every $k < n$, $\Pr\left[\abs{\supp(\alphahat)} = k\right] = 0$.
Consider any subset $S \subset [d]$, $\abs{S} = k$.
Denote the columns of $\Atrains$ as $\abold_1, \abold_2, \dots, \abold_k$. Since $k < n$, there exists a vector $\lpu$ \textit{of unit $\ell_2$ norm} such that $\lpu\tran \abold_i = 0$ for all $i \in [k]$.  Let $\event_S$ denote the event that $\Wtrain$ can be expressed as a linear combination of columns of $\Atrains$. If $\event_S$ occurs then $\Wtrain$ must be orthogonal to $\lpu$ i.e. $\Wtrain \tran \lpu = 0$. Thus,

\begin{align*}
\Pr\left[\event_S\right] \leq  \Pr\left[\Wtrain \tran \lpu = 0\right] = 0 ,
\end{align*}
where the last equality follows by noting that $\vecnorm{\lpu}{2} = 1$ and thus $\Wtrain^\top \lpu \sim \NORMAL(0, \sigma^2)$.
This argument clearly holds for any subset $S \subset [d], \abs{S} = k$, and for any $k < n$.
Therefore, the union bound gives us 
\begin{align*}
\Pr\left[\abs{\supp(\alphahat)} < n \right] &\leq \sum_{k=1}^{n-1} \Pr\left[\abs{\supp(\alphahat)} = k \right] \\
&\leq \sum_{k=1}^{n-1} \sum_{S \subset [d]: |S| = k} \Pr\left[\event_S\right] \\
&= \leq \sum_{k=1}^{n-1} \sum_{S \subset [d]: |S| = k} (0) \\
&= 0.
\end{align*}
and since we already know that $|\supp(\alphahat)| \leq n$, we have $\Pr\left[\abs{\supp(\alphahat)} = n\right]  = 1$, thus completing this proof.
\end{proof}

\subsection{Proof of Proposition~\ref{prop:sparseinterpolators}}\label{app:sparsity}

In this section, we provide the proof of Proposition~\ref{prop:sparseinterpolators}; in particular, the statement of Equation~\eqref{eq:hybridinterpolator}.
Recall that we have assumed $\EE\left[\avec(\Xvec)\avec(\Xvec)^\top\right] = \Sigmabold = \mathbf{I}_d$.
Using the triangle inequality, we have
\begin{align*}
    \EE[\testerr(\alphahat_{\mathsf{hybrid}})] &= \vecnorm{\alphahat_{\mathsf{hybrid}} - \alphastar}{2}^2 + \sigma^2 \\
    &\leq \vecnorm{\alphahat_1 - \alphastar}{2}^2 + \vecnorm{\alphahat_{\mathsf{hybrid}} - \alphahat_1}{2}^2  + \sigma^2 .
\end{align*}

From the definition of the estimator $\alphahat_{\mathsf{hybrid}}$ and a similar argument as in the proof of Theorem~\ref{thm:idealinterpolator}, we have $\alphahat_{\mathsf{hybrid}} - \alphahat_1 = \Deltahat = \Atrain^\dagger \Wtrain'$ and thus,
\begin{align*}
    \vecnorm{\alphahat_{\mathsf{hybrid}} - \alphahat_1}{2}^2 \leq \frac{\vecnorm{\Wtrain'}{2}^2}{\lambda_{min}(\Atrain \Atrain^\top)} .
\end{align*}

Recalling that we denoted $\mathcal{E}_{\mathsf{est}}(\alphahat_1) := \vecnorm{\alphahat_1 - \alphastar}{2}^2$, we have
\begin{align*}
\EE[\testerr(\alphahat_{\mathsf{hybrid}})] &\leq \mathcal{E}_{\mathsf{est}}(\alphahat_1) + \frac{\vecnorm{\Wtrain'}{2}^2}{\lambda_{min}(\Atrain \Atrain^\top)} .
\end{align*}

Now, remember that 
\begin{align*}
\Wtrain' &= \Ytrain - \Atrain \alphahat_1 = \Ytrain - \Atrain \alphastar + \Atrain(\alphastar - \alphahat_1) \\
&= \Wtrain + \Atrain (\alphastar - \alphahat_1) ,
\end{align*}

and thus we have
\begin{align*}
\vecnorm{\Wtrain'}{2}^2 &\leq 2 \vecnorm{\Wtrain}{2}^2 + 2\vecnorm{\Atrain(\alphastar - \alphahat_1)}{2}^2 \\
&= 2 \vecnorm{\Wtrain}{2}^2 + 2n\mathcal{E}_{\mathsf{pred}}(\alphahat_1)
\end{align*}

where we recall the definition of the prediction error $\mathcal{E}_{\mathsf{est}}(\alphahat_1)$.
Substituting this in the above expression for the tesr error completes the proof of Equation~\eqref{eq:hybridinterpolator} and thus the proof of Proposition~\ref{prop:sparseinterpolators}.


\qed

Recall that $\alphahat_1$ is a suitable estimator that uses the sparsity level $k$ or the noise variance $\sigma^2$ to estimate $\alphastar$ directly.
So $\alphahat_1$ can be provided by a suitable estimator used for sparse recovery (like Lasso or OMP, but also many other choices).
The results were stated as corollaries in Section~\ref{sec:hybrid}.
We provide the proofs of those corollaries below.

\subsubsection{Proof of Corollary~\ref{cor:minimaxoptimal} (order-optimality)}

Substituting the conditions for order-optimality (Equations~\eqref{eq:minimaxest} and~\ref{eq:minimaxpred}) into Equation~\eqref{eq:hybridinterpolator}, we get
\begin{align*}
\testerr(\alphahat_{\mathsf{hybrid}}) \leq \frac{Ck \ln \left(\frac{d}{k}\right)}{n} + \frac{\vecnorm{\Wtrain}{2}^2 + C'k \ln\left(\frac{d}{k}\right)}{\lambda_{min}(\Atrain \Atrain^\top)} .
\end{align*}

In the proof of Corollary~\ref{cor:minimaxoptimal}, we proved that $\vecnorm{\Wtrain}{2}^2 \geq n\sigma^2(1 - \delta)$ and $\lambda_{min}(\Atrain \Atrain^\top) \geq (\sqrt{d} - 2\sqrt{n})^2$ with probability at least $(1 - e^{-n \delta^2/8} - e^{-n/2})$.
Substituting these inequalities above, we get
\begin{align*}
\testerr(\alphahat_{\mathsf{hybrid}}) &\leq \frac{Ck \ln \left(\frac{d}{k}\right)}{n} + \frac{n \sigma^2 ( 1- \delta) + C'k \ln\left(\frac{d}{k}\right)}{(\sqrt{d} - 2\sqrt{n})^2} \\
&\leq \frac{Ck \ln \left(\frac{d}{k}\right)}{n} + \frac{n \sigma^2 ( 1- \delta)}{(\sqrt{d} - 2\sqrt{n})^2} + \frac{C''k \ln\left(\frac{d}{k}\right)}{(\sqrt{n})^2} \\
&= \frac{(C + C'')k \ln \left(\frac{d}{k}\right)}{n} + \frac{n \sigma^2 ( 1- \delta)}{(\sqrt{d} - 2\sqrt{n})^2} ,
\end{align*}

where in the inequality we used $d \geq cn$ for some $c > 4$, which implies that $\sqrt{d} \geq \sqrt{c} \sqrt{n} \implies \sqrt{d} - 2\sqrt{n} \geq (\sqrt{c} - 2) \sqrt{n} = c'\sqrt{n}$ for some $c' > 0$.
This matches the statement of Equation~\eqref{eq:minimaxoptimal} and completes the proof.

\subsubsection{Recovery guarantees for Lasso (Corollary~\ref{cor:lassoomp}) and square-root-Lasso (Corollary~\ref{cor:sqrootlasso})}

In this section, we provide corollaries for hybrid interpolators that use the Lagrangian Lasso and the square-root Lasso.
We follow notation from Wainwright's book on high-dimensional non-asymptotic statistics~\cite[Chapter $7$]{wainwright2019high} and provide original citations where-ever applicable.
\begin{definition}[Lagrangian Lasso in presence of noise]
We define the Lagrangian lasso with regularization parameter $\lambda_n$ (which can in general depend on $\sigma^2$ and $n$) as the estimator that solves the following optimization problem:
\begin{align*}
    \alphahat_{1,\mathsf{Las.}} := {\arg \min} \frac{1}{2n} \vecnorm{\Ytrain - \Atrain \alphabold}{2}^2 + \lambda_n \vecnorm{\alphabold}{1} .
\end{align*}
\end{definition}

\begin{definition}[Square-root-Lasso]
We define the square-root-Lasso with regularization parameter $\gamma_n$ (which can in general depend on $n$) as the estimator that solves the following optimization problem:
\begin{align*}
    \alphahat_{1,\mathsf{sq.root.Las.}} := {\arg \min} \frac{1}{\sqrt{n}} \vecnorm{\Ytrain - \Atrain \alphabold}{2} + \gamma_n \vecnorm{\alphabold}{1} .
\end{align*}
\end{definition}

For recovery guarantees on both the Lagrangian Lasso and the square-root-Lasso, in addition to $k$-sparsity, we require the following \textit{restricted eigenvalue condition} on the design matrix:
\begin{definition}
The matrix $\Atrain$ satisfies the restricted eigenvalue condition over set $\mathsf{supp}(\alphastar)$ with parameters $(\kappa, \beta)$ if 
\begin{align*}
\frac{1}{n} \vecnorm{\Atrain \Delta}{2}^2 \geq \kappa \vecnorm{\Delta}{2}^2 \text{ for all } \Delta \in \cone_{\beta}(\mathsf{supp}(\alphastar) 
\end{align*}

where for any subset $S$, we define
\begin{align*}
    \cone_{\beta}(S) := \{\Delta \in \reals^d: \vecnorm{\Delta_{S^c}}{1} \leq \beta \vecnorm{\Delta_S}{1}\} .
\end{align*}
\end{definition}

The following lemma by Raskutti, Wainwright and Yu~\cite{raskutti2010restricted} shows that the matrix $\Atrain$ with iid standard normal entries (thus satisfying Assumption~\ref{as:ideal3} atisfies this property with high probability as restated in the following lemma\footnote{In fact the original result in the paper shows this property for generally correlated Gaussian design.}.
\begin{lemma}[Corollary 1,~\cite{raskutti2010restricted}]
The iid Gaussian matrix $\Atrain$ satisfies the restricted eigenvalue condition $(\kappa, \beta=3)$ for any $\kappa >0$ with probability greater than or equal to $(1 - 2e^{-cn})$, provided that $n \geq C\frac{k \ln d}{\kappa^2}$.
\end{lemma}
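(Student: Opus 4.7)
The plan is to combine Gordon's Gaussian comparison inequality (the ``escape through a mesh'' bound) with concentration of Lipschitz functions of Gaussians, which is the standard pathway for controlling restricted eigenvalues of Gaussian designs. By homogeneity of the cone $\cone_3(S)$ with $S = \supp(\alphastar)$, it suffices to show that with probability at least $1 - 2e^{-cn}$,
\begin{equation*}
\inf_{v \in T}\frac{1}{\sqrt{n}}\vecnorm{\Atrain v}{2} \geq \sqrt{\kappa}, \quad T := \cone_3(S) \cap \{v \in \reals^d : \vecnorm{v}{2} = 1\}.
\end{equation*}
The first step is a geometric observation: for any $v \in T$ the cone condition gives $\vecnorm{v_{S^c}}{1} \leq 3\vecnorm{v_S}{1}$, so $\vecnorm{v}{1} \leq 4\vecnorm{v_S}{1} \leq 4\sqrt{k}\vecnorm{v_S}{2} \leq 4\sqrt{k}$. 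Hence $T$ is contained in the intersection of the $\ell_1$-ball of radius $4\sqrt{k}$ and the unit sphere.

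Next I would estimate the Gaussian width $\omega(T) := \EE\sup_{v \in T}\inprod{g}{v}$ for $g \sim \NORMAL(0, I_d)$. By H\"older duality, $\sup_{\vecnorm{v}{1} \leq 4\sqrt{k}}\inprod{g}{v} = 4\sqrt{k}\vecnorm{g}{\infty}$, and the standard Gaussian maximal inequality $\EE\vecnorm{g}{\infty} \leq \sqrt{2\ln(2d)}$ yields $\omega(T) \leq C\sqrt{k\ln d}$ for a universal constant $C$. Gordon's inequality then produces
\begin{equation*}
\EE\inf_{v \in T}\vecnorm{\Atrain v}{2} \geq \EE\vecnorm{h}{2} - \omega(T) \geq \sqrt{n-1} - C\sqrt{k\ln d}, \quad h \sim \NORMAL(0, I_n).
\end{equation*}
Since the map $\Atrain \mapsto \inf_{v \in T}\vecnorm{\Atrain v}{2}$ is $\sup_{v \in T}\vecnorm{v}{2} = 1$-Lipschitz in Frobenius norm, Gaussian concentration of measure upgrades this expectation bound to the tail bound
\begin{equation*}
\inf_{v \in T}\vecnorm{\Atrain v}{2} \geq \sqrt{n-1} - C\sqrt{k\ln d} - t
\end{equation*}
with probability at least $1 - e^{-t^2/2}$.

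Finally I would choose $t = c_1\sqrt{n}$ with $c_1 \in (0,1)$ small enough and impose $n \geq C_0 k\ln d/\kappa^2$ with $C_0$ large enough so that the deterministic right-hand side exceeds $\sqrt{\kappa n}$; squaring and invoking cone homogeneity then yields the stated restricted eigenvalue condition, with overall failure probability at most $2e^{-cn}$. The main obstacle is not conceptual but rather constant-tracking: the Gordon bound naturally produces a gap of the form $\sqrt{n} - C\sqrt{k\ln d} - t$, and converting the resulting additive margin into the clean multiplicative $\sqrt{\kappa}$ gap (hence the $1/\kappa^2$ sample complexity after squaring) requires balancing the slack $t$ against the leading $\sqrt{n}$ term so that a $\sqrt{\kappa}$ factor survives the rearrangement. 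A fully sharp treatment, which is what Raskutti--Wainwright--Yu carry out for general correlated Gaussian designs, additionally invokes a skewed form of Gordon's inequality involving the covariance square root, but the i.i.d.\ case stated here admits the cleaner argument above.
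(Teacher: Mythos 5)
The paper does not supply a proof of this lemma; it simply cites Corollary~1 of Raskutti, Wainwright and Yu, so there is no in-paper argument to compare against. Your proposal is, in substance, a correct sketch of the argument that reference actually uses in the special case $\Sigma = \mathbf{I}_d$: reduce by homogeneity to a uniform lower bound over the cone intersected with the unit sphere, bound the Gaussian width of that set by $\mathcal{O}(\sqrt{k\ln d})$ via the $\ell_1$-ball containment, apply Gordon's escape-through-a-mesh comparison to lower-bound the expected restricted minimum singular value by $\sqrt{n-1} - \mathcal{O}(\sqrt{k\ln d})$, and then upgrade to a high-probability statement using the $1$-Lipschitz property and Gaussian concentration. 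Each of those steps is sound, and your closing remark correctly identifies that the general correlated case in Raskutti--Wainwright--Yu requires a skewed Gordon inequality with $\Sigma^{1/2}$, while the whitened case admits this cleaner path.

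One thing worth flagging, though it reflects imprecision in the lemma statement rather than a gap in your proof: the clause ``for any $\kappa>0$'' cannot literally be correct, since $\frac{1}{n}\vecnorm{\Atrain v}{2}^2$ concentrates near $\vecnorm{v}{2}^2$ for a whitened Gaussian design and hence the restricted eigenvalue constant is capped at roughly $1$. Your final balancing step exposes exactly this: after dividing through by $\sqrt{n}$, you need $\sqrt{\kappa} \le 1 - c_1 - C\sqrt{k\ln d/n}$, which forces $\kappa$ to be a constant strictly below $1$ (Raskutti--Wainwright--Yu's Corollary~1 in fact yields a fixed small constant, e.g.\ one can take $\kappa$ of order $1/64$ once $n \gtrsim k\ln d$). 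The $n \gtrsim k\ln d/\kappa^2$ phrasing in the lemma should be read as saying that for a target constant $\kappa$ bounded away from $1$, the required sample size scales like $k\ln d$ with a constant that blows up as $\kappa$ approaches the cap; it is not a statement valid for arbitrary $\kappa>0$. You implicitly noticed this in your ``constant-tracking'' paragraph, and it would be worth stating explicitly.
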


Under this condition, Bickel, Ritov and Tsybakov~\cite{bickel2009simultaneous} proved the following bounds on estimation error as well as prediction error of the Lagrangian Lasso.
\begin{theorem}[Theorems $7.13$ and $7.20$,~\cite{wainwright2019high}]\label{thm:lassorecovery}
Let $\Atrain$ satisfy the restricted eigenvalue condition with parameters $(\kappa, \beta=3)$.
Then, any solution of the Lagrangian Lasso with regularization parameter $\lambda_n \geq 2 \vecnorm{\frac{\Atrain^\top\Wtrain}{n}}{\infty}$ satisfies
\begin{subequations}
\begin{align*}
    \mathcal{E}_{\mathsf{est}}(\alphahat_{1,\mathsf{Las.}}) &:= \vecnorm{\alphahat_{1,\mathsf{Las.}} - \alphastar}{2}^2 \leq \frac{9 k \lambda_n^2}{\kappa} \\
    \mathcal{E}_{\mathsf{pred}}(\alphahat_{1,\mathsf{Las.}}) &:= \vecnorm{\Atrain(\alphahat_{1,\mathsf{Las.}} - \alphastar)}{2}^2 \leq \frac{9 k \lambda_n^2}{\kappa} .
\end{align*}
\end{subequations}

As a corollary, for any $\delta > 0$ and regularization parameter choice $\lambda_n = 2C' \sigma \left(\sqrt{\frac{2 \ln d}{n}} + \delta\right)$ we have 
\begin{subequations}
\begin{align}
    \mathcal{E}_{\mathsf{est}}(\alphahat_{1,\mathsf{Las.}}) &\leq \frac{36 C' k \sigma^2}{\kappa^2}\left(\frac{2 \ln d}{n} + \delta \right) \label{eq:lassoest}\\
    \mathcal{E}_{\mathsf{pred}}(\alphahat_{1,\mathsf{Las.}}) &\leq \frac{36 C' k \sigma^2}{\kappa^2}\left(\frac{2 \ln d}{n} + \delta \right) \label{eq:lassopred}
\end{align}
\end{subequations}

with probability greater than or equal to $(1 - 2e^{-n \delta/2})$.
\end{theorem}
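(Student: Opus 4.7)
The plan is to follow the standard Lagrangian Lasso argument: derive a ``basic inequality'' from optimality of $\alphahat_{1,\mathsf{Las.}}$, use it to simultaneously certify cone-containment of the error vector and to drive the control of $\vecnorm{\Atrain \Deltahat}{2}^2/n$, and then in the corollary verify that the prescribed $\lambda_n$ dominates $2\vecnorm{\Atrain^\top\Wtrain/n}{\infty}$ with the claimed probability. Setting $\Deltahat := \alphahat_{1,\mathsf{Las.}} - \alphastar$ and $S := \supp(\alphastar)$, comparing the Lasso objective at $\alphahat_{1,\mathsf{Las.}}$ to its value at $\alphastar$ and substituting $\Ytrain = \Atrain\alphastar + \Wtrain$ yields
\begin{align*}
\frac{1}{2n}\vecnorm{\Atrain\Deltahat}{2}^2 \leq \frac{1}{n}\inprod{\Atrain^\top\Wtrain}{\Deltahat} + \lambda_n\bigl(\vecnorm{\alphastar}{1} - \vecnorm{\alphahat_{1,\mathsf{Las.}}}{1}\bigr).
\end{align*}
Hölder's inequality bounds the noise term by $\vecnorm{\Atrain^\top\Wtrain/n}{\infty}\vecnorm{\Deltahat}{1} \leq (\lambda_n/2)\vecnorm{\Deltahat}{1}$ using the hypothesis on $\lambda_n$, while the triangle inequality on the $\ell_1$-terms gives $\vecnorm{\alphastar}{1} - \vecnorm{\alphahat_{1,\mathsf{Las.}}}{1} \leq \vecnorm{\Deltahat_S}{1} - \vecnorm{\Deltahat_{S^c}}{1}$. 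Combining produces the central inequality
\begin{align*}
\frac{1}{2n}\vecnorm{\Atrain\Deltahat}{2}^2 + \frac{\lambda_n}{2}\vecnorm{\Deltahat_{S^c}}{1} \leq \frac{3\lambda_n}{2}\vecnorm{\Deltahat_S}{1},
\end{align*}
which simultaneously certifies $\Deltahat \in \cone_3(S)$ and controls the prediction term.

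With cone-containment in hand, the restricted eigenvalue hypothesis yields $\vecnorm{\Atrain\Deltahat}{2}^2/n \geq \kappa \vecnorm{\Deltahat}{2}^2$. Dropping the nonnegative $\ell_1$-term on the left of the central inequality and invoking Cauchy--Schwarz as $\vecnorm{\Deltahat_S}{1} \leq \sqrt{k}\vecnorm{\Deltahat_S}{2} \leq \sqrt{k}\vecnorm{\Deltahat}{2}$ delivers $\kappa \vecnorm{\Deltahat}{2} \leq 3\lambda_n\sqrt{k}$, hence the estimation bound of the stated order. Feeding this back into $\vecnorm{\Atrain\Deltahat}{2}^2/n \leq 3\lambda_n\sqrt{k}\vecnorm{\Deltahat}{2}$ gives a prediction-error bound of matching order. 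The precise power of $\kappa$ in the denominator depends on the exact normalization of the restricted-eigenvalue constant used in the reference, which I would reconcile against the definition used here.

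For the corollary, the only remaining task is to verify that $\lambda_n = 2C'\sigma\bigl(\sqrt{2\ln d/n} + \delta\bigr)$ dominates $2\vecnorm{\Atrain^\top\Wtrain/n}{\infty}$ on the claimed event. Conditional on $\Atrain$, each coordinate $(\Atrain^\top\Wtrain/n)_j$ is centered Gaussian with variance $\sigma^2 \vecnorm{\Atrain_{\cdot,j}}{2}^2/n^2$; for iid standard Gaussian columns, $\vecnorm{\Atrain_{\cdot,j}}{2}^2/n$ concentrates around $1$ via a chi-squared tail, and a Gaussian maximum inequality combined with a union bound over the $d$ columns yields the required $\sqrt{\ln d/n}$-scale bound. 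Substituting into the preceding displays produces Equations~\eqref{eq:lassoest} and~\eqref{eq:lassopred}. The geometric skeleton (basic inequality, cone containment, restricted eigenvalue, Cauchy--Schwarz) is entirely routine; the main obstacle is constant-level bookkeeping --- matching the precise exponent $e^{-n\delta/2}$ in the corollary requires careful joint handling of chi-squared concentration of column norms with the Gaussian maximum inequality, and pinning down the power of $\kappa$ in the estimation bound requires aligning with Wainwright's normalization convention for the restricted-eigenvalue constant.
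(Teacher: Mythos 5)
The paper does not prove this statement; it is cited verbatim from Wainwright's book (Theorems 7.13 and 7.20), tracing back to Bickel, Ritov and Tsybakov. Your proposal reconstructs exactly the standard argument that underlies that reference --- basic inequality from optimality, H\"older plus the decomposability bound $\vecnorm{\alphastar}{1} - \vecnorm{\alphahat}{1} \leq \vecnorm{\Deltahat_S}{1} - \vecnorm{\Deltahat_{S^c}}{1}$, cone containment, restricted eigenvalue, Cauchy--Schwarz --- so it is the same approach, just made explicit where the paper defers to a citation. The skeleton is correct.

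Two small remarks. First, your caution about the power of $\kappa$ is warranted and in fact exposes a wobble in the paper's own statement: the derivation you outline gives $\vecnorm{\Deltahat}{2}^2 \leq 9k\lambda_n^2/\kappa^2$ for estimation and $\tfrac{1}{n}\vecnorm{\Atrain\Deltahat}{2}^2 \leq 9k\lambda_n^2/\kappa$ for prediction, whereas the paper writes $\kappa^{-1}$ for both in the theorem and $\kappa^{-2}$ for both in the corollary --- so the inconsistency is on the paper's side, not yours. Second, for the corollary you correctly identify that the only remaining step is a high-probability bound on $\vecnorm{\Atrain^\top\Wtrain/n}{\infty}$; you leave the constant/exponent bookkeeping open, and that is exactly where the paper's stated probability $1 - 2e^{-n\delta/2}$ (with $\delta$ rather than the more standard $\delta^2$ in the exponent) and the mismatch between $\bigl(\sqrt{2\ln d/n}+\delta\bigr)^2$ and $2\ln d/n + \delta$ would not match a clean derivation. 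Since the paper merely cites this result and does not reconcile these constants either, no further action is required on your part.
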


Observe from Equation~\eqref{eq:lassoest} and~\eqref{eq:lassopred} that $\mathcal{E}_{\mathsf{est}} \sim \mathcal{E}_{\mathsf{pred}} \leq C'' \frac{\sigma^2 k \ln d}{n}$ with high probability.
Thus, a similar argument as in the proof of Corollary~\ref{cor:minimaxoptimal} will also give the statement of Corollary~\ref{cor:lassoomp} for the case of the Lasso.
We omit the argument for brevity.
\qed

Similarly, the following guarantee was proved for the square-root-Lasso by Belloni, Chernozhukov and Wang~\cite{belloni2011square}, again assuming the restricted eigenvalue condition.
\begin{theorem}[~\cite{belloni2011square}]\label{thm:sqrootlassorecovery}
Let $\Atrain$ satisfy the restricted eigenvalue condition with parameters $(\kappa, \beta=3)$.
Then, any solution of the square-root-Lasso with regularization parameter $\gamma_n \geq 2 \frac{\vecnorm{\Atrain^\top \Wtrain}{\infty}}{\sqrt{n} \vecnorm{\Wtrain}{2}}$ satisfies
\begin{subequations}
\begin{align*}
    \mathcal{E}_{\mathsf{est}}(\alphahat_{1,\mathsf{Las.}}) &:= \vecnorm{\alphahat_{1,\mathsf{Las.}} - \alphastar}{2}^2 \leq \frac{C k \vecnorm{\Wtrain}{2}^2 \gamma_n^2}{n \kappa} \\
    \mathcal{E}_{\mathsf{pred}}(\alphahat_{1,\mathsf{Las.}}) &:= \vecnorm{\Atrain(\alphahat_{1,\mathsf{Las.}} - \alphastar)}{2}^2 \leq \frac{C' k \vecnorm{\Wtrain}{2}^2 \gamma_n^2}{n \kappa} .
\end{align*}
\end{subequations}

for constants $C, C' > 0$.
As a corollary, for any $\delta > 0$ and regularization parameter choice $\gamma_n = 2C' \left(\sqrt{\frac{2 \ln d}{n}} + \delta\right)$ we have 
\begin{subequations}
\begin{align}
    \mathcal{E}_{\mathsf{est}}(\alphahat_{1,\mathsf{Las.}}) &\leq \frac{C'' k \sigma^2}{\kappa^2}\left(\frac{2 \ln d}{n} + \delta \right) \label{eq:sqrootlassoest}\\
    \mathcal{E}_{\mathsf{pred}}(\alphahat_{1,\mathsf{Las.}}) &\leq \frac{C''' k \sigma^2}{\kappa^2}\left(\frac{2 \ln d}{n} + \delta \right) \label{eq:sqrootlassopred}
\end{align}
\end{subequations}

with probability greater than or equal to $(1 - 2e^{-n \delta/2})$ for new constants $C'', C''' > 0$.
\end{theorem}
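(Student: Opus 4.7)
The plan is to adapt the classical argument of Belloni--Chernozhukov--Wang for the square-root-Lasso, which is structurally parallel to the proof of Theorem~\ref{thm:lassorecovery} (Lasso recovery) but differs in how it handles the unsquared $\ell_2$ loss. Writing $\Deltabold := \alphahat_{\mathsf{sq.root.Las.}} - \alphastar$, the starting point is the basic inequality obtained from optimality of $\alphahat_{\mathsf{sq.root.Las.}}$:
\begin{align*}
\tfrac{1}{\sqrt{n}}\vecnorm{\Ytrain - \Atrain \alphahat_{\mathsf{sq.root.Las.}}}{2} + \gamma_n\vecnorm{\alphahat_{\mathsf{sq.root.Las.}}}{1} \;\leq\; \tfrac{1}{\sqrt{n}}\vecnorm{\Wtrain}{2} + \gamma_n\vecnorm{\alphastar}{1}.
\end{align*}
Letting $Q(\alphabold) := \tfrac{1}{n}\vecnorm{\Ytrain - \Atrain \alphabold}{2}^2$ and using the identity $\sqrt{Q(\alphahat)} - \sqrt{Q(\alphastar)} = \frac{Q(\alphahat) - Q(\alphastar)}{\sqrt{Q(\alphahat)} + \sqrt{Q(\alphastar)}}$, together with $Q(\alphahat) - Q(\alphastar) = \tfrac{1}{n}\vecnorm{\Atrain\Deltabold}{2}^2 - \tfrac{2}{n}\Wtrain^\top\Atrain\Deltabold$, I would rearrange the basic inequality into a form that upper bounds $\tfrac{1}{n}\vecnorm{\Atrain\Deltabold}{2}^2$ in terms of $\gamma_n\vecnorm{\Deltabold}{1}$ and $\tfrac{1}{n}|\Wtrain^\top\Atrain\Deltabold|$.

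Next, applying H\"older's inequality gives $|\Wtrain^\top\Atrain\Deltabold| \leq \vecnorm{\Atrain^\top\Wtrain}{\infty}\vecnorm{\Deltabold}{1}$, and combined with the assumed lower bound $\gamma_n \geq 2\vecnorm{\Atrain^\top\Wtrain}{\infty}/(\sqrt{n}\vecnorm{\Wtrain}{2})$, this will force $\Deltabold \in \cone_{3}(\supp(\alphastar))$ by the standard cone argument (splitting $\vecnorm{\Deltabold}{1} = \vecnorm{\Deltabold_S}{1} + \vecnorm{\Deltabold_{S^c}}{1}$ and exploiting that $\vecnorm{\alphahat}{1} \geq \vecnorm{\alphastar_S}{1} - \vecnorm{\Deltabold_S}{1} + \vecnorm{\Deltabold_{S^c}}{1}$). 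Once $\Deltabold$ lives in this cone, $\vecnorm{\Deltabold}{1} \leq 4\sqrt{k}\vecnorm{\Deltabold_S}{2} \leq 4\sqrt{k}\vecnorm{\Deltabold}{2}$, and the restricted eigenvalue condition yields $\tfrac{1}{n}\vecnorm{\Atrain\Deltabold}{2}^2 \geq \kappa\vecnorm{\Deltabold}{2}^2$. Combining these with the rearranged basic inequality, and observing that $\sqrt{Q(\alphahat)} + \sqrt{Q(\alphastar)} \gtrsim \tfrac{1}{\sqrt{n}}\vecnorm{\Wtrain}{2}$ (since $\alphahat$ is close enough to $\alphastar$ that $Q(\alphahat)$ cannot dramatically exceed $Q(\alphastar) = \tfrac{1}{n}\vecnorm{\Wtrain}{2}^2$), delivers both the estimation error bound $\vecnorm{\Deltabold}{2}^2 \lesssim \frac{k\vecnorm{\Wtrain}{2}^2\gamma_n^2}{n\kappa}$ and the parallel prediction error bound.

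For the corollary, the key observation is that the square-root-Lasso's threshold is scale-invariant in $\sigma$: writing $\Wtrain = \sigma\mathbf{Z}$ with $\mathbf{Z} \sim \NORMAL(\mathbf{0},\mathbf{I}_n)$, the ratio $\vecnorm{\Atrain^\top\Wtrain}{\infty}/(\sqrt{n}\vecnorm{\Wtrain}{2})$ becomes $\vecnorm{\Atrain^\top\mathbf{Z}}{\infty}/(\sqrt{n}\vecnorm{\mathbf{Z}}{2})$, completely independent of $\sigma$. Since each column of $\Atrain$ is standard Gaussian, $\abold_j^\top\mathbf{Z}\,|\,\mathbf{Z} \sim \NORMAL(0,\vecnorm{\mathbf{Z}}{2}^2)$; applying standard Gaussian maximal inequalities with a union bound over $j\in[d]$ yields $\vecnorm{\Atrain^\top\mathbf{Z}}{\infty} \leq \vecnorm{\mathbf{Z}}{2}(\sqrt{2\ln d} + \sqrt{n}\delta)$ with probability at least $(1-e^{-n\delta^2/2})$, which validates $\gamma_n = 2C'(\sqrt{2\ln d/n} + \delta)$ as an admissible choice. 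Plugging $\gamma_n^2 \asymp \ln d/n + \delta^2$ and the chi-squared concentration $\vecnorm{\Wtrain}{2}^2 \leq 2n\sigma^2$ back into the main bound produces $\vecnorm{\Deltabold}{2}^2 \lesssim \frac{k\sigma^2}{\kappa}(\tfrac{\ln d}{n} + \delta)$, matching Equations~\eqref{eq:sqrootlassoest} and~\eqref{eq:sqrootlassopred}.

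The main obstacle, compared with the Lasso proof, is handling the unsquared loss: the denominator $\sqrt{Q(\alphahat)} + \sqrt{Q(\alphastar)}$ in the difference-of-square-roots identity must be bounded \emph{below} away from zero without circular reasoning, since a priori $Q(\alphahat)$ could itself be small if $\alphahat$ over-fits. The standard workaround is to split into cases: either $\sqrt{Q(\alphahat)} \geq \tfrac{1}{2}\sqrt{Q(\alphastar)}$ (in which case the denominator is controlled), or $\sqrt{Q(\alphahat)} < \tfrac{1}{2}\sqrt{Q(\alphastar)}$ (in which case one directly uses $|\sqrt{Q(\alphahat)} - \sqrt{Q(\alphastar)}| \geq \tfrac{1}{2}\sqrt{Q(\alphastar)}$ on the left-hand side of the basic inequality, immediately giving a strong bound on $\gamma_n\vecnorm{\Deltabold}{1}$). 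Careful bookkeeping through this dichotomy, together with tracking of the constants, is the only non-routine aspect of the argument.
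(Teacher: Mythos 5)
The paper does not actually prove this theorem --- it explicitly attributes it to Belloni, Chernozhukov and Wang~\cite{belloni2011square} and states it as a cited result (``the following guarantee was proved for the square-root-Lasso by Belloni, Chernozhukov and Wang'', just before the theorem). So there is no in-paper proof to compare against; what you have done is reconstruct the argument underlying the citation.

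Your reconstruction captures the right ingredients and is faithful to the BCW strategy: the basic inequality from optimality of the square-root-Lasso, the ratio identity for $\sqrt{Q(\alphahat)} - \sqrt{Q(\alphastar)}$, H\"older plus the penalty threshold $\gamma_n \geq 2\vecnorm{\Atrain^\top\Wtrain}{\infty}/(\sqrt{n}\vecnorm{\Wtrain}{2})$ to get the cone constraint, the restricted eigenvalue condition to convert prediction error to estimation error, and --- critically for the ``pivotal'' corollary --- the observation that $\vecnorm{\Atrain^\top\Wtrain}{\infty}/(\sqrt{n}\vecnorm{\Wtrain}{2})$ is scale-free in $\sigma$, which is precisely why the square-root-Lasso threshold can be chosen without knowing the noise variance. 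You also correctly identify the key technical difference from the ordinary Lasso proof: the denominator $\sqrt{Q(\alphahat)} + \sqrt{Q(\alphastar)}$ must be bounded away from zero without circularity.

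The one place your sketch is underspecified is the dichotomy you propose to handle that denominator. In the case $\sqrt{Q(\alphahat)} < \tfrac{1}{2}\sqrt{Q(\alphastar)}$, plugging $\sqrt{Q(\alphahat)} - \sqrt{Q(\alphastar)} < -\tfrac{1}{2}\sqrt{Q(\alphastar)}$ into the basic inequality $\sqrt{Q(\alphahat)} - \sqrt{Q(\alphastar)} \leq \gamma_n(\vecnorm{\alphastar}{1} - \vecnorm{\alphahat}{1})$ only makes the left-hand side more negative, so it does not by itself ``immediately give a strong bound on $\gamma_n\vecnorm{\Deltabold}{1}$''. What BCW actually do is show that, under the penalty condition, this over-fitting regime $\hat\sigma = \sqrt{Q(\alphahat)} \ll \sqrt{Q(\alphastar)}$ cannot occur (equivalently they establish a two-sided bound $c\sqrt{Q(\alphastar)} \leq \hat\sigma \leq C\sqrt{Q(\alphastar)}$ via the first-order optimality conditions), after which the denominator is controlled as in your Case~1. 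So the broad strategy is right, but the ``immediately giving a strong bound'' claim in your Case~2 needs to be replaced by a ruling-out argument rather than a direct bound. As a minor additional note, your Gaussian maximal inequality gives a tail $e^{-n\delta^2/2}$, whereas the paper states $e^{-n\delta/2}$; this looks like a typo in the paper (the same mismatch appears in their restatement of the Lagrangian Lasso bound), and your $\delta^2$ is the consistent choice given the form of $\gamma_n$.
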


Observe that the above choice of regularization parameter $\gamma_n$ \textit{does not} depend on the noise variance $\sigma$ -- thus, the square root Lasso can be implemented successfully, giving the rates in Equations~\eqref{eq:sqrootlassoest} and~\ref{eq:sqrootlassopred}, without requiring this knowledge.
As in the case of the Lagrangian Lasso, we have $\mathcal{E}_{\mathsf{est}} \sim \mathcal{E}_{\mathsf{pred}} \leq C'' \frac{\sigma^2 k \ln d}{n}$ with high probability.
Thus, a similar argument as in the proof of Corollary~\ref{cor:minimaxoptimal} will also give the statement of Corollary~\ref{cor:sqrootlasso} for the case of the square-root-Lasso.
This completes the proof of Corollary~\ref{cor:sqrootlasso}.
\qed

\subsubsection{Recovery guarantee for OMP (Corollary~\ref{cor:lassoomp})}

In this section we provide a formal statement for the recovery guarantee for orthogonal matching pursuit (OMP) in Corollary~\ref{cor:lassoomp}.
The first theoretical guarantees on OMP were proved for noiseless recovery, when OMP is run to completion in the absence of noise~\cite{tropp2007signal}.
Subsequent work~\cite{fletcher2009necessary} improved the sampling requirement, also in the noiseless setting.
Recovery guarantees in the presence of noise were then proved for an appropriate stopping condition~\cite{cai2011orthogonal}.
We formalize the version of OMP that we use below.
\begin{definition}[OMP for recovery in presence of Gaussian  noise~\cite{cai2011orthogonal}]
Denote that $j^{th}$ column of $\Atrain$ by $\avec_j$.
The OMP algorithm is defined iteratively according to the following steps:
\begin{enumerate}
    \item (Iteration $t = 1$.) Initialize residual $r_0 = \Ytrain$ and initialize the set of selected variables $S = \phi$.
    \item Select variable $s_t := {\arg \max} |\inprod{\avec_j}{\mathbf{r}_{t-1}}|$ and add it to $S$.
    \item Denote $\Atrain(S)$ as the submatrix of $\Atrain$ whose columns are in $S$.
    Let $\mathbf{P}_t = \Atrain(S) \Atrain(S)^\dagger$ denote the projection of $\Ytrain$ onto the linear space spanned by the elements of $\Atrain(S)$.
    Update $\mathbf{r}_t = (\mathbf{I}_d - \mathbf{P}_t) \Ytrain$.
    \item We stop the algorithm under one of two conditions: a) If $\vecnorm{\Atrain \mathbf{r}_t}{\infty} \leq \sigma \sqrt{2(1 + \eta) \ln d}$ for algorithmic parameter $\eta > 0$; b) if the number of steps is equal to $k_0$, where $k_0 > k$ is guaranteed.
    Otherwise, set $t = t+1$ and go back to Step 2.
    We refer to the stopping conditions henceforth as Condition a) and Condition b) respectively.
    \item (Once algorithm has terminated) return $\alphahat_{1,\mathsf{OMP}}(\Atrain, \Ytrain) = \alphahat_{\mathsf{OLS}}(\Atrain(S), \Ytrain)$.
\end{enumerate}
\end{definition}

We also define \textit{pairwise incoherence} 
\begin{align*}
    \mu(\Atrain) := \max_{j \neq j'} \frac{\inprod{\avec_j}{\avec_{j'}}}{\vecnorm{\avec_j}{2} \vecnorm{\avec_{j'}}{2}} .
\end{align*}

In general for recovery guarantees, we would like this quantity to be small.
We state the following well-known lemma for the entries of $\Atrain$ being iid $\NORMAL(0,1)$:
\begin{lemma}
For iid Gaussian $n \times d$ matrix $\Atrain$, we have $\mu(\Atrain) < \frac{1}{2k-1}$ with high probability as long as $n = \Omega(k^2 \ln d)$.
\end{lemma}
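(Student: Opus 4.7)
The plan is to upper bound $\mu(\Atrain) = \max_{j \neq j'} \frac{|\inprod{\avec_j}{\avec_{j'}}|}{\vecnorm{\avec_j}{2} \vecnorm{\avec_{j'}}{2}}$ in two stages: first control each normalized pairwise correlation with a conditional Gaussian tail bound applied to the numerator, then control the denominator with a chi-squared concentration inequality, and finally combine the two events via union bounds over the (at most) $d^2$ pairs of columns and the $d$ column norms.

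For the numerator, I would fix a pair $j \neq j'$ and condition on $\avec_{j'}$. Since the columns of $\Atrain$ are iid $\NORMAL(\mathbf{0}, \mathbf{I}_n)$, the scalar $\inprod{\avec_j}{\avec_{j'}}$ is a linear combination of independent standard Gaussians with deterministic coefficients given by the entries of $\avec_{j'}$, so $Z_{j,j'} := \inprod{\avec_j}{\avec_{j'}}/\vecnorm{\avec_{j'}}{2}$ is standard normal and independent of $\avec_{j'}$. Applying the standard Gaussian tail bound $\Pr[|Z_{j,j'}| > t] \leq 2\exp(-t^2/2)$ with $t = \sqrt{6 \ln d}$, then taking a union bound over all (at most) $d^2$ ordered pairs, gives
\begin{align*}
\max_{j \neq j'} \frac{|\inprod{\avec_j}{\avec_{j'}}|}{\vecnorm{\avec_{j'}}{2}} \leq \sqrt{6 \ln d}
\end{align*}
with probability at least $1 - 2/d$.

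For the denominator, I would invoke the same chi-squared lower-tail bound used in the proof of Corollary~\ref{cor:fundamentalprice} (Equation~\eqref{eq:st4} with $\delta = 1/2$) to obtain $\vecnorm{\avec_j}{2}^2 \geq n/2$ with probability at least $1 - e^{-c n}$ for some absolute $c > 0$. A further union bound over the $d$ columns gives $\min_{j \in [d]} \vecnorm{\avec_j}{2} \geq \sqrt{n/2}$ with probability at least $1 - d e^{-c n}$. Combining the two events yields
\begin{align*}
\mu(\Atrain) \leq \frac{\sqrt{6 \ln d}}{\sqrt{n/2}} = \sqrt{\frac{12 \ln d}{n}}
\end{align*}
with probability at least $1 - 2/d - d e^{-cn}$, which tends to $1$ under the stated scaling. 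To ensure the right-hand side is at most $\frac{1}{2k-1}$, it suffices that $n \geq 12(2k-1)^2 \ln d = \Omega(k^2 \ln d)$, as claimed.

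This argument is largely routine concentration plus bookkeeping. The only substantive step is the rotational-invariance trick in the numerator stage: conditioning on $\avec_{j'}$ converts the awkward ratio $\inprod{\avec_j}{\avec_{j'}}/\vecnorm{\avec_{j'}}{2}$ into a standard normal that is independent of $\avec_{j'}$, eliminating the need to reason about the joint distribution of numerator and denominator. The rest is simply choosing $t \sim \sqrt{\ln d}$ large enough that the Gaussian tail bound survives the $d^2$ union bound while the chi-squared bound survives the $d$ union bound, which forces the $k^2 \ln d$ sample-complexity scaling.
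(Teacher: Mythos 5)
Your proof is correct, and since the paper itself does not supply a proof of this lemma (it is stated as a ``well-known'' fact and invoked without argument in the OMP recovery section), there is no in-paper argument to compare against. The conditioning/rotational-invariance trick you use in the numerator stage is the standard way to prove Gaussian incoherence bounds: conditioning on $\avec_{j'}$ makes $\inprod{\avec_j}{\avec_{j'}}/\vecnorm{\avec_{j'}}{2}$ exactly $\NORMAL(0,1)$, and the $\sqrt{6\ln d}$ tail level paired with the $d^2$ union bound, plus the chi-squared lower tail for the $d$ column norms, cleanly yields $\mu(\Atrain) \lesssim \sqrt{(\ln d)/n}$ with high probability, from which the $n \gtrsim k^2 \ln d$ requirement follows directly by solving $\sqrt{12\ln d / n} < 1/(2k-1)$. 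All the constants and exponents in your tail bounds check out. One small stylistic note: since the displayed coherence ratio you bound (normalizing by only $\vecnorm{\avec_{j'}}{2}$) is not symmetric in $(j,j')$, the union over all $d(d-1) < d^2$ ordered pairs is indeed the right count, as you use; worth saying explicitly that this asymmetric normalization is what forces the ordered-pair count.
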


We now restate the result that is most relevant for our purposes, and holds for any design matrix $\Atrain$.
\begin{theorem}[Theorem $8$ from~\cite{cai2011orthogonal}.]\label{thm:OMPrecovery}
Suppose that $\mu < \frac{1}{2k - 1}$ and all the non-zero coefficients $\alpha^*_i, i \in \mathsf{supp}(\alphastar)$ satisfy
\begin{align}\label{eq:SNRcondition}
    |\alpha^*_i| \geq \frac{2 \sigma \sqrt{2(1 + \eta) \ln d}}{1 - (2k - 1)\mu(\Atrain)}
\end{align}

for some $\eta > 0$.
Then, the above version of OMP has the following guarantees:
\begin{enumerate}
\item Under stopping condition a), OMP will recover $S = \mathsf{supp}(\alphastar)$ with probability greater than or equal to $(1 - \frac{k}{d^{\eta} \sqrt{2 \ln d}})$.
This directly implies that
\begin{subequations}
\begin{align}
    \mathcal{E}_{\mathsf{test}}(\alphahat_{1,\mathsf{OMP}}) &= \vecnorm{\alphahat_1 - \alphastar}{2}^2 \leq \frac{k \sigma^2}{n} \label{eq:OMPerrorwithvariableselection_a} \\
    \mathcal{E}_{\mathsf{pred}}(\alphahat_{1,\mathsf{OMP}}) &= \vecnorm{\Atrain(\alphahat_1 - \alphastar)}{2}^2 \leq \frac{k \sigma^2}{n}
\end{align}
\end{subequations}
with probability greater than or equal to $\left(1 - \frac{k}{d^{\eta} \sqrt{2 \ln d}}\right)$.
\item Under stopping condition b), OMP will recover $S \supset \mathsf{supp}(\alphastar)$ with probability greater than or equal to $(1 - \frac{k_0}{d^{\eta} \sqrt{2 \ln d}})$.
This directly implies that
\begin{subequations}
\begin{align}
    \mathcal{E}_{\mathsf{test}}(\alphahat_{1,\mathsf{OMP}}) &= \vecnorm{\alphahat_1 - \alphastar}{2}^2 \leq \frac{k_0 \sigma^2}{n} \label{eq:OMPerrorwithvariableselection_b}\\
    \mathcal{E}_{\mathsf{pred}}(\alphahat_{1,\mathsf{OMP}}) &= \vecnorm{\Atrain(\alphahat_1 - \alphastar)}{2}^2 \leq \frac{k_0 \sigma^2}{n}
\end{align}
\end{subequations}
with probability greater than or equal to $\left(1 - \frac{k_0}{d^{\eta} \sqrt{2 \ln d}}\right)$.
\end{enumerate}

\end{theorem}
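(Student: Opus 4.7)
The plan is to establish correct support recovery by induction on the OMP iterations, and then deduce the mean-squared-error guarantees from the support recovery guarantee plus standard properties of ordinary least squares on a correctly-specified model. Write $S^* := \mathsf{supp}(\alphastar)$, and let $S_t$ denote the set of variables selected by OMP after $t$ iterations with projection $\mathbf{P}_t = \Atrain(S_t)\Atrain(S_t)^\dagger$. The inductive hypothesis I would carry is the following: at step $t$, we have $S_t \subseteq S^*$ with $|S_t| = t \leq k$. To extend it to step $t+1$, I would decompose the residual as
\begin{align*}
\mathbf{r}_t = (\mathbf{I} - \mathbf{P}_t)\Atrain(S^* \setminus S_t)\alphastar_{S^* \setminus S_t} + (\mathbf{I} - \mathbf{P}_t)\Wtrain,
\end{align*}
using the fact that $(\mathbf{I} - \mathbf{P}_t)\Atrain(S_t) = \mathbf{O}$. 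The OMP step then selects the column $j$ maximizing $|\inprod{\avec_j}{\mathbf{r}_t}|$, so I must show the maximizer lies in $S^* \setminus S_t$.

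The key estimates are: (i) a lower bound on $\max_{j \in S^* \setminus S_t} |\inprod{\avec_j}{\mathbf{r}_t}|$ and (ii) an upper bound on $\max_{j \notin S^*} |\inprod{\avec_j}{\mathbf{r}_t}|$. For the signal part of the residual, I would exploit pairwise incoherence: writing $\avec_j^\top \Atrain(S^* \setminus S_t)\alphastar_{S^* \setminus S_t}$ as a diagonal-plus-off-diagonal form in the Gram matrix $\Atrain(S^*)^\top \Atrain(S^*)$, the Geršgorin-style bound combined with $\mu < \frac{1}{2k-1}$ guarantees that this Gram matrix has eigenvalues bounded away from zero on the relevant $k$-dimensional subspace, and in particular that at least one entry of the signal correlation on $S^* \setminus S_t$ has absolute value at least roughly $(1 - (2k-1)\mu(\Atrain))|\alpha^*_{\min}|/2$, while the off-support correlations are controlled by $\mu(\Atrain)\sum_{i \in S^*\setminus S_t} |\alpha^*_i|$. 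For the noise part, $(\mathbf{I} - \mathbf{P}_t)\Wtrain$ is Gaussian and $\inprod{\avec_j}{(\mathbf{I} - \mathbf{P}_t)\Wtrain}$ has standard deviation at most $\sigma \vecnorm{\avec_j}{2} = \sigma$ (columns being unit-norm, after the normalization implicit in $\mu(\Atrain)$). A standard Gaussian maximal inequality together with a union bound over all $d$ features yields $\max_j |\inprod{\avec_j}{(\mathbf{I} - \mathbf{P}_t)\Wtrain}| \leq \sigma \sqrt{2(1+\eta)\ln d}$ with probability at least $1 - (d^{\eta}\sqrt{2\ln d})^{-1}$.

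The SNR condition \eqref{eq:SNRcondition} is then \emph{exactly} what is needed so that the signal-driven lower bound in (i) strictly exceeds the noise bound plus the incoherence-driven upper bound in (ii); this forces OMP to pick a coordinate in $S^*\setminus S_t$, closing the induction. I would then handle the two stopping rules separately. For Condition (a): once $S_t = S^*$, the residual becomes pure projected noise, so $\vecnorm{\Atrain^\top \mathbf{r}_t}{\infty} \leq \sigma\sqrt{2(1+\eta)\ln d}$ on the same high-probability event, triggering termination with $S = S^*$; and before that event, the SNR condition forces the inequality to fail, so OMP cannot stop prematurely. For Condition (b): the induction directly yields $S^* \subseteq S$ after any $k_0 \geq k$ steps. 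Taking a union bound over the at-most $k$ (resp.\ $k_0$) inductive steps and the single noise-maximum event gives the stated probabilities.

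Finally, for the MSE bounds \eqref{eq:OMPerrorwithvariableselection_a}--\eqref{eq:OMPerrorwithvariableselection_b}: on the high-probability event that $S \supseteq S^*$ with $|S| \leq k$ (or $k_0$), the OMP estimate is the OLS estimator on $\Atrain(S)$ applied to $\Ytrain = \Atrain(S)\alphastar_S + \Wtrain$. Standard analysis gives $\vecnorm{\alphahat_{\mathsf{OMP}} - \alphastar}{2}^2 = \vecnorm{\Atrain(S)^\dagger \Wtrain}{2}^2$, whose expectation is $\sigma^2 \trace((\Atrain(S)^\top \Atrain(S))^{-1})$, which under the incoherence condition is at most $\frac{|S|\sigma^2}{n \lambda_{\min}(n^{-1}\Atrain(S)^\top\Atrain(S))}$, yielding the $\Oh(k\sigma^2/n)$ and $\Oh(k_0\sigma^2/n)$ scalings; a similar calculation with $\Atrain(S)\Atrain(S)^\dagger$ being a rank-$|S|$ projection gives the prediction-error bound. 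The main obstacle in this plan is paragraph two: controlling the cross-correlations after projecting onto the already-selected subset while simultaneously keeping the noise union bound sharp — this is where the ``$(2k-1)\mu$'' factor in the SNR condition gets pinned down, and any looseness there would force an unnecessarily stronger signal-strength hypothesis.
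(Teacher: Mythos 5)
This statement is \emph{cited}, not proved, in the paper: it is Theorem~$8$ from Cai and Wang's analysis of OMP in the presence of Gaussian noise (the paper marks the theorem with that attribution and offers no proof). Your sketch is, however, a faithful rendering of the standard argument that Cai and Wang (and their predecessors) use for OMP support recovery under mutual incoherence: induct on iterations maintaining $S_t \subseteq S^*$; split $\mathbf{r}_t$ into a projected signal piece and a projected noise piece; bound off-support correlations via incoherence and on-support correlations from below via the same incoherence plus the signal-strength floor; bound the noise correlations uniformly by $\sigma\sqrt{2(1+\eta)\ln d}$ via a Gaussian maximal inequality; observe that the SNR condition \eqref{eq:SNRcondition} is calibrated exactly so the signal side wins, closing the induction; then for stopping rule (a) show the stopping test stays above threshold until all true variables are in, and for stopping rule (b) just run $k_0$ steps. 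So the architecture of your argument matches the source.

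One place where you are slightly more careful than the theorem statement itself: you note the MSE bound comes out as $\sigma^2 \trace\bigl((\Atrain(S)^\top\Atrain(S))^{-1}\bigr)$, which you then bound by $\frac{|S|\sigma^2}{n \lambda_{\min}(n^{-1}\Atrain(S)^\top\Atrain(S))}$. The paper's clean display $\frac{k\sigma^2}{n}$ sweeps that conditioning factor (and the expectation-versus-high-probability distinction) under the rug; under the standing hypotheses (iid Gaussian design, $n = \Omega(k^2\ln d)$) the restricted Gram matrix is well-conditioned with high probability, so the constant is absorbed, but your version is the honest intermediate step. Since the paper only invokes the theorem as a black box inside Corollary~\ref{cor:lassoomp}, your additional bookkeeping is consistent with how the result is ultimately used.
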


We observe that the obtained guarantees for OMP in the noisy setting are \textit{support recovery guarantees}, which are much stronger than directly bounding the $\ell_2$-norm of the error.
While the expressions do not have an implicit dependence on the dimension $d$, it requires the absolute values of the coefficients $|\alpha^*_i|$ to be bounded above by $\Omega(\sigma \sqrt{\ln d})$.
This can effectively be thought of as requiring the SNR to be high enough to ensure signal recovery - the more we overparameterized the number of features $d$, the more stringent this condition becomes.
Also observe that the versions of OMP with stopping conditions a) and b) use different kinds of side information: Stopping condition a) uses knowledge of the noise variance, while stopping condition b) only uses a guaranteed \textit{upper bound} on the true sparsity level $k$.
Stopping condition a) gives the bound in Equation~\eqref{eq:OMPerrorwithvariableselection_a} which is adaptive to unknown sparsity level.
Stopping condition b) has a weaker kind of side information, but gives the bound in Equation~\eqref{eq:OMPerrorwithvariableselection_b} which, while having a good convergence rate, does not adapt to the unknown sparsity level.

As before, a similar argument as in the proof of Corollary~\ref{cor:minimaxoptimal} will also give the statement of Corollary~\ref{cor:lassoomp} for the case of OMP with both stopping conditions.
We omit the argument for brevity.

\section{The interpolation threshold: Regularity vs randomness of training data}\label{sec:threshold}
 
We observe that Corollary~\ref{cor:crabpot} is meaningful primarily for the heavily overparameterized regime where $d >> n$ (more formally, if we vary $d$ as a function of $n$, we have $\lim_{n \to \infty} \frac{n}{d(n)} = 0$).
It does not mathematically explain the magnitude of the interpolation peak at $d \sim n$ that we observe in Figure~\ref{fig:poly_MSE}, which reflects the \textit{mean} of the minimum possible test MSE that results purely from fitting noise at the interpolation threshold.
It is well-known that the behavior of the minimum singular value of a random matrix with iid Gaussian entries is very different when the matrix is approximately square -- a ``hard-edge" phenomenon arises~\cite{edelman1988eigenvalues,szarek1991condition} and the minimum singular value actually exhibits heavy-tailed behavior~\cite{rudelson2008littlewood}.
It is unclear whether the same phenomena hold for lifted features on data such as Fourier, Legendre or even Vandermode features.
It thus makes sense to consider the statistics of the test MSE at the interpolation threshold more carefully.

\multifigureexterior{fig:MSE_meanandmedian}{Test MSE of the ideal interpolator (and other practical interpolation schemes) of Legendre and Fourier features for regularly spaced and randomly drawn training points, true fit is a $2$-degree polynomial fit.
}{
\subfig{0.5\textwidth}{Legendre features, regularly spaced training points.}{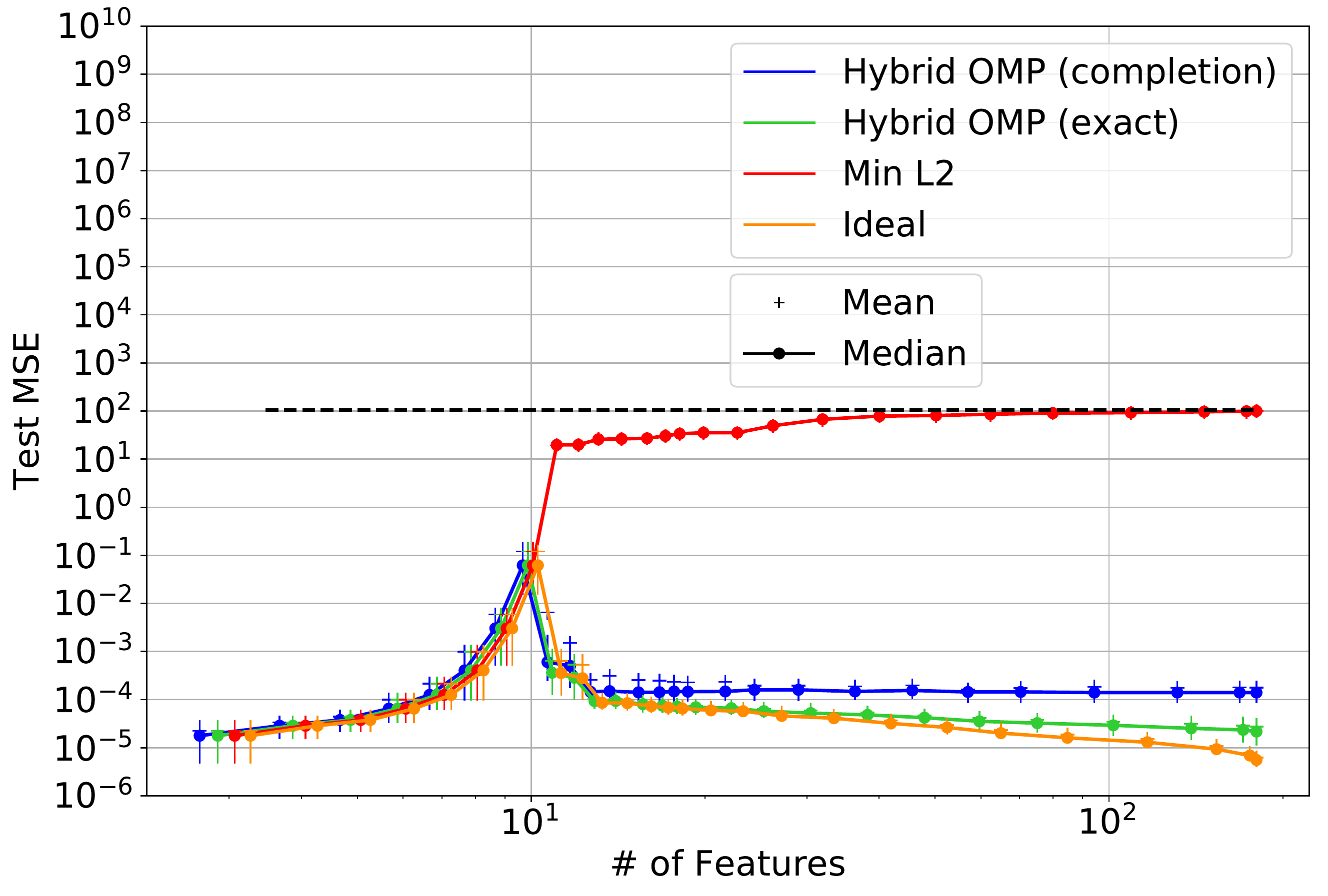}
\subfig{0.5\textwidth}{Legendre features, randomly drawn training points.}{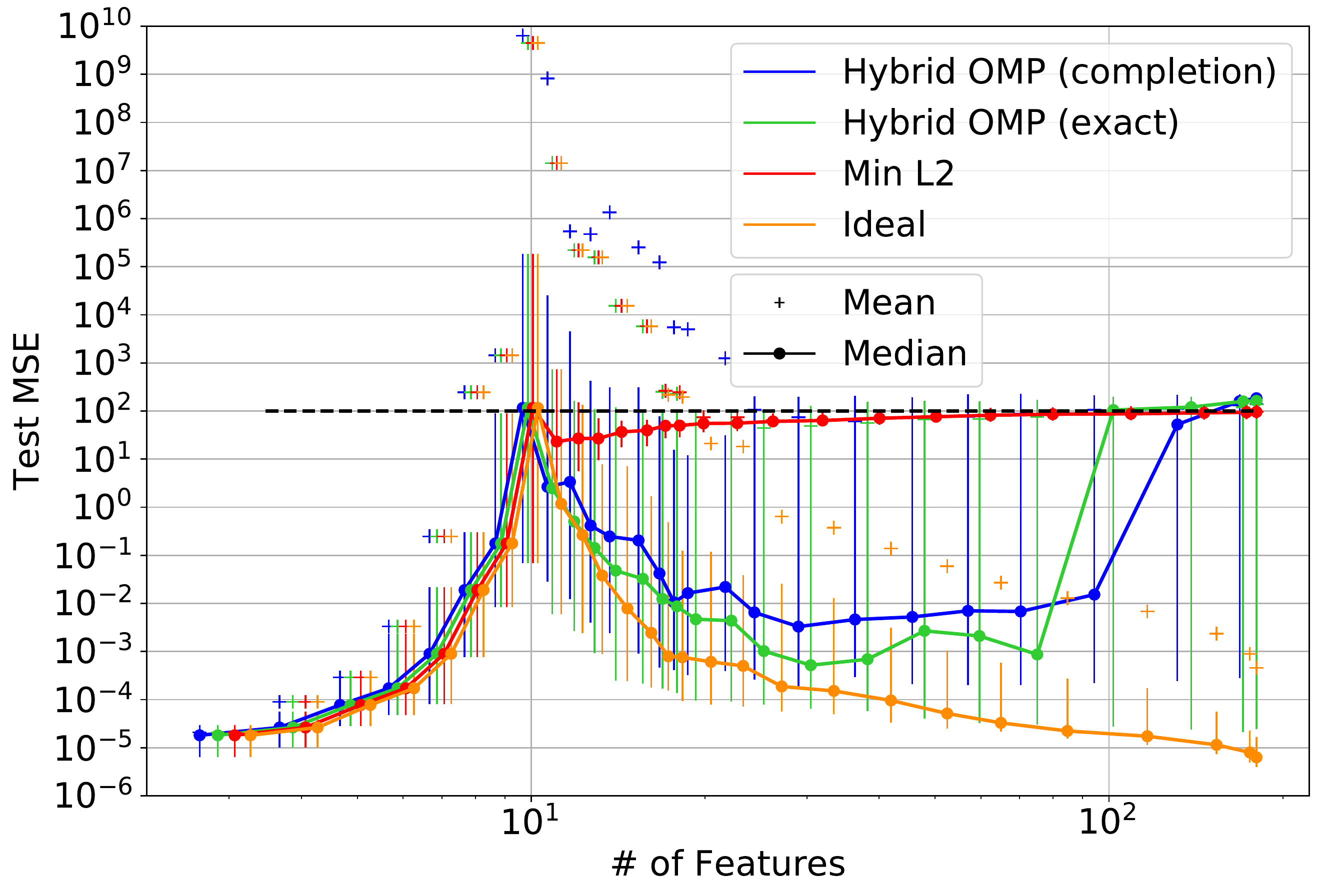}
\subfig{0.5\textwidth}{Fourier features, regularly spaced training points.}{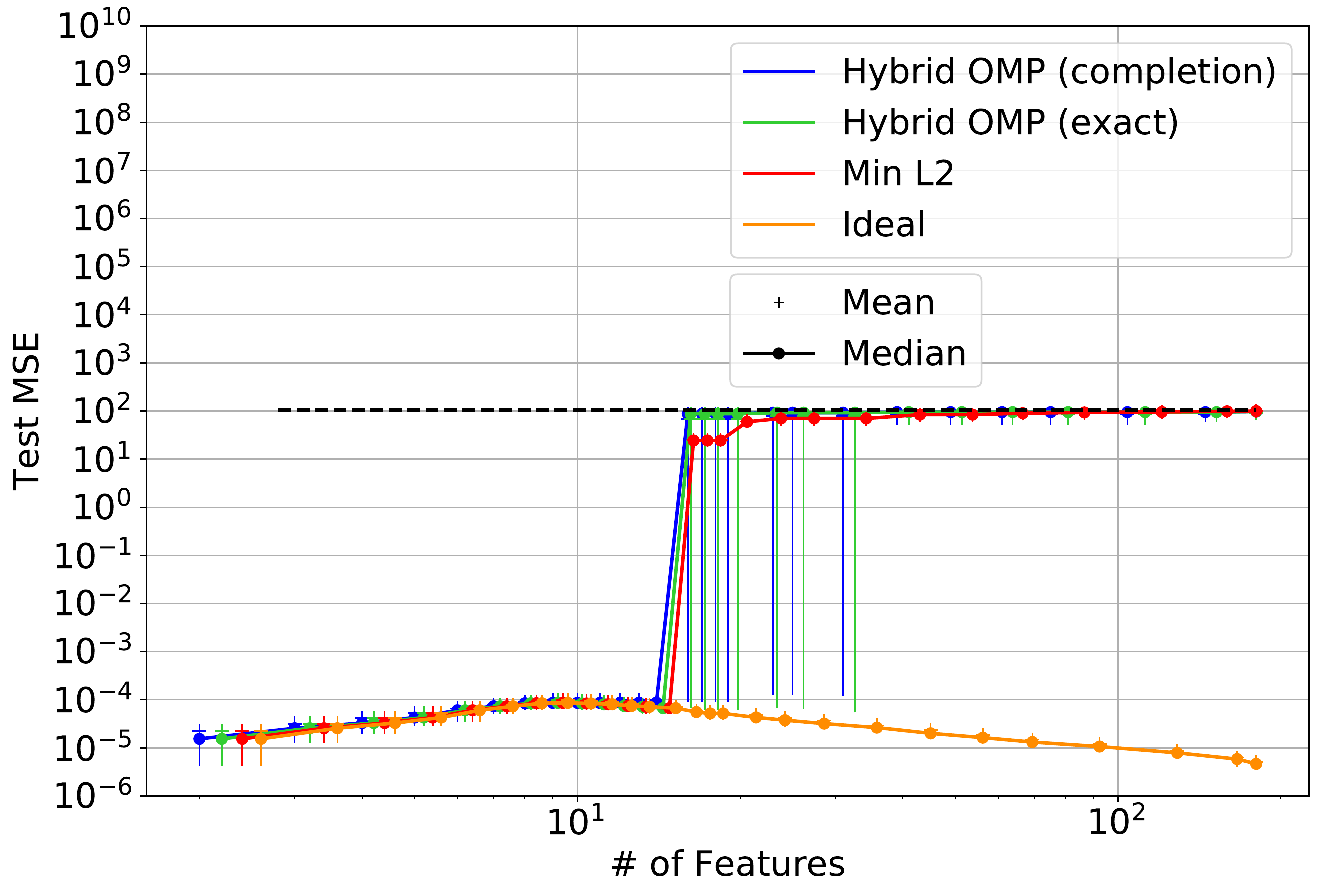}
\subfig{0.5\textwidth}{Fourier features, randomly drawn training points.}{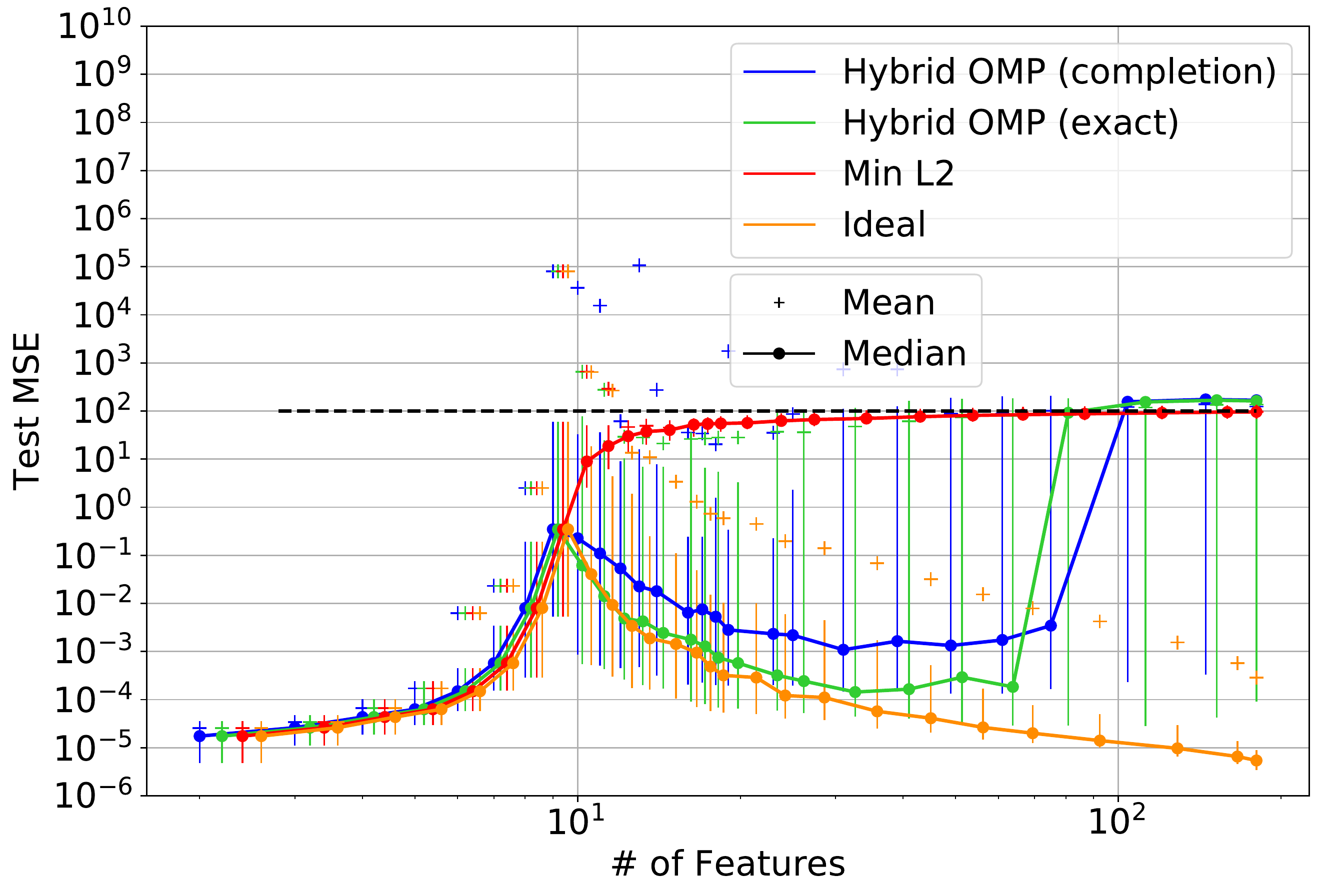}
} 

Figure~\ref{fig:MSE_meanandmedian} shows the generalization performance of various interpolating solutions on approximately ``whitened" Legendre and Fourier features on data.
In particular, we consider two generative assumptions for the training data $\{X_i\}_{i=1}^n$:
\begin{enumerate}
\item \textit{Regularly spaced data}: $X_i = -1 + \frac{2(i-1)}{(n-1)} \text{ for all } i \in [n]$.
\item \textit{Randomly drawn data}: $X_i \sim \text{Unif}[-1,1]$.
\end{enumerate}

and the generative model for the data to be $Y_i = f^*(X_i) + W_i$ where $f^*: [-1,1] \to \reals$ is a polynomial of degree $2$.
We make $50$ random draws of both the training data (in the case of randomly drawn training points) and noise, and plot both the \textit{mean} and the \textit{median} test MSE evaluated over $10^4$ test points, together with $85\%$ confidence intervals.
The results depicted in Figure~\ref{fig:MSE_meanandmedian} have several implications.
In particular, the distributional assumptions on the data make a significant difference: regularly spaced data approximately preserves the orthogonality of the design matrix $\Atrain \approx \Btrain $\footnote{recall that the Legendre and Fourier features are orthogonal in function space, which corresponds to an \textit{integral} over the interval $[-1,1]$; regular spacing of training points appropriately approximates this integral.}.
On the other hand, randomly drawn training points constitute a more complex \textit{random matrix} $\Atrain$, for which the minimum singular value is much lower and moreover exhibits heavy-tailed behavior\footnote{Intuitively, this is because of connections of the minimum singular value of the random matrix to the non-uniformity of random spacings~\cite{rudelson2008littlewood}; one can prove that that the smallest spacing between randomly drawn points is on the order of $1/n^2$ (as opposed to $1/n$) with high probability.
}.
This can be observed from the large size of the confidence intervals, as well as the much lower \textit{median} test MSE (which is still higher than the test MSE incurred from regularly spaced training data).
Randomly drawn training data is what we use in the conventional machine learning step, and this analysis tells us that we truly need to overparameterize to ensure harmless fitting of noise, unlike in logistic regression.

\section{Mathematical facts}\label{app:facts}

In this section, we collect miscellaneous mathematical facts that were useful for proving some of our results.
\begin{fact}\label{fact:binomialcoeffbound}
For positive numbers $(n,k)$, we have
\begin{align*}
{n \choose k} \leq \left(\frac{en}{k}\right)^k .
\end{align*}
\end{fact}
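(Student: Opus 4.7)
The plan is to prove the bound ${n \choose k} \leq \left(\frac{en}{k}\right)^k$ through two elementary steps that are both standard.

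First, I would upper bound the binomial coefficient by a pure power. Expanding the definition,
\begin{align*}
{n \choose k} = \frac{n!}{k!(n-k)!} = \frac{n(n-1)(n-2)\cdots(n-k+1)}{k!} \leq \frac{n^k}{k!},
\end{align*}
where the inequality uses $(n-j) \leq n$ for each $j = 0, 1, \ldots, k-1$.

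Second, I would lower bound $k!$ by $(k/e)^k$. The cleanest route is the Taylor-series identity
\begin{align*}
e^k = \sum_{i=0}^{\infty} \frac{k^i}{i!} \geq \frac{k^k}{k!},
\end{align*}
since every term in the sum is non-negative and the $i = k$ term alone already contributes $k^k/k!$. Rearranging gives $k! \geq (k/e)^k$, which is the ``Stirling-type'' lower bound we need (and avoids invoking full Stirling).

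Combining the two steps yields
\begin{align*}
{n \choose k} \leq \frac{n^k}{k!} \leq \frac{n^k}{(k/e)^k} = \left(\frac{en}{k}\right)^k,
\end{align*}
as claimed. There is no real obstacle here; both ingredients are textbook, and the only thing to be careful about is that the Taylor-series argument for $k! \geq (k/e)^k$ requires $k$ to be a non-negative integer, which is exactly the setting of the binomial coefficient.
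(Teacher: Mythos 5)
Your proof is correct and follows exactly the same two-step route as the paper: bound $\binom{n}{k} \leq n^k/k!$ and then use the Taylor-series estimate $e^k \geq k^k/k!$ to get $k! \geq (k/e)^k$. Nothing to add.
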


\begin{proof}
Observe that 
\begin{align*}
{n \choose k} = \frac{n!}{k!(n-k)!} = \frac{n(n-1)\ldots(n-k+1)}{k!} \leq \frac{n^k}{k!} .
\end{align*}

Next, we have $e^k = \sum_{i = 0}^{\infty} \frac{k^i}{i!} \geq \frac{k^k}{k!}$.
Rearranging this gives us $k! \geq \frac{k^k}{e^k}$, and substituting it above gives us 
\begin{align*}
{n \choose k} \leq \frac{n^k \cdot e^k}{k^k} = \left(\frac{en}{k}\right)^k ,
\end{align*}
completing the proof of this fact.
\end{proof}

\section{Calculations for the regularly spaced Fourier case}\label{app:fourier}

\newcommand{\truek}{\ensuremath{k^*}}

\newcommand{\vecbestalias}{\mathbf{B}}
\newcommand{\survival}{\mathsf{SU}(
\truek)}
\newcommand{\contamination}{C(\truek)}
\newcommand{\vecalpha}{\ensuremath{\boldsymbol{\alpha}}}
\newcommand{\vecx}{\ensuremath{\mathbf{x}}}
\newcommand{\vecX}{\ensuremath{\mathbf{X}}}
\newcommand{\vecy}{\ensuremath{\mathbf{y}}}
\newcommand{\xtrain}{\ensuremath{\vecx_\mathsf{train}}}

\newcommand{\weightmatrix}{\boldsymbol{\Gamma}}
\newcommand{\vecestimatedalpha}{\boldsymbol{\hat{\alpha}}}
\newcommand{\xtrainj}{\ensuremath{x_\mathsf{train, j}}}

\newcommand{\vecytest}{\ensuremath{\mathbf{Y}}}
\newcommand{\vecestimatedytest}{\ensuremath{\mathbf{\hat{Y}}}}
\newcommand{\E}{\mathbb{E}}

\newcommand{\vecxtest}{\ensuremath{\vecX}}
\newcommand{\xtest}{\ensuremath{X}}
\newcommand{\vecweights}{\ensuremath{\mathbf{w}}}
\newcommand{\vectruncweights}{\ensuremath{\mathbf{\tilde{w}}}}
\newcommand{\truealpha}{\ensuremath{\alpha^{*}}}
\newcommand{\vectruealpha}{\ensuremath{\boldsymbol{\alpha^{*}}}}

\newcommand{\weight}{\ensuremath{w}}
\newcommand{\truncweight}{\ensuremath{\tilde{w}}}
\newcommand{\trueweightedalpha}{\ensuremath{\beta^{*}}}
\newcommand{\vecdummyweightedalpha}{\ensuremath{\boldsymbol{\beta}}}
\newcommand{\truncdummyweightedalpha}{\ensuremath{\xi}}
\newcommand{\vectruncdummyweightedalpha}{\ensuremath{\boldsymbol{\xi}}}
\newcommand{\vectruncestimatedweightedalpha}{\ensuremath{\hat{\boldsymbol{\xi}}}}
\newcommand{\dummyweightedalpha}{\ensuremath{\beta}}

\newcommand{\vecestimatedweightedalpha}{\ensuremath{\hat{\boldsymbol{\beta}}}}
\newcommand{\estimatedweightedalpha}{\ensuremath{\hat{{\beta}}}}
\newcommand{\estimatedalpha}{\ensuremath{\hat{{\alpha}}}}
\newcommand{\f}[1]{\ensuremath{f_{#1}(\vecx_{\mathsf{train}})}}
\newcommand{\F}[1]{\ensuremath{f_{#1}(\vecxtest)}}
\newcommand{\real}[1]{\mathbb{R}^{#1}}
\newcommand{\ytrue}{\ensuremath{\vecy_{\mathsf{train}}}}
\newcommand{\ytruej}{\ensuremath{y_{\mathsf{train,j}}}}
\newcommand{\numaliases}{\ensuremath{M}}
\newcommand{\aliasset}{\ensuremath{S(\truek)}}
\newcommand{\aliassetk}{\ensuremath{S(\truek)}}

\newcommand{\erf}[1]{\ensuremath{\mathsf{erf}\left(#1\right)}}

\newcommand{\comp}[1]{\mathbb{C}^{#1}}

\newcommand{\intset}{\ensuremath{\{\{\truek\} \cup \aliasset \}}}
 Consider observation $\ytrue \in \comp n$  and $d$  Fourier features, $f_k(\xtrainj) = e^{i2\pi k \xtrainj}$ for $k = \{0, 1, \dots, d-1\}.$ Here $\xtrain$ contains regularly spaced samples with $\xtrainj = \frac{j}{n}$ for $j \in \{0, 1, \dots, n-1\}.$  For ease of exposition we assume $d = (\numaliases+1) n$ for positive integer $\numaliases$.
 We wish to understand how the minimum weighted $\ell_2$ norm interpolating solution behaves in this setting.  The first $n$ features, form an orthogonal basis for $\comp n$ and thus it suffices to understand the case where $\ytrue$ is each basis vector separately. 
Let
\begin{align}
  \ytrue &= \f{\truek},   \label{eq:truey} \\
 \text{i.e} \: \: \ytruej &= e^{i\pi \truek \xtrainj}, j \in [n]. \nonumber
\end{align}
for some $\truek \in \{0, 1, \dots, n-1\}$.Without loss of generality we consider $\truek$ in the range $[0,n-1]$ since subsequent blocks of $n$ features will be aliases of these features.

Note that we can write,
\begin{align*}
    \ytrue = \sum_{j=0}^{d-1} \truealpha_j \f j, 
\end{align*}
where $\vectruealpha = \mathbf{e_{\truek}} \in \real n$ and $\mathbf{e}_j$ denotes the $j^{th}$ standard basis vector.
For any solution $\vecalpha \in \comp n$, the interpolating constraint is,
\begin{align*}
     \ytrue = \sum_{j=0}^{d-1} \alpha_j \f j.
\end{align*}
If we scale feature $\f j$ by real weight $\weight_j$, then the interpolating constraint becomes,
\begin{align*}
     \ytrue = \sum_{j=0}^{d-1} \dummyweightedalpha_j \weight_j \f j,
\end{align*}
with $\dummyweightedalpha_j = \frac{\alpha_j}{w_j}$.

We are interested in the minimum weighted $\ell_2$ norm solution subject to the interpolating constraint given by,

\begin{align}
    \vecestimatedalpha &= \arg \min_{\vecalpha \in \comp d} \norm{ \weightmatrix^{-1} \vecalpha}_2 \label{prob:weightedls}\\
     \subt \: \: &  \ytrue = \sum_{j=0}^{d-1} \alpha_j \f j. \nonumber
\end{align}
where $\weightmatrix = \text{diag}(\weight_0, \weight_1, \dots, \weight_{d-1})$.
Note that this is equivalent to the minimum $\ell_2$-minimizing coefficients corresponding to the weighted featurues, as defined in~\cite{bartlett2019benign}:
\begin{align}
    \vecestimatedweightedalpha &= \arg\min_{\vecdummyweightedalpha} \norm{\vecdummyweightedalpha}_2 \label{prob:weightedfeatures}\\
    \subt \: \: &  \ytrue = \sum\limits_{j = 0}^{d-1} \dummyweightedalpha_j \weight_j \f j. \nonumber
\end{align}
We will solve the problem in \eqref{prob:weightedfeatures} next.
First, we list some properties of the regularly spaced Fourier features. Denote by $\aliasset$, the set of indices corresponding to features that are exact aliases of $\f \truek$. Then,
\begin{align}
    \aliasset = \{  \truek+n, \truek + 2n , \dots, \truek + \numaliases n\}.\label{eq:aliasset}
\end{align}  Note that $ \numaliases = \abs{\aliasset} = \frac{d}{n} - 1$. We have,
\begin{align}
    \f j &= \f \truek , \: j \in S \label{eq:proprsf1}\\
    \inprod{\f j }{\f\truek} &= 0,  \:  j \notin \intset. \label{eq:proprsf2}
\end{align}

Using \eqref{eq:truey}, \eqref{eq:proprsf1} and \eqref{eq:proprsf2} we can rewrite the optimization problem in \eqref{prob:weightedfeatures} as,
\begin{align*}
    \vecestimatedweightedalpha &= \arg\min_{\vecdummyweightedalpha} \norm{\vecdummyweightedalpha}_2 \\
    \subt \: \: &  \dummyweightedalpha_{\truek} \weight_{\truek} + \sum\limits_{j \in \aliasset} \dummyweightedalpha_j \weight_j  = 1.
\end{align*}

Clearly to minimize the objective we must have $\estimatedweightedalpha_j = 0$ for $j \notin \intset$ and thus it suffices to consider the problem restricted to the indices in  $\intset$. By mapping these indices to the set $\{0, 1, \dots, \numaliases \}$  and denoting the weight vector restricted to this set as $\vectruncweights$  we  write an equivalent optimization problem,
\begin{align*}
    \vectruncestimatedweightedalpha &= \arg\min_{\vectruncdummyweightedalpha} \norm{\vectruncdummyweightedalpha}_2 \\
    \subt \: \: &   \sum\limits_{j=0}^{\numaliases} \truncdummyweightedalpha_j \truncweight_j  = 1.
\end{align*}

Note that $\truncweight_j = \weight_{\truek + jn}$ for $j = 0, 1, \dots, M$.\\
We find an optimal solution to this problem by  using the Cauchy Schwarz inequality which states,
\begin{align*}
    \norm{\vectruncestimatedweightedalpha}_2 \norm{\vectruncweights} \geq \abs{\inprod{\vectruncestimatedweightedalpha}{\vectruncweights}},
\end{align*}
where equality occurs if and only if $\vectruncestimatedweightedalpha = c \vectruncweights$ for some $c \in \comp{}$. Using the fact that $\weight_j \in \real{}$ and solving for $c$ using the interpolating constraint we get,
\begin{align*}
    \vectruncestimatedweightedalpha = \frac{\vectruncweights}{\norm{\vectruncweights}_2^2}.
\end{align*}
Mapping this back to original indices we get,
\begin{align*}
    \estimatedweightedalpha_{j} &=
    \begin{cases}
			 \frac{\weight_{j}}{V}, & j \in \{ \{\truek\} \cup \aliasset \}\\
            0, & \text{otherwise}.
		 \end{cases}
\end{align*}
and
\begin{align}
    \estimatedalpha_{j} &=
    \begin{cases}
			 \frac{\weight^2_{j}}{V}, & j \in \{ \{\truek\} \cup \aliasset \}\\
            0, & \text{otherwise},    
		 \end{cases} \label{eq:estalpha}
\end{align}
where, $$V = \sum\limits_{j \in \intset} \weight_j^2.$$
Next we consider the effect on a test point $\vecxtest \in \real{}$ with i.i.d. entries $\vecxtest \sim U[0, 1]$,  when the ground truth observation is $\vecytest = \F \truek$. On this point we predict,
\begin{align*}
    \vecestimatedytest = \sum_{j = 0}^{d-1} \estimatedalpha_j \F j.
\end{align*}
We want to understand how different $\vecestimatedytest$ is from $\vecytest$.
Using \eqref{eq:estalpha} we have,
\begin{align*}
    \vecestimatedytest &= \estimatedalpha_{\truek} \F \truek + \sum_{j \in S} \estimatedalpha_j \F j \\
    &= \estimatedalpha_{\truek} \vecytest + \sum_{j \in S} \estimatedalpha_j \F j.
\end{align*}
The prediction $\vecestimatedytest$ consists of two components. The first component is the true signal attenuated by a factor $\estimatedalpha_{\truek}$ due to the effect of signal bleed.  The signal bleeds to features that are orthogonal to the true signal and this leads to the  second component, a contamination term that we denote by,
\begin{align*}
    \vecbestalias = \sum_{j \in \aliasset} \estimatedalpha_j \F j.
\end{align*}
Let $\survival$ denote the fraction of the true coefficient that survives post signal bleed. Then,
\begin{align}
    \survival &= \abs{\frac{\estimatedalpha_{\truek}}{\truealpha_{\truek}}} 
    = \frac{\weight^2_{\truek}}{\sum\limits_{j \in \intset} \weight^2_j}. \label{eq:survival}
\end{align}
Let $\contamination$ denote the standard deviation of the contamination given by,
\begin{align*}
    \contamination &= \sqrt{\mathbb E[\abs{\vecbestalias)}^2]}.
\end{align*}
Using the property of Fourier features when $\vecX$ is spaced uniformly in $[0,1]$ namely,
\begin{align*}
    \mathbb{E}[\inprod{\F i}{\F j}] &=\begin{cases}
    0, & i \neq j\\
    1, & i = j.
    \end{cases}
\end{align*}
to get,
\begin{align*}
    \E(\abs{\vecbestalias}^2) = \sum\limits_{j \in \aliasset} \abs{\estimatedalpha_j}^2 \E[\abs{\F j}^2]
    = \sum\limits_{j \in \aliasset} \abs{\estimatedalpha_j}^2.
\end{align*}
Using this we have,
\begin{align}
    \contamination &= \sqrt{\sum\limits_{j \in \aliasset} \abs{\estimatedalpha_j}^2} 
    = \frac{\sqrt{\sum\limits_{j \in \aliasset} \weight_j^2}}{\sum\limits_{j \in \intset} \weight_j^2}. \label{eq:contamination}
\end{align}

Next we consider examples of weighting schemes for a given $n,d$ pair with large enough $\frac{d}{n}$ when the true signal is at $\truek$. The set of indices containing aliases of the true signal is denoted as $\aliassetk$ as in \eqref{eq:aliasset}.

\begin{example}
    \item Uniform weights, $\weight_j = 1$.
    \begin{align*}
        \estimatedalpha_j &= \begin{cases}\frac{1}{1 + \frac{d}{n} - 1} = \frac{n}{d}, & j \in \intset\\
        0, & \text{otherwise}.
        \end{cases}\\
        \survival &= \frac{n}{d},\\
        \contamination &= \frac{\sqrt{\frac{d}{n}}}{1 + \frac{d}{n} - 1} = \sqrt{\frac{n}{d}}.
    \end{align*}
    \end{example}
    
    \begin{example}
        
  Spiked weights on low frequency features: This selects a fraction of energy to put on the favored set of $s < n$ features. Namely. for $\gamma \in [0,1]$ and $s < n$.  $$\weight_j=  \begin{cases}
			 \sqrt{\frac{\gamma d}{s}}, & 0 \leq j < s\\
            \sqrt{\frac{(1 - \gamma)d}{d-s}}, & \text{otherwise}.
		 \end{cases}$$
	For $0 \leq \truek < s$, 
	\begin{align*}
	    \estimatedalpha_j &=   \begin{cases} \frac{\frac{\gamma d}{s}}{\frac{\gamma d}{s} + (\frac{d}{n}-1).\frac{(1- \gamma)d}{d-s}} \approx \frac{1}{1 + \frac{(1- \gamma)}{n\gamma(\frac{1}{s}-\frac{1}{d})}} \approx \frac{1}{1 + \frac{s}{n}\left(\frac{1}{\gamma} - 1\right)}, & j = \truek \\
	    \frac{\frac{(1- \gamma)d}{d-s}}{\frac{\gamma d}{s} +\left(\frac{d}{n} -1 \right)\frac{(1- \gamma)d}{d-s}} = \frac{1}{\frac{\gamma(d-s)}{s(1-\gamma)} + \frac{d}{n} -1} \approx \frac{1}{\left(\frac{\gamma}{1 - \gamma}\right)\frac{d}{s} + \frac{d}{n}}, & j \in \aliassetk\\
	    0, & \text{otherwise}.
	    \end{cases}\\
	    \survival & \approx   \frac{1}{1 + \frac{s}{n}\left(\frac{1}{\gamma} - 1 \right) }\\
	    \contamination & \approx  \frac{\sqrt{\frac{d}{n}}}{\left(\frac{\gamma}{1 - \gamma}\right)\frac{d}{s} + \frac{d}{n}} = \sqrt{\frac{n}{d}}. \frac{1}{\left( \frac{\gamma}{1 - \gamma}\right)\frac{n}{s} +  1} = \frac{s}{\sqrt{nd}}. \frac{1}{\left(\frac{\gamma}{1 - \gamma}\right) + \frac{s}{n}} .
	\end{align*}
    For $s \leq \truek < n$,
    \begin{align*}
        \estimatedalpha_j &= \begin{cases}\frac{1}{1 + \frac{d}{n} - 1} = \frac{n}{d}, & j \in \intset\\
        0, & \text{otherwise}.
        \end{cases}\\
        \survival &= \frac{n}{d},\\
        \contamination &= \frac{\sqrt{\frac{d}{n}}}{1 + \frac{d}{n} - 1} = \sqrt{\frac{n}{d}}.
    \end{align*}
   \end{example}

\end{document}